\documentclass[11pt, oneside, a4paper]{article}  %
\usepackage{graphicx}
\usepackage{placeins}	%
\usepackage{amsthm}
\usepackage{amsfonts}
\usepackage{amscd}
\usepackage{amsbsy,amsxtra}
\usepackage{mathtools}
\usepackage{amssymb}
\usepackage{amsmath}
\usepackage{stmaryrd}   %
\usepackage{pifont}  %
\usepackage{dsfont}
\usepackage{bm}
\usepackage{accents}
\usepackage{enumerate}
\usepackage[utf8]{inputenc} 
\usepackage{physics}
\usepackage{caption}
\usepackage[raggedright]{titlesec}
\usepackage{xcolor}
\usepackage{array}
\newcolumntype{C}[1]{>{\centering\arraybackslash}m{#1}} %

\usepackage{multicol}
\usepackage{algorithm}
\usepackage{algpseudocode}

\usepackage{tikz}
\usetikzlibrary{patterns, patterns.meta, angles, quotes, calc, bbox, shapes.misc}
\usetikzlibrary{decorations.markings, decorations.pathmorphing}
\usetikzlibrary{intersections}
\usetikzlibrary{backgrounds}

\usepackage{pgfplots}
\usepgfplotslibrary{fillbetween}

\usetikzlibrary{external}
\usepgfplotslibrary{external}
\tikzexternalize[prefix=tikzZeug/generatedFigures/]

\colorlet{Pcolor}{black!40!white}
\colorlet{PcolorLight}{black!20!white}
\tikzstyle{fill_P_side}=[fill=Pcolor, draw=none]
\tikzstyle{fill_P_side_light}=[fill=PcolorLight, draw=none]
\definecolor{greenDrawIo}{HTML}{82B366}
\definecolor{redDrawIo}{HTML}{B85450}
\definecolor{blueDrawIo}{HTML}{6C8EBF}
\definecolor{ecs100}{RGB}{86,170,28}

\usetikzlibrary{shadings}
\pgfdeclarehorizontalshading{shadePside}  %
  {100bp}{color(0bp)=(red); color(25bp)=(orange); color(50bp)=(yellow); color(75bp)=(green); color(100bp)=(blue)}

\usepackage[
	english=british
]{csquotes}
\usepackage[
    backend=biber,
    sortlocale=auto,
    natbib=true,	%
    url=false,
    maxbibnames=9,
    maxcitenames=2,
    uniquelist=false,	%
    style=authoryear 	%
]{biblatex}

\addbibresource{ref.bib}

\usepackage{thmtools}   %
\usepackage{hyperref}
\hypersetup{hypertexnames=false} %
\usepackage{cleveref}

\def\r#1{{\color{redDrawIo}#1}}

\hyphenation{poly-gon}
\hyphenation{poly-gonal}

\declaretheorem[numberwithin=section,
	name=Theorem,
	]{theorem}
\declaretheorem[numberlike=theorem,
	name=Lemma,
	]{lemma}

\declaretheorem[numberlike=theorem,
	name=Corollary,
	]{corollary}
	
\declaretheorem[numberlike=theorem,
	name=Definition,
	style=definition,
	]{definition}
	
\declaretheorem[numberlike=theorem,
	name=Remark,
	style=remark,
	numbered=no,
	]{remark}

\newcommand{\NN}{\ensuremath{\mathds{N}}}  %
\newcommand{\RR}{\ensuremath{\mathds{R}}}  %
\newcommand{\cP}{\ensuremath{\mathcal{P}}} %
\newcommand{\ones}{\ensuremath{\mathds{1}}}
\newcommand{\I}[1]{\ones_{#1}}

\newcommand{\Eb}{E_b} %
\newcommand*\blank{{\mkern 2mu\cdot\mkern 2mu}}
\newcommand{\cv}[1]{\ensuremath{\begin{pmatrix}#1\end{pmatrix}}}    %
\newcommand{\cvb}[1]{\ensuremath{\begin{bmatrix}#1\end{bmatrix}}}   %

\DeclareMathOperator*{\aff}{aff}

\DeclareMathOperator*{\interior}{int}
\DeclareMathOperator*{\exterior}{out}

\DeclareMathOperator*{\Span}{span}

\DeclareMathOperator{\relu}{ReLU} %

\DeclareMathOperator{\CPA}{CPA}   %
\DeclareMathOperator{\CPL}{CPL}   %
\newcommand{\cl}[1]{\overline{#1}}  %

\DeclarePairedDelimiter{\set}{\lbrace}{\rbrace}
\DeclarePairedDelimiter\ceil{\lceil}{\rceil}
\DeclarePairedDelimiter\floor{\lfloor}{\rfloor}

\makeatletter
\newcommand{\mytag}[2]{%
  \text{#1}%
  \@bsphack
  \begingroup
    \@onelevel@sanitize\@currentlabelname
    \edef\@currentlabelname{%
      \expandafter\strip@period\@currentlabelname\relax.\relax\@@@%
    }%
    \protected@write\@auxout{}{%
      \string\newlabel{#2}{%
        {#1}%
        {\thepage}%
        {\@currentlabelname}%
        {\@currentHref}{}%
      }%
    }%
  \endgroup
  \@esphack
}
\makeatother

\newcommand{\titleText}{Linear-Size Neural Network Representation of Piecewise Affine Functions in $\RR^2$}

\title{\titleText}

\author{ \href{https://orcid.org/0009-0001-9695-3812}{\includegraphics[scale=0.06]{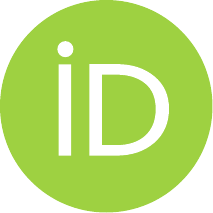}\hspace{1mm}Leo Zanotti}%
		\\
	Institute of Optimization and Operations Research\\
	Ulm University, Germany\\
	\texttt{leo.zanotti@uni-ulm.de}
}

\hypersetup{
	pdftitle={\titleText},
	pdfauthor={Leo Zanotti},
}

\sloppy
\begin{document}
	\maketitle
	\begin{abstract}
	It is shown that any continuous piecewise affine (CPA) function $\RR^2\to\RR$ with $p$ pieces can be represented by a ReLU neural network with two hidden layers and $O(p)$ neurons. 
	Unlike prior work, which focused on convex pieces, this analysis considers CPA functions with connected but potentially non-convex pieces.
\end{abstract}

	\section{Introduction}
One way to assess a neural network's ability to adapt to complex data is to examine the functions it can compute.
This paper considers this question for neural networks with the Rectified Linear Unit ($\relu$) activation function, focusing on networks that map from $\RR^2$.

As compositions of the piecewise linear $\relu$ and affine functions, the input-output mappings of these networks are described by continuous piecewise affine (CPA) functions. 
That is, for every neural network, there exists a finite cover of the input space such that the restriction of the function to any set in the cover is affine.
In this work, these sets are called pieces and must have connected interiors, though they may be non-convex and contain holes.
The main contribution of this work is a linear bound on the size of a neural network architecture required to represent a CPA function in $\RR^2$ with a given number of pieces. 

\begin{theorem}%
	\label{thm:NN_representation_intro}
	Any continuous piecewise affine function $f:\RR^2\to\RR$ with $p$ pieces can be represented by a $\relu$ neural network with two hidden layers and $O(p)$ neurons.
\end{theorem}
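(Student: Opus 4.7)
My plan is to reduce the statement to the already-known case of CPA functions with convex pieces. The reduction is purely combinatorial: I refine the given planar subdivision into one with only triangular pieces, while arguing that the total piece count grows by at most a multiplicative constant, after which a cited construction for convex pieces closes the argument.

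First, I would bound the combinatorial size of the subdivision induced by $f$. Every edge of the subdivision lies on the zero set of the difference of the two adjacent affine pieces, and every vertex is an incidence point of at least three distinct affine pieces, so vertices have degree at least three. Combined with Euler's formula $V-E+F=2$ (suitably adjusted for the unbounded face and applied component-wise to the $1$-skeleton when pieces have holes), this yields $V,E=O(p)$.

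Next, I would triangulate each piece. A simple polygonal piece with $n_i$ vertices decomposes into $n_i-2$ triangles, and a polygonal piece with $n_i$ boundary vertices and $h_i$ holes decomposes into $n_i+2h_i-2$ triangles by the classical polygon-with-holes triangulation. Since $\sum_i n_i=O(E)=O(p)$ (each edge borders two pieces) and $\sum_i h_i=O(p)$ (each hole encloses at least one piece, and the nesting forest on pieces has $O(p)$ nodes), the refined subdivision has $O(p)$ triangular pieces. The function $f$ remains affine on each new triangle and continuous across inserted edges, so on the refinement it is a CPA function with only convex pieces, and $O(p)$ of them. The prior linear-size, two-hidden-layer construction for convex-piece CPA functions then yields the desired $\relu$ network.

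The main obstacle is the combinatorial bookkeeping: with non-convex pieces and holes, the $1$-skeleton need not be connected, pieces may nest arbitrarily deeply, and a careless application of Euler's formula can miss the loss of connectivity caused by isolated hole boundaries. Ensuring that the counts $V$, $E$, $\sum_i n_i$, and $\sum_i h_i$ all stay linear in $p$ simultaneously, across all such configurations, is the delicate part; once that is secured, the triangulation and the invocation of the convex-piece construction are routine.
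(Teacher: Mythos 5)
The central gap is the invocation of a ``prior linear-size, two-hidden-layer construction for convex-piece CPA functions'' in $\RR^2$: no such result exists. As the paper's related-work discussion makes explicit, the best previously known shallow bound (Koutschan) is $O(n^{d+1})$ per layer, hence $O(p^3)$ in $\RR^2$ even when pieces are convex, and the genuinely linear results cited (He2020FEM, HE2022Hierarchical) require shape-regular or uniform triangulations --- a property your refinement step does not and cannot guarantee, since arbitrarily thin pieces force arbitrarily thin triangles and blow up the constant in those constructions. The Tarela max-min form also does not close the gap: with two hidden layers one cannot evaluate a $\max$ or $\min$ of more than two affine inputs per neuron, so $\max_i\min_j$ with super-constant fan-in does not yield a depth-$3$ network by itself. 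In short, after triangulating you land exactly where the hard work begins: you still have to prove the $O(p)$ two-hidden-layer bound for a $p$-triangle CPA function. In the paper that requires the conic/vertex-edge decomposition (\autoref{lem:CPA_Decomp} via \autoref{lem:InsideOutside}), the reduction of each vertex-function to a sum of $\deg(v)-2$ three-piece functions (\autoref{lem:fv_reduction}), the nested-max form for three-piece $v$-functions (\autoref{lem:3piece_analytical}), and the stacking construction in \autoref{thm:NN_representation}. None of that is subsumed by triangulating first; the paper even states in \autoref{sec:Discussion} that starting from convex pieces would merely \emph{simplify some proofs}, not replace them.

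Your combinatorial bookkeeping is broadly in line with the paper. \autoref{lem:sparse_P_V_and_E} gives $|E|\leq 3p$, but only after suppressing degree-$2$ vertices, an implicit step you should surface since the paper's definition of pieces permits redundant vertices. Two further technical points you omit: (i) unbounded pieces cannot be triangulated in the classical sense; each unbounded end must be closed by inserting extra vertices and the residual unbounded region split into convex unbounded pieces (this is handled in \autoref{sec:Discussion}); your $n_i-2$ and $n_i+2h_i-2$ formulas only cover bounded polygons. (ii) Your bound $\sum_i h_i = O(p)$ via a ``nesting forest'' needs a short justification that each hole boundary is the outer boundary of some other piece, which follows from the covering property of $\cP(f)$ but is not automatic from a planar subdivision alone.
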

The constructed network is highly sparse, with the number of parameters also scaling as $O(p)$.
It seems unlikely that fewer than $O(p)$ parameters suffice to represent every CPA function with $p$ pieces.
This suggests that, for this task in two dimensions, networks with more than two hidden layers are not more powerful than those with two hidden layers.

To prove \autoref{thm:NN_representation_intro}, a representation of CPA functions in terms of maxima is introduced.
A univariate CPA function $f:\RR\to\RR$ defined over $p$ intervals can be expressed as
\begin{equation*}
	f(x)=\sum_{i=1}^{p-1}\sigma_i\max(a_ix+b_i, a_i'x+b'_i),
\end{equation*}
where $\sigma_i\in\set{-1,1}$ and $a_i,a_i',b_i,b_i'\in\RR$. 
\citet{mukherjee2017} observed that in two dimensions, maxima of only two affine functions are insufficient as summands.  
However, the following theorem shows that using three affine functions per summand suffices while keeping the number of summands linear in the number of pieces.
\begin{theorem}%
	Let $f:\RR^2\to\RR$ be a continuous piecewise affine function with $p$ pieces.
	Then, there exist affine functions $f^{(k)}_n$ and signs $\sigma_n^{(k)}\in\set{-1,1}$, such that
	\begin{equation*}
		f(x) = \sum_{n=1}^{9p} \sigma_{n}^{(1)}\max(f_{n}^{(1)},\sigma_{n}^{(2)}\max(f_{n}^{(2)},f_{n}^{(3)})).
	\end{equation*}
\end{theorem}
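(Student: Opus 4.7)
The plan is to build $f$ one edge at a time of its piece subdivision, contributing three summands of the template per edge, and to bound the total number of edges by $3p$ via Euler's formula. The $p$ pieces induce a planar subdivision of $\RR^2$; after collapsing degree-$2$ vertices into interior points of longer polygonal edges, every vertex has degree at least three, and the standard inequality $2e \geq 3v$ combined with $v - e + p = 2$ yields $e \leq 3p$. Holes in pieces only shift this count by a bounded additive constant, and they do not affect the asymptotic bound.

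For each edge segment $s$ shared by two pieces whose affine restrictions are $a$ and $a'$, the ``slope jump'' across $s$ is captured by $\delta = a' - a$, an affine function vanishing on the line through $s$. A basic correction $\sigma \max(0, \delta)$ (with a sign $\sigma$) realizes the correct jump across the entire half-plane but must be clipped at each endpoint of $s$ so that its effect does not bleed into other pieces further along the line. Each endpoint clipping is implemented by an extra summand that cuts against a transverse half-plane through that endpoint, i.e., a summand involving three affine functions. The three summands per edge---one ``raw jump'' and two ``endpoint clips''---each fit one of the four sign variants of
\[
\sigma^{(1)} \max\!\bigl(f^{(1)}, \sigma^{(2)} \max(f^{(2)}, f^{(3)})\bigr),
\]
so summing over all at most $3p$ edges (plus, if needed, an affine background absorbed into one summand) produces at most $9p$ summands of the required form.

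The main obstacle is the localization step. With only two affine functions per summand, the basic $\max(0,\delta)$ correctly encodes the slope jump but extends indefinitely along the line containing $s$; Mukherjee's observation is essentially the statement that such max-of-two summands cannot be made localized without introducing further artefacts. A third affine function supplies exactly the clipping half-plane needed to truncate the correction at an endpoint, at the cost of two extra summands per edge. The delicate combinatorial work lies in choosing these clipping half-planes consistently at every vertex of the subdivision, so that the endpoint-clip contributions from the incident edges cancel correctly and the total sum reproduces the local structure of $f$ at each vertex. Once this vertex-level cancellation is verified, summing the contributions over all edges reconstructs $f$ globally and proves the theorem.
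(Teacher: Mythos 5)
Your proposal takes a genuinely different route from the paper, but it has a gap at precisely the point where the difficulty of the theorem lies, and it is far from clear that the gap can be closed by the approach you sketch.

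The paper does \emph{not} attempt to localize the correction associated with each edge. Instead, it writes $f$ as a global linear combination of \emph{edge-functions} $f^e$ (simple maxima or minima of the two adjacent affine components, whose ridge runs along the full line $\aff(e)$) and \emph{vertex-functions} $f^v$ (the CPA function obtained by radially extending the local structure at $v$ to all of $\RR^2$), plus an affine remainder. The over- and under-counting that results from letting all these summands ``bleed'' globally is exactly cancelled, and the heart of the paper is a conic-decomposition identity (\autoref{lem:InsideOutside}) proving this cancellation for arbitrary, possibly non-convex polygons. The vertex-functions are then broken into $\deg(v)-2$ three-piece pieces (\autoref{lem:fv_reduction}) and each of these is shown to be a signed nested max of three affine functions (\autoref{lem:3piece_analytical}). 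The count $9p$ comes from $|E|\le 3p$ and $\sum_v(\deg(v)-2)\le 2|E|\le 6p$.

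Your proposal instead tries to stay purely edge-local: a raw hinge $\sigma\max(0,\delta)$ per edge together with two ``endpoint clips''. The problem is that clipping a hinge $\max(0,\delta)$ against a transverse affine function $g$ (say via $\max(0,\min(\delta,g))$) does not produce a function supported only near the segment $e$: it introduces a new fold along $\{g=0\}$ and another along $\{\delta=g\}$, both running off to infinity. These artefact ridges from the clips of the $\deg(v)$ edges meeting at a vertex $v$ must then cancel pairwise, and your proposal simply asserts that the transverse half-planes can be chosen so that this happens. That assertion is the theorem. There is no argument for why such a consistent choice exists; in fact the local structure at a vertex of degree $d\ge 4$ is a $d$-piece CPA function, which is precisely the object that cannot be reproduced by a bounded number of two-affine hinges---this is \citet{mukherjee2017}'s obstruction, and adding a third affine function only to truncate the hinge at both ends does not resolve it, because truncation itself creates new unbounded ridges. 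The paper circumvents this not by truncating but by adding the genuinely new object $f^v$ per vertex and proving a counting identity (\autoref{lem:InsideOutside}) that makes the global sum come out right; that is the missing ingredient in your proposal. Your count ($3$ summands per edge $\times$ $3p$ edges $=9p$) happens to match numerically, but there is no reason the number of summands needed to cancel the clipping artefacts is bounded by $3$ per edge, and no mechanism in your sketch ensures the cancellation at all.
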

To achieve this representation, a constructive proof for a conic decomposition of the pieces is provided (see \citet{Shephard1967AngleSums} for conic decompositions of convex polytopes).

\subsection{Related Work}
Various works estimate the complexity that a given neural network architecture may achieve.
For a $\relu$ network, given that it computes a CPA function, the number of pieces of the represented function is a natural measure for its complexity.
Here, the number of pieces is defined as the minimum required, as the set of pieces is not unique.

The first significant lower and upper bounds on the complexity of neural networks were established by \citet{Montufar2014} and \citet{Montufar2017Notes}, respectively, and were later slightly improved by \citet{Serra2018BoundingAndCounting}. 
A unifying view on deriving such bounds was presented by \citet{Hinz2019}. 
Both upper and lower bounds indicate that asymptotically deep networks with a fixed number of neurons can produce exponentially more pieces than shallow ones. 
Moreover, \citet{arora2018understanding} show an exponential tradeoff between width and depth for a whole continuum of 'hard' functions.
These findings align with practitioners' observations that deeper networks are more powerful.

However, the lower bounds on the maximum achievable complexity of an architecture are based on very specific constructions.
Therefore, there might be much more functions of the same complexity that can not be represented by architectures of comparable size.

In contrast, this work focuses on architectures capable of representing \emph{all} CPA functions of given complexity.
\citet{arora2018understanding} established that this is feasible, which lead to research into the architectural size required for such representations.
Key contributions include results by \citet{brandenburg2024decompositionPolyhedra, Chen2022neurCompl, Hanin2019, He2020FEM, Hertrich2023TowardsLowerBounds, Koutschan2023}.

More specifically, \citet{arora2018understanding} demonstrated that any CPA function $f:\RR^d\to\RR^m$ can be represented using $\ceil{\log(d+1)}$ hidden layers.
\citet{Hertrich2023TowardsLowerBounds} conjectured that this bound is sharp, which means that for $l\in\NN$ and $d\geq 2^l$, there exist CPA functions mapping from $\RR^d$ that cannot be represented using $l$ hidden layers.
While this is straightforward for $l=1$ \citep{mukherjee2017}, \citet{Hertrich2023TowardsLowerBounds} proved it for $l=2$ under assumptions on the functions represented by intermediate neurons, and \citet{haase2023lower} proved it for higher dimensions under an integer-weight assumption.

\autoref{thm:NN_representation_intro} falls within the 'shallow' regime described by the conjecture, which means that the depth is minimal and, in particular, independent of the complexity of the function.
The most general results of that type for neural network representations of CPA functions are given by \citet{Hertrich2023TowardsLowerBounds, Koutschan2023}.
In contrast, \citet{Hanin2019} introduced a narrow network architecture whose width depends only on $d$, but at the cost of significant depth.
\citet{Chen2022neurCompl} aimed to minimise the total number of neurons, making both the depth and width dependent on the complexity.
If the set of pieces satisfies a regularity condition, interpolation between a shallow and narrow configuration is possible \citep{brandenburg2024decompositionPolyhedra}. 

The bounds on the size of the representing architectures are often expressed in terms of the number of pieces $p$, with the additional requirement that these pieces be convex.
The reason is that this allows the direct application of the $\max$-$\min$ representation by \citet{Tarela1990MaxMin}.
If the pieces were allowed to be arbitrary sets, the number of pieces would match the number $n$ of unique affine functions needed to describe the CPA function.
\autoref{fig:pieces_vs_components} illustrates that $n$ and the number of convex pieces may not always serve as ideal complexity measures.
This work is the first to assume that the pieces are connected, without requiring them to be convex.

While previous results apply to arbitrary dimensions, in the specific case of $d=2$, none achieve a neuron count scaling linearly with the number of pieces.
The closest result to \autoref{thm:NN_representation_intro}, by \citet{Koutschan2023}, shows that a shallow network with width $O(n^{d+1})$ suffices to represent any CPA function in $\RR^d$, implying an $O(p^{d+1})$ bound also in terms of $p$. 
This paper improves this bound for $d=2$, reducing it from $O(p^3)$ to $O(p)$.

Representation properties of neural networks are crucial for quantitative function approximation results. 
A common approach is to use known approximation results for standard function classes and to emulate or approximate these classes efficiently with neural networks, thereby transferring their approximation rates (see, e.g., \citet{DeVore2021}).
In this context, \citet{He2020FEM} demonstrated that, for any fixed dimension, linear finite element spaces with convex-support nodal basis functions can be represented using $O(p)$ neurons. 
The constant in this representation depends on the shape-regularity of the triangulation. 
Similarly, \citet{HE2022Hierarchical} established an $O(p)$ bound for linear finite elements on uniform triangulations in $\RR^2$.
For univariate inputs, neural networks effectively serve as free-knot spline approximators \citep{Opschoor2020}. 
Thus, neural networks function as adaptive linear finite element approximations that are able to recover the mentioned classical counterparts with the same parameter count.

This work extends these findings by proving that a neural network with two inputs and $O(p)$ parameters can represent any CPA function with $p$ pieces, unrestricted by specific triangulations or meshes.

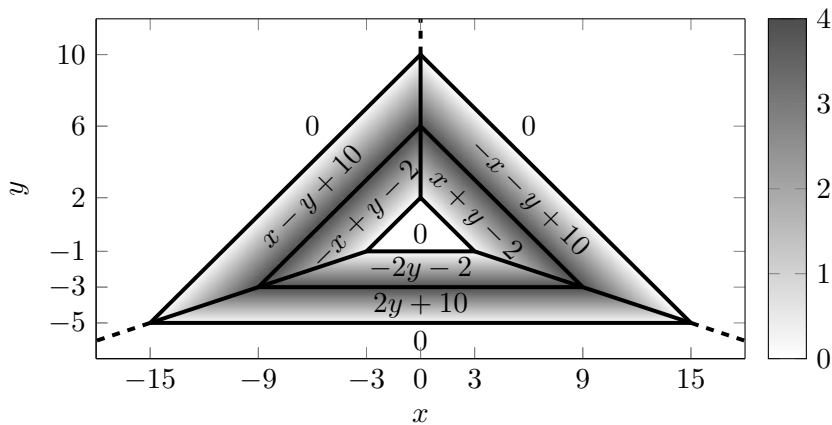
\begin{figure}[h]
	\centering
	\tikzsetnextfilename{pieces_vs_components}
\begin{tikzpicture}[very thick]
	\begin{axis}[
		view={0}{90},
		width=.8\textwidth,
		axis equal image,
		xmin=-18,xmax=18,ymin=-7,ymax=12,
		xtick={-15,-9,-3,0,3,9,15},
		ytick={-5,-3,-1,2,6,10},
		xlabel={$x$},
		ylabel={$y$},
		colormap={bw}{
        		gray(0cm)=(1);
       		gray(1cm)=(.3);
    		},
    	colorbar, %
		]

		\addplot3[
		surf, %
		shader=interp,	%
		domain=-18:18,
    		y domain=-7:12,
    		samples=70,
		] 
		{min( max(0,x+y-2,-x+y-2,-2*y-2), max(0,min(-x-y+10,x-y+10,2*y+10)) )};
		
		\coordinate (TI) at (axis cs: 0,2);		%
		\coordinate (TM) at (axis cs: 0,6);		%
		\coordinate (TO) at (axis cs: 0,10);		%
		\coordinate (TR) at (axis cs: 0,12);		%
		\coordinate (LI) at (axis cs: -3,-1);
		\coordinate (LM) at (axis cs: -9,-3);
		\coordinate (LO) at (axis cs: -15,-5);
		\coordinate (LR) at (axis cs: -18,-6);
		\coordinate (RI) at (axis cs: 3,-1);
		\coordinate (RM) at (axis cs: 9,-3);
		\coordinate (RO) at (axis cs: 15,-5);
		\coordinate (RR) at (axis cs: 18,-6);
		
		\draw[line width=1.5pt] (TI) -- (TO) ;
		\draw[line width=1.5pt] (LI) -- (LO) ;
		\draw[line width=1.5pt] (RI) -- (RO) ;
		\draw[line width=1.5pt, dashed] (TO) -- (TR) ;
		\draw[line width=1.5pt, dashed] (LO) -- (LR) ;
		\draw[line width=1.5pt, dashed] (RO) -- (RR) ;
		\draw[line width=1.5pt] (TI) -- (RI) -- (LI) -- cycle ;
		\draw[line width=1.5pt] (TM) -- (RM) -- (LM) -- cycle ;
		\draw[line width=1.5pt] (TO) -- (RO) -- (LO) -- cycle ;
	
		\node[] at (axis cs: 0,0) {$0$};
		\node[rotate=-45] 	at ($0.25*(TI)+0.25*(TM)+0.25*(RI)+0.25*(RM)$)	{$x+y-2$};    %
		\node[rotate=45] 	at ($0.25*(TI)+0.25*(TM)+0.25*(LI)+0.25*(LM)$) 	{$-x+y-2$};   %
		\node[] 			at ($0.25*(LI)+0.25*(LM)+0.25*(RI)+0.25*(RM)$)	{$-2y-2$};    %
		\node[rotate=-45] 	at ($0.25*(TM)+0.25*(TO)+0.25*(RM)+0.25*(RO)$)	{$-x-y+10$};  %
		\node[rotate=45] 	at ($0.25*(TM)+0.25*(TO)+0.25*(LM)+0.25*(LO)$) 	{$x-y+10$};   %
		\node[] 			at ($0.25*(LM)+0.25*(LO)+0.25*(RM)+0.25*(RO)$)	{$2y+10$};    %
		\node[] 			at (axis cs: 6,6)	{$0$};    %
		\node[] 			at (axis cs: 0,-6)	{$0$};    %
		\node[]				at (axis cs: -6,6)	{$0$};    %
		\end{axis}
\end{tikzpicture}                   
	\caption{A CPA function $f$ described locally by affine functions. 
		The labelled coordinates emphasise relevant positions.
		At least $p=8$ connected pieces are required to define $f$. 
		However, if restricted to convex pieces, at least $10$ are needed. 
		In contrast, $n=7$ suffice if the connectivity assumption is dropped. 
		Arguably, this function is less complex than a modified version with three convex outer regions (dashed lines) corresponding to different affine functions, but more complex than a simplified version where the inner triangle is removed by expanding adjacent regions. 
		The difference between the different counts can be made arbitrarily large by adding shifted copies of $f$.
		Asymptotic bounds on $p$ in terms of $n$ are given in \citet{zanotti2025pieces}.}
	\label{fig:pieces_vs_components}
\end{figure}

	\section{Definitions}\label{sec:Defs}
This section introduces the definitions and notation used throughout this work, with a primary focus on continuous piecewise affine functions.
Section \ref{sec:PolygonMotivation} provides the motivation for how I will define these functions in the subsequent sections \ref{sec:Polygons} and \ref{sec:CPA}.
Subsequently, the class of $\relu$ neural networks is introduced in section \ref{sec:NNs}.
For $n\in\NN$, I use the notation $[n]:=\set{1,\dots,n}$.
For a set $X\subset\RR^d$, $\interior{X}$ denotes the interior of $X$, and $\cl{X}$ denotes the closure of $X$.

\subsection{Motivation for a Non-Standard Definition of Polygons}\label{sec:PolygonMotivation}

Let us begin with the most general notion of a continuous piecewise affine function. Let $\cP$ be a partition of $\RR^2$, and let $f : \RR^2 \to \RR$ be continuous, such that for each $P \in \cP$, there exists an affine function $f_P$ with $f|_P = f_P$. For now, I call such a function continuous piecewise affine. However, this definition is too broad, as choosing $\cP$ as singletons would make every continuous function piecewise affine.

To avoid this and to use the number of sets in $\cP$ as a measure of complexity, impose the condition that $|\cP|$ is finite.

Additionally, we can assume $f_P \neq f_Q$ for distinct $P, Q \in \cP$ by merging sets where the same affine function applies. Then, the set of ambiguities $L:=\bigcup_{P\neq Q}\set{x:f_P(x)=f_Q(x)}$ consists of finitely many lines. Hence, the complement $U := \RR^2 \setminus L$ is a dense, open subset of $\RR^2$. By continuity of $f$, each $P^o := P \cap U$ is a union of connected components of $U$, with $U = \bigcup_{P \in \cP} P^o$. Thus, 
\begin{equation*}
	\bigcup_{P\in\cP}\cl{\interior{P}} \supset \bigcup_{P\in\cP}\cl{P^o} = \cl{\bigcup_{P\in\cP}P^o} = \cl{U}=\RR^2.
\end{equation*}
Continuity further ensures that $f|_{\cl{\interior{P}}} = f_P|_{\cl{\interior{P}}}$.

Consequently, by considering the closures of connected components of $\interior{P}$ as distinct sets, we can replace the partition $\cP$ with a finite cover of $\RR^2$, where each element is a regular closed set with connected interior. Moreover, the intersection of any two such sets is contained within a line.

This leads to the refined definition presented in the following sections.

\subsection{Polygons}\label{sec:Polygons}
Since I will directly work with the geometric structure of continuous piecewise affine functions, this structure will be defined very explicitly in a bottom-up manner.

Given $v,u\in\RR^2$ with $v\neq u$, I call $e:=\set{v+tu:\,t\in[0,1]}$ a \emph{line segment} connecting its \emph{vertices} $v$ and $v+u$. A line segment with vertices $a$ and $b$ will also be denoted by $\cl{ab}$. Similarly, I call $r:=\set{v+tu:\,t\geq 0}$ a \emph{ray} with initial vertex $v$, and $l:=\set{v+tu:\,t\in\RR}$ a \emph{line}. 
If $e\subset \RR^2$ is of one of these three types, its \emph{affine hull} is defined as the line containing $e$, i.e. if $v$ and $v+u$ are two distinct points on $e$, then $\aff(e):=\set{v+tu:\,t\in\RR}$ is its affine hull.

A \emph{polygonal arc} is just one line, or a subset of $\RR^2$ that is homeomorphic to a line, and which can be defined as the union of two rays and finitely many line segments, such that any two of the line segments and rays are disjoint or intersect at a common vertex. 
Similarly, I define a \emph{polygonal cycle} to be a subset of $\RR^2$ that is homeomorphic to the unit circle and which can be defined as the union of finitely many line segments, such that any two of the line segments are disjoint or intersect only at a vertex of both. 

The collection of line segments, rays, and lines, that were used in the definition of a polygonal arc or cycle $\gamma$, is called the \emph{edge} set of $\gamma$ and is denoted by $E(\gamma)$.
The set of \emph{vertices} $V(\gamma)$ of a polygonal arc or cycle $\gamma$ is defined as the union of the vertices of its edges.
These sets are not unique, since there are multiple ways to define the same curve.
Note that, due to the homeomorphisms, removing a point from $\gamma$ will locally disconnect the curve into exactly two parts.
Therefore, for every vertex $v\in V(\gamma)$, there are exactly two incident edges.

In this work, I adopt a more generalised notion of polygons than the standard definition. Specifically, a \emph{polygon} is defined as a closed subset $P\subseteq \RR^2$ with connected interior, and whose boundary $\partial P$ is the union of finitely many polygonal arcs and cycles, such that the intersection of any two of the arcs and cycles is either empty or a vertex of both curves. The corresponding arcs and cycles are called \emph{boundary components} of $P$.
Examples of polygons are shown in \autoref{fig:Polygon_examples}.
The set of line segments of a polygon $P$ is defined as the collection of the edges of all boundary components of $P$ that are line segments. This set will be denoted by $\Eb(P)$, where the subscript indicates that these are all the bounded edges of $P$. Similarly, the sets $E_r(P)$ and $E_l(P)$ are defined as the sets of rays and lines of $P$, i.e. the collections of rays respectively lines of the boundary components. Therefore, the set of \emph{edges} $E(P):=\Eb(P)\dot\cup E_r(P)\dot\cup E_l(P)$ completely characterises the boundary of $P$ via $\partial P=\bigcup_{e\in E(P)} e$.
Similarly, the set of vertices of $P$, denoted by $V(P)$, is the set of vertices of its boundary.
A point $x\in\RR^2$ is said to be in \emph{$P$-general position} if
\begin{equation*}%
	x\not\in \bigcup_{e\in E(P)} \aff(e).
\end{equation*} 

I note some basic facts about polygonal cycles.
Since a polygonal cycle is homeomorphic to a circle, it is a simple closed curve.
Due to the Jordan curve theorem, this means that any polygonal cycle $\gamma$ separates $\RR^2$ into two disjoint open connected components $\interior(\gamma)$ and $\exterior(\gamma)$, whose boundary is $\gamma$, such that $\interior(\gamma)$ is bounded and $\exterior(\gamma)$ unbounded. Therefore, if a cycle $\gamma$ is a boundary component of some polygon $P$, then the interior of $P$, which is connected, is completely contained in exactly one of these regions. If $P\subseteq\cl{\exterior(\gamma)}$, I call $\gamma$ a \emph{hole} of $P$. 

\begin{figure}[h]
    \centering
    \definecolor{colorBdrComp1}{HTML}{82B366}
\definecolor{colorBdrComp2}{HTML}{B85450}
\definecolor{colorBdrComp3}{HTML}{6C8EBF}

\begin{tabular}{C{4cm} C{4cm} C{4cm}}
     bounded & unbounded & \begin{tabular}{c} unbounded \\ with holes \end{tabular} \\ 
     \tikzsetnextfilename{LShape}
\begin{tikzpicture}[ultra thick]
    \coordinate (TL) at (0,2);
    \coordinate (TM) at (1,2);
    \coordinate (BL) at (0,0);
    \coordinate (MM) at (1,1);
    \coordinate (BR) at (2,0);
    \coordinate (MR) at (2,1);

    \draw[colorBdrComp1, fill_P_side, draw]%
        (BL) -- (BR) -- (MR) -- (MM) -- (TM) -- (TL) -- cycle;

    \fill (TL) circle (.05);
    \fill (TM) circle (.05);
    \fill (BL) circle (.05);
    \fill (MM) circle (.05);
    \fill (BR) circle (.05);
    \fill (MR) circle (.05);
    
\end{tikzpicture}
 & \tikzsetnextfilename{Polygon_unbounded}
\begin{tikzpicture}[ultra thick]
    \coordinate (lL) at (-1,1);
    \coordinate (lR) at (1,1);
    \coordinate (M) at (0,0);
    \coordinate (bL) at (-1,-1);
    \coordinate (bR) at (1,-1);

    \draw[fill_P_side] (lL) -- (lR) -- (bR) -- (M) -- (bL) -- cycle;

    \draw[colorBdrComp1, dashed] (lL)-- ($(lL)!.2!(lR)$);
    \draw[colorBdrComp1] %
        (lL) ($(lL)!.2!(lR)$) -- ($(lL)!.8!(lR)$);
    \draw[colorBdrComp1, dashed] (lL) ($(lL)!.2!(lR)$) -- (lR);
    
    \draw[colorBdrComp2] %
        (M) -- ($(M)!.6!(bL)$);
    \draw[colorBdrComp2, dashed] ($(M)!.6!(bL)$) -- (bL) ;
    \draw[colorBdrComp2] (M) -- ($(M)!.6!(bR)$);
    \draw[colorBdrComp2, dashed] ($(M)!.6!(bR)$) -- (bR) ;
    
    \fill (M) circle (.05);
    
\end{tikzpicture}
& \tikzsetnextfilename{polygon_unbd_holes}
\begin{tikzpicture}[ultra thick]
    \coordinate (lL) at (-1,1);
    \coordinate (lR) at (1.5,1);
    \coordinate (TL) at (-.5,.5);
    \coordinate (BL) at (-.5,-.5);
    \coordinate (M) at (0,0);
    \coordinate (TR) at (.5,.5);
    \coordinate (BR) at (.5,-.5);
    \coordinate (R) at (1,0);
    \coordinate (bL) at (-1,-1);
    \coordinate (bR) at (1.5,-1);

    \draw[fill_P_side] {(lL) -- (lR) [rounded corners]-- (bR) -- (bL) [sharp corners]-- cycle
    (TL) -- (BL) -- (M) -- cycle 
    (TR) -- (M) -- (BR) -- (R) -- cycle};

    \draw[colorBdrComp1, dashed] (lL)-- ($(lL)!.2!(lR)$);
    \draw[colorBdrComp1] %
        (lL) ($(lL)!.2!(lR)$) -- ($(lL)!.8!(lR)$);
    \draw[colorBdrComp1, dashed] (lL) ($(lL)!.2!(lR)$) -- (lR);
    \draw[colorBdrComp2]%
        (TL) -- (BL) -- (M) -- cycle;
    \draw[colorBdrComp3]%
        (TR) -- (R) -- (BR) -- (M) -- cycle;

    \fill (TL) circle (.05);
    \fill (BL) circle (.05);
    \fill (M)  circle (.05) node[below] {$v$};
    \fill (TR) circle (.05);
    \fill (R)  circle (.05);
    \fill (BR) circle (.05);
    
\end{tikzpicture}

\end{tabular}

    \caption{Examples of polygons according to my definition. Polygons are shown in grey, vertices by a dot, and the unbounded directions of lines and rays are indicated by dashed lines. The boundary of the left polygon consists of one polygonal cycle, the one in the middle of two polygonal arcs, and the right one consists of one polygonal arc and two polygonal cycles.}
    \label{fig:Polygon_examples}
\end{figure}
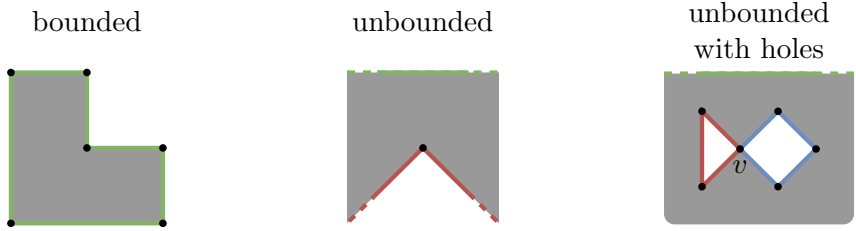

\subsection{Continuous Piecewise Affine Functions}\label{sec:CPA}
Let $f:\RR^2\to\RR$ be a continuous function. 
Let $\cP$ be a set of polygons that covers $\RR^2$, such that $P\cap Q=\partial P\cap \partial Q$ holds for any distinct $P,Q\in\cP$, and such that if $v\in V(P)$ for some $P\in\cP$, then $v\in V(Q)$ for all $Q\in\cP$ containing $v$.
Assume that there are affine functions $\set{f_P}_{P\in\cP}$ such that $f|_{P}=f_P$ for all $P\in\cP$. 
Then, $f$ is called \emph{continuous piecewise affine} (CPA), and $\cP$ is said to be an admissible set of \emph{pieces} for $f$.
The affine function $f_P$ satisfying $f|_{P}=f_P$ for a piece $P$ is called the \emph{affine component} of $f$ corresponding to $P$. 
The family of continuous piecewise affine functions in $\RR^2$ that can be defined by $p$ pieces is denoted by $\CPA_p$. 

Note that a piece can always be split into two which means that there is more than one set of admissible pieces and that $\CPA_p\subseteq \CPA_{p+1}$.
Moreover, the restriction on the vertices of the pieces ensures only that the pieces are mutually compatible, but still does not make the choice of vertices unique. 
Therefore, when discussing a $\CPA$ function, we must always fix a specific set of admissible pieces $\cP(f)$ along with their vertex and edge sets.

For a function $f\in\CPA_p$ with pieces $\cP(f)$, I carry over the names and definitions of the sets $V(P)$, $\Eb(P)$, $E_l(P)$ from polygons by taking the union of the respective sets over all the pieces of $f$. For example, the set of vertices of $f$ is defined by $V(f):=\bigcup_{P\in\cP(f)} V(P)$.
Note that because of $P\setminus\bigcup_{e\in E(f)} e=P\setminus\bigcup_{e\in E(P)} e=\interior{P}$, and because the interiors of distinct pieces are disjoint, we can write 
\begin{equation}\label{eq:CPA_expand_interior_indicator}
    f(x) = \sum_{P\in\cP(f)}\I{P}(x)\cdot f_P(x)\qquad \forall x\in\RR^2\setminus \bigcup_{e\in E(f)}e,
\end{equation}
where the indicator function $\I{A}$ for a set $A$ is defined by $\I{A}(x)=1$ if $x\in A$ and $\I{A}(x)=0$ otherwise.

If all affine components of a function $f\in\CPA_p$ are linear, i.e. $f_P(0)=0$ for every piece $P$, then I call the function $f$ \emph{continuous piecewise linear}, and denote the respective set of such functions by $\CPL_p\subsetneq\CPA_p$. 
Since two linear functions in $\RR^2$ only agree on a line through zero (or on all of $\RR^2$), the pieces of $f\in\CPL_p$ can always be chosen such that their boundary is the union of two rays with vertex $0$. 

\subsection{Neural Networks}\label{sec:NNs}
A second class of functions in this work are feed forward neural networks with the $\relu$ activation function $\rho(x):=\max(0,x)$, which is to be understood element-wise for inputs $x\in\RR^n$. 
Formally, I define a \emph{neural network} of \emph{depth} $L\in\NN$ as a sequence $s_0,\dots,s_{L}\in\NN$ together with a sequence of affine functions $T^{(l)}:\,\RR^{s_{l-1}}\to\RR^{s_l}$, for $l\in[L]$. Such a neural network is said to \emph{represent} the function 
\begin{equation*}
    f:\,\RR^{s_0}\to\RR^{s_L},\quad f = T^{(L)}\circ\rho\circ T^{(L-1)}\circ \dots\circ \rho\circ T^{(1)}.
\end{equation*}
For $l\in [L-1]$, $s_l$ is called the \emph{width} of the $l$-th \emph{hidden layer}. 
The \emph{width vector} of the network is defined as the vector $(s_l)_{l\in [L-1]}$.

I will only consider neural networks with $s_L=1$, since the results can be easily generalised to higher dimensional outputs.

	\section{Decomposition of CPA Functions}
The first step towards a standardised representation of $\CPA$ functions is to decompose them into a sum of simpler functions, each of which is associated with a vertex or edge of the considered function.

To achieve this, the pieces incident to a vertex or edge are extended to infinity using a definition similar to tangent cones for convex polyhedra.
For a polygon $P$ and an arbitrary point $v\in\RR^2$, I define the \emph{$P$-side} $Q^v_{P}$ of $v$ as follows. Let $D$ be a disk centered at $v$, small enough such that $D$ intersects only those edges of $P$ that have $v$ as a vertex. 
Then, define 
\begin{equation*}
	Q^v_{P} := \set{v+t(x-v): t\in[0,\infty),\, x\in P\cap D}.
\end{equation*}
This definition is independent of the choice of $D$ and satisfies the property that $Q^v_P$ is locally identical to $P$, in the sense that $P\cap D = Q^v_P\cap D$.
Note that the interior of $Q^v_P$ is not necessarily connected.
For an edge $e\in E(P)$, I define the \emph{$P$-side} $H^e_{P}$ of $e$ as a closed half-plane such that for any point $x\in e$, that is not a vertex of $e$, there is a small disk $D$, centered at $x$, such that $P\cap D = H^e_P\cap D$. This implies that $e\subseteq \partial H^e_P$.

For any subset $e\in\RR^2$ with $e\cap P=\emptyset$, I define its $P$-sides $Q^e_P$ and $H^e_P$ to be empty.

Let $f\in\CPA$ be a continuous piecewise affine function, and let $f_P$ denote the affine component corresponding to the piece $P\in\cP(f)$. For a vertex $v$ of $f$, I define the function $f^v$ piecewise by setting $f^v|_{Q_P^v}:=f_P$ for all $P\in\cP(f)$. 
For $x\in\RR^2$ that does not lie on the boundary of any $Q_P^v$, we can express $f^v$ as
\begin{equation}\label{eq:fv_expanded}
	f^v(x)=\sum_{P\in\cP(f)}\I{Q_P^v}(x)\cdot f_P(x).
\end{equation}
Note that, since the interior of $Q_P^v$ is not necessarily connected, the $P$-sides may not define valid pieces for a $\CPA$ function. However, their connected components do, and thus $f^v\in\CPA$. 
Moreover, the expression \eqref{eq:fv_expanded}, together with the local properties of the $P$-sides, implies that
\begin{equation*}
    f^v|_{D} = f|_{D},
\end{equation*}
for sufficiently small disks $D$ centered at $v$.

Similarly, I want to describe the local shape of $f$ at an edge by a global function.
Let $Q,R\in\cP(f)$ be two distinct pieces of $f$ that share an edge $e\in E(f)$, i.e. $e\subseteq Q\cap R$. Then, define $f^e\in\CPA_2$ on the two pieces $H_Q^e$ and $H_R^e$ by setting $f^e|_{H_Q^e}:=f_Q$ and $f^e|_{H_R^e}:=f_R$. Since $H_P^e=\emptyset$ for all $P\in\cP(f)\setminus\set{Q,R}$, it follows that
\begin{equation}\label{eq:fe_expanded}
	f^e(x) = \sum_{P\in\cP(f)}\I{H_P^e}(x)\cdot f_P(x)\quad\forall x\not\in \aff(e).
\end{equation}
Examples of $f^v$ and $f^e$ are shown in \autoref{fig:localCPL}.

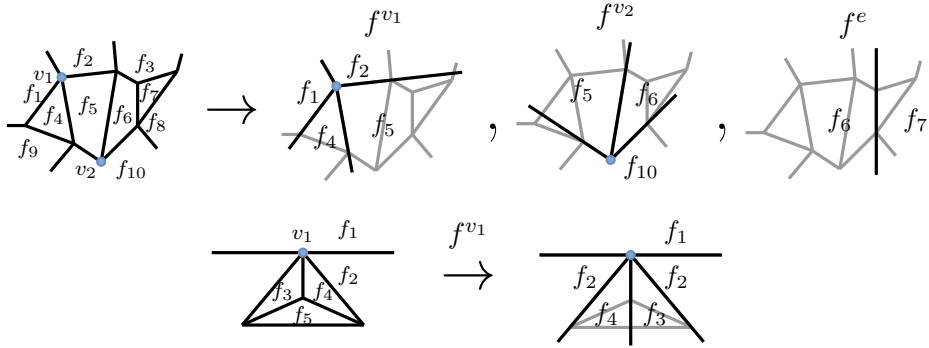
\begin{figure}[h]
    \centering
    \newcommand{\thL}{very thick}  %
\definecolor{colorVertexInner}{HTML}{7EA6E0}
\definecolor{colorVertexOut}{HTML}{6C8EBF}

\tikzsetnextfilename{CPA_building_blocks_1}
\begin{tikzpicture}[scale=.8]
    \coordinate (L) at (0,0);
    \coordinate (TL) at (.6,.8);
    \coordinate (BL) at (.8,-.3);
    \coordinate (T) at (1.5,.9);
    \coordinate (B) at (1.25,-.6);
    \coordinate (TR) at (1.85,.7);
    \coordinate (BR) at (1.85,0);
    \coordinate (R) at (2.5,.9);

    \coordinate (rL) at ($(L)-(0:.3)$);
    \coordinate (rTL) at ($(TL)+(120:.5)$);
    \coordinate (rBL) at ($(BL)+(230:.6)$);
    \coordinate (rT) at ($(T)+(95:.5)$);
    \coordinate (rBR) at ($(BR)+(-50:.5)$);
    \coordinate (rR) at ($(R)+(75:.3)$);
    
    \tikzset{
        skeleton/.pic={
            \draw    [\thL] (L) -- (TL) ;
            \draw    [\thL] (L) -- (BL) ;
            \draw    [\thL] (TL) -- (BL) ;
            \draw    [\thL] (TL) -- (T) ;
            \draw    [\thL] (BL) -- (B) ;
            \draw    [\thL] (T) -- (B) ;
            \draw    [\thL] (T) -- (TR) ;
            \draw    [\thL] (B) -- (BR) ;
            \draw    [\thL] (TR) -- (BR) ;
            \draw    [\thL] (TR) -- (R) ;
            \draw    [\thL] (BR) -- (R) ;
            \draw    [\thL] (L) -- (rL) ;
            \draw    [\thL] (TL) -- (rTL) ;
            \draw    [\thL] (BL) -- (rBL) ;
            \draw    [\thL] (T) -- (rT) ;
            \draw    [\thL] (BR) -- (rBR) ;
            \draw    [\thL] (R) -- (rR) ;
        }
    }

    \node(f){
        \begin{tikzpicture}
            \pic at (0,0) {
                skeleton
            };
        
            \fill[draw={colorVertexOut}, fill={colorVertexInner}, thick] (TL) circle (1.5pt) node[left, inner sep=2pt] {\scriptsize{$v_{1}$}};
            \fill[draw={colorVertexOut}, fill={colorVertexInner}, thick] (B) circle (1.5pt) node[below left, inner sep=1pt] {\scriptsize{$v_{2}$}};
            
            \node[] at ($0.3*(rL)+0.2*(L)+0.2*(TL)+0.3*(rTL)$) {\scriptsize{$f_1$}};
            \node[] at ($0.3*(rTL)+0.2*(TL)+0.2*(T)+0.3*(rT)$) {\scriptsize{$f_2$}};
            \node[] at ($0.2*(rT)+0.2*(T)+0.2*(TR)+0.2*(R)+0.2*(rR)$) {\scriptsize{$f_3$}};
            \node[] at ($1/3*(L)+1/3*(TL)+1/3*(BL)$) {\scriptsize{$f_4$}};
            \node[] at ($.3*(TL)+.2*(BL)+.2*(B)+.3*(T)$) {\scriptsize{$f_5$}};
            \node[] at ($.4*(B)+.6*(TR)$) {\scriptsize{$f_6$}};
            \node[] at ($0.4*(TR)+0.3*(BR)+0.3*(R)$) {\scriptsize{$f_7$}};
            \node[] at ($0.25*(R)+0.3*(BR)+0.45*(rBR)$) {\scriptsize{$f_8$}};
            \node[] at ($.5*(rL)+.5*(rBL)$) {\scriptsize{$f_9$}};
            \node[] at ($(B)+(.4,-.1)$) {\scriptsize{$f_{10}$}};
        \end{tikzpicture}
    };

    \node[at={($(f.east)$)}, anchor=west](pfeil){\huge$\rightarrow$};
    
    \node[at={($(f.east)$)}, shift={(1,0)},anchor=west] (fv1) {
        \begin{tikzpicture}
            \pic[Pcolor] at (0,0) {
                skeleton
            };

            \draw    [\thL] (TL) -- ($(TL)!2.3!(T)$) ;
            \draw    [\thL] (TL) -- ($(rTL)$) ;
            \draw    [\thL] (TL) -- ($(TL)!1.3!(L)$) ;
            \draw    [\thL] (TL) -- ($(TL)!1.3!(BL)$) ;
            
            \fill[draw={colorVertexOut}, fill={colorVertexInner}, thick] (TL) circle (1.5pt) node[left, inner sep=2pt] {};

            \node[anchor=center] at ($0.3*(rL)+0.1*(L)+0.1*(TL)+0.5*(rTL)$) {\small{$f_1$}};
            \node[anchor=center] at ($0.3*(rTL)+0.2*(TL)+0.2*(T)+0.3*(rT)$) {\small{$f_2$}};
            \node[anchor=center] at ($.45*(L)+.1*(TL)+.45*(BL)$) {\small{$f_4$}};
            \node[anchor=center] at ($0*(TL)+0*(BL)+.5*(B)+.5*(T)$) {\small{$f_5$}};
        
        \end{tikzpicture}
    };
    \node[at={($(fv1.north)+(.2,-.2)$)}, anchor=south]{$f^{v_1}$};

    \node[at={($(fv1.east)$)}, anchor=north west](komma1){\Huge ,};
    
    \node[at={($(fv1.east)$)}, shift={(.4,0)},anchor=west] (fv2) {
        \begin{tikzpicture}
            \pic[Pcolor] at (0,0) {
                skeleton
            };

            \draw    [\thL] (B) -- ($(B)!3!(BL)$) ;
            \draw    [\thL] (B) -- ($(B)!1.3!(T)$) ;
            \draw    [\thL] (B) -- ($(B)!1.8!(BR)$) ;

            \fill[draw={colorVertexOut}, fill={colorVertexInner}, thick] (B) circle (1.5pt) node[below left, inner sep=1pt] {};

            \node[anchor=center] at ($.7*(TL)+.1*(BL)+.1*(B)+.1*(T)$) {\small{$f_5$}};
            \node[anchor=center] at ($.3*(BR)+.7*(TR)$) {\small{$f_6$}};
            \node[anchor=center] at ($(B)+(.4,-.1)$) {\small{$f_{10}$}};
        
        \end{tikzpicture}
    };
    \node[at={($(fv2.north)+(.2,-.2)$)}, anchor=south]{$f^{v_2}$};

    \node[at={($(fv2.east)$)}, anchor=north west](komma2){\Huge,};
    
    \node[at={($(fv2.east)$)}, shift={(.4,0)}, anchor=west] (fe) {
        \begin{tikzpicture}
            \pic[Pcolor] at (0,0) {
                skeleton
            };

            \draw    [\thL] ($(BR)!2!(TR)$) -- ($(TR)!2!(BR)$) ;

            \node[anchor=center] (f6) at ($.1*(L)+.5*(T)+.4*(B)$) {\small{$f_6$}};
            \node[anchor=center] at ($(f6)+(1,0)$) {\small{$f_7$}};
        
        \end{tikzpicture}
    };
    \node[at={($(fe.north)+(.2,-.2)$)}, anchor=south]{$f^{e}$};
    
\end{tikzpicture}\\
    \definecolor{colorVertexInner}{HTML}{7EA6E0}
\definecolor{colorVertexOut}{HTML}{6C8EBF}

\tikzsetnextfilename{CPA_building_blocks_2}
\begin{tikzpicture}[scale=.8]
	\coordinate (T) at (0,0);
	\coordinate (M) at (0,-.75);
	\coordinate (L) at (-1,-1.2);
	\coordinate (R) at (1,-1.2);
	
	\coordinate (rL) at (-1.5,0);
	\coordinate (rR) at (1.5,0);
	
	\tikzset{
		skeleton/.pic={
			\draw    [\thL] (T) -- (L) ;
			\draw    [\thL] (L) -- (R) ;
			\draw    [\thL] (R) -- (T) ;
			\draw    [\thL] (M) -- (T) ;
			\draw    [\thL] (M) -- (R) ;
			\draw    [\thL] (M) -- (L) ;
			\draw    [\thL] (T) -- (rL) ;
			\draw    [\thL] (T) -- (rR) ;
		}
	}
	
	\node(f){
		\begin{tikzpicture}
			\pic at (0,0) {
				skeleton
			};
			
			\fill[draw={colorVertexOut}, fill={colorVertexInner}, thick] (T) circle (1.5pt) node[above] {\scriptsize{$v_{1}$}};
			
			\node[] at (.6,.3) {\scriptsize{$f_1$}};
			\node[] at (.6,-.3) {\scriptsize{$f_2$}};
			\node[] at ($1/3*(T)+1/3*(L)+1/3*(M)$) {\scriptsize{$f_3$}};
			\node[] at ($1/3*(T)+1/3*(R)+1/3*(M)$) {\scriptsize{$f_4$}};
			\node[] at ($1/3*(L)+1/3*(R)+1/3*(M)$) {\scriptsize{$f_5$}};
		\end{tikzpicture}
	};
	
	\node[at={($(f.east)+(.4,-.1)$)}, anchor=west](pfeil){\huge$\rightarrow$};
	
	\node[at={($(f.north east)+(2,0)$)},anchor=north west] (fv1) {
		\begin{tikzpicture}
			\coordinate (rbL) at ($(T)!1.2!(L)$);
			\coordinate (rbR) at ($(T)!1.2!(R)$);
			\coordinate (rbM) at ($(T)!2.0!(M)$);
			
			\pic[Pcolor] at (0,0) {
				skeleton
			};
			
			\draw    [\thL] (T) -- (rL) ;
			\draw    [\thL] (T) -- (rR) ;
			\draw    [\thL] (T) -- (rbL);
			\draw    [\thL] (T) -- (rbM) ;
			\draw    [\thL] (T) -- (rbR) ;
			
			\fill[draw={colorVertexOut}, fill={colorVertexInner}, thick] (T) circle (1.5pt) node[left, inner sep=2pt] {};
			
			\node[anchor=center] at (.6,.3) {\small{$f_1$}};
			\node[anchor=center] at (.6,-.3) {\small{$f_2$}};
			\node[anchor=center] at (-.6,-.3) {\small{$f_2$}};
			\node[anchor=center] at ($1/3*(T)+1/3*(rbM)+1/3*(rbR)$) {\small{$f_3$}};
			\node[anchor=center] at ($1/3*(T)+1/3*(rbM)+1/3*(rbL)$) {\small{$f_4$}};

		\end{tikzpicture}
	};
	
	\node[at={($(pfeil.north)$)}, anchor=south]{$f^{v_1}$};

\end{tikzpicture}
    \caption{Two $\CPA$ functions and some of the building blocks that will be used in their decomposition. In the upper example, $e$ is the edge between the pieces $P_6$ and $P_7$. The example below shows $f^{v_1}$ for a vertex $v_1$ whose $P_2$-side has disconnected interior, resulting in two separate pieces.}
    \label{fig:localCPL}
\end{figure}

For a polygon $P$, let $n_h(P)$ and $n_a(P)$ denote the number of holes respectively arcs that are boundary components of $P$. 
With ${\deg_P(v):=\abs{\set{e\in E(P):\, v\in e}}}$, the number $d(P):=\sum_{v\in V(P)}(\deg_P(v)/2-1)$ is only non-zero if there is a vertex in which at least two boundary components intersect. 
Finally, define $c(P):=1+d(P)-n_h(P)-n_a(P)$.

The following lemma describes the function $f$ using only the building blocks $f^v$, $f^e$, and a linear combination of its affine components.
Note that a similar result can be derived from \cite[Prop. 18]{Tran2024MinimalTRF}, but only up to an affine function.

\begin{lemma}%
\label{lem:CPA_Decomp}
Let $f\in\CPA$ be a continuous piecewise affine function in $\RR^2$. Denote the affine component corresponding to $P\in\cP(f)$ by $f_P$.
Then, it holds that
\begin{equation}\label{eq:Decomposition}
    f(x) = \sum_{v\in V(f)} f^{{v}}(x) + \sum_{\substack{e\in E_l(f)}}f^{{e}}(x) - \sum_{\substack{e\in \Eb(f)}}f^{{e}}(x) +\sum_{P\in\cP(f)}c(P)f_P(x).
\end{equation}
\end{lemma}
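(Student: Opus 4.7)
My plan is to reduce \eqref{eq:Decomposition} to a per-piece identity of indicator functions, then prove the latter by a change-across-crossings argument, using the definition of $c(P)$ as the combinatorial correction coming from the boundary graph of $P$.

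Using \eqref{eq:CPA_expand_interior_indicator} for $f$ and the analogous expansions \eqref{eq:fv_expanded} and \eqref{eq:fe_expanded} for $f^v$ and $f^e$, both sides of \eqref{eq:Decomposition} become linear combinations $\sum_{P \in \cP(f)} \phi_P(x)\, f_P(x)$ for $x$ off all edges. Since $Q_P^v = \emptyset$ for $v \notin V(P)$ and $H_P^e = \emptyset$ for $e \notin E(P)$, comparing coefficients of $f_P$ reduces the claim, for each $P \in \cP(f)$ and each $x \notin \bigcup_{e \in E(P)} \aff(e)$, to the pointwise identity
\begin{equation*}
    \I{P}(x) = \sum_{v \in V(P)} \I{Q_P^v}(x) + \sum_{e \in E_l(P)} \I{H_P^e}(x) - \sum_{e \in \Eb(P)} \I{H_P^e}(x) + c(P). \tag{$\ast$}
\end{equation*}

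Both sides of $(\ast)$ are integer-valued and locally constant on $\RR^2 \setminus L$, with $L := \bigcup_{e\in E(P)} \aff(e)$. Letting $g$ denote their difference, I would show $g$ is globally constant by verifying that $g$ is unchanged across generic crossings of a single line $\aff(e)$ at a point $p$ not lying on any other affine hull and not coinciding with a vertex. The crucial geometric fact is that along any boundary ray of $Q_P^v$ emanating from an endpoint $v$ of $e$, the cone $Q_P^v$ lies locally on the $P$-side of $\aff(e)$, so $\I{Q_P^v}$ jumps with the same sign as $\I{H_P^e}$ whenever $p$ lies on that ray. Two subcases then suffice: (a) if $p$ lies in the relative interior of $e$, then $\I{P}$ jumps like $\I{H_P^e}$ (since $P$ locally equals $H_P^e$ there), and both endpoints of $e$ (if $e$ is bounded), or one endpoint (if $e$ is a ray), or none (if $e$ is a line) contribute matching cone jumps which, combined with the signed $\pm \I{H_P^e}$ term, leave $g$ unchanged; (b) if $p$ lies on $\aff(e)$ outside $e$, then $\I{P}$ does not jump, exactly one endpoint $v$ of $e$ has $p$ on its boundary ray (the extension of $e$ past the other endpoint), and this lone cone jump cancels the $\pm \I{H_P^e}$ jump.

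To conclude $g \equiv 0$, I would evaluate $g$ at a base point, for instance $x_0$ far from $P$ in a generic direction $d$. Then $\I{P}(x_0)$ and every indicator on the right of $(\ast)$ stabilise to values determined by $d$ and the boundary combinatorics of $P$ alone, and the identity reduces to a Brianchon--Gram-type relation whose correction constant matches $c(P) = 1 + (E_P - V_P) - n_h(P) - n_a(P)$ via the handshake identity $d(P) = E_P - V_P$.

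\textbf{Main obstacle.} The crossing analysis is a fairly routine local computation once the cone-versus-half-plane local-side geometry is in place. The main difficulty is the base case: showing that the constant $c(P)$ is precisely the right Euler-characteristic correction. This requires a global combinatorial argument that accounts for possibly disconnected $P$-sides at high-degree vertices (encoded in $d(P)$), for rays that are absent from the $\pm\I{H_P^e}$ sum yet still contribute a boundary ray of some $Q_P^v$ (the $-n_a(P)$ correction), and for the topology of the polygon's holes (the $-n_h(P)$ correction).
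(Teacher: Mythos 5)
Your reduction to the per-piece indicator identity $(\ast)$ is correct and is essentially the same manipulation the paper uses: expand $f$, $f^v$, $f^e$ via \eqref{eq:CPA_expand_interior_indicator}, \eqref{eq:fv_expanded}, \eqref{eq:fe_expanded}, collect the coefficients of each $f_P$, and use $Q_P^v=\emptyset \Leftrightarrow v\notin V(P)$ and $H_P^e=\emptyset \Leftrightarrow e\notin E(P)$ to restrict the sums to $V(P)$, $E_l(P)$, $\Eb(P)$. Where you diverge is in how you propose to establish $(\ast)$. The paper states $(\ast)$ as a standalone geometric lemma (\autoref{lem:InsideOutside}) and proves it by a completely different route: a winding-number computation (via the angles $\alpha_k,\beta_k,\eta_k$ along a cyclically ordered boundary) for polygons bounded by a single cycle, then a surgery argument that cuts off unbounded ends to reduce the arcs-only case to the cycle case, and finally an assembly step that writes a general polygon as an outer polygon minus holes and controls the vertex overlaps with $d(P)$. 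Your proposed route — show $g := \I{P} - (\text{RHS})$ is locally constant, track its jumps across generic crossings, and then pin down the constant with a single base-point evaluation — is a genuinely different strategy.

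However, there is a real gap. You correctly observe that your crossing analysis, even if carried through, only shows that $g$ is constant; it does not identify the constant. You then concede that the base case, i.e.\ verifying that the constant equals zero because $c(P)=1+d(P)-n_h(P)-n_a(P)$ is the correct correction, is the crux, and you do not prove it. But that base case is not a ``routine evaluation'': it is the entire combinatorial content of \autoref{lem:InsideOutside}. Evaluating the right-hand side of $(\ast)$ at a single distant generic point requires deciding, for each vertex cone $Q_P^v$ and each half-plane $H_P^e$, whether it contains that point, and then proving that the signed count plus $c(P)$ equals $\I{P}(x_0)$. This already encodes a Brianchon--Gram-type statement for possibly non-convex polygons with holes and unbounded arcs, and the appearance of $d(P)$, $n_h(P)$, and $n_a(P)$ as the right correction is exactly what the paper spends \autoref{lem:InsideOutsideBounded}, \autoref{lem:splitLine}, \autoref{lem:arcInterstions}, \autoref{lem:close_end}, \autoref{lem:at_most_1_outer_cycle}, and \autoref{lem:VertexOverlaps} establishing. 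Sketching it as ``a global combinatorial argument'' without carrying it out leaves the proof incomplete at precisely the point where all the difficulty lives.

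Two secondary issues. First, your crossing argument assumes a generic crossing point on $\aff(e)$ meets no other boundary ray; but if a second edge (or a ray of some cone $Q_P^{v'}$) lies on the same line $\aff(e)$, there is no ``more generic'' crossing point available, and those jumps must be accounted for simultaneously — your two cases do not cover this. Second, the asserted identity $d(P) = E_P - V_P$ is not the handshake lemma in this setting: rays contribute only one incidence, so $\sum_v \deg_P(v) = 2\abs{\Eb(P)} + \abs{E_r(P)}$, giving $d(P) = \abs{\Eb(P)} + \tfrac12\abs{E_r(P)} - \abs{V(P)}$, which is not $\abs{E(P)}-\abs{V(P)}$ in general. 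Both points suggest that filling the base case would require more care than the proposal anticipates.
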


For the proof of \autoref{lem:CPA_Decomp}, the following lemma, proved in \autoref{sec:InsideOutsideProof}, is essential.
This lemma extends the conic decomposition of convex polytopes (see, e.g., \citet{Shephard1967AngleSums}) to my definition of polygons.
\begin{lemma}%
\label{lem:InsideOutside}
    Let $P$ be a polygon and let $x\in\RR^2$ be an arbitrary point in $P$-general position.
    Then, it holds that
    \begin{equation*}%
        \sum_{v\in V(P)}\I{Q^v_P}(x)+\sum_{e\in E_l(P)}\I{H^e_P}(x)-\sum_{e\in \Eb(P)}\I{H^e_P}(x)+c(P)
        = \I{P}(x).
    \end{equation*}
\end{lemma}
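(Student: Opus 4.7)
My plan is to prove the identity by a decomposition argument that reduces the general polygon to convex pieces, for which the classical Brianchon-Gram identity (Shephard's angle-sum relations) applies.

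I first triangulate $P$ into convex polygonal cells $T_1, \ldots, T_N$, arranging so that every vertex and edge of $P$ is respected and, when $P$ is unbounded, that the triangulation includes unbounded cells (cones or strips) covering the unbounded directions. For each cell $T_i$ and for $x$ in $T_i$-general position, the Brianchon-Gram identity gives
\[\I{T_i}(x)=\sum_{v\in V(T_i)}\I{Q^v_{T_i}}(x)-\sum_{e\in E(T_i)}\I{H^e_{T_i}}(x)+1,\]
with an analogous identity for unbounded cells involving their tangent cones at infinity. Summing over all cells $T_i\subset P$, the left-hand side yields $\I{P}(x)$ for $x$ in $P$-general position.

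The next step is to regroup the right-hand side according to the relationship of each term to $\partial P$. Interior features of the triangulation cancel: at a triangulation vertex $v$ in the interior of $P$, the incident cone indicators $\I{Q^v_{T_i}}$ sum to $1$ since the cones tile a neighborhood of $v$; at an interior edge shared by two cells, the two complementary half-planes sum to $1$. Boundary features consolidate into the terms $\sum_{v\in V(P)}\I{Q^v_P}(x)$ and the two edge-type sums $\pm\sum_e\I{H^e_P}(x)$, with the sign convention (minus for bounded edges, plus for unbounded lines) emerging naturally from the tangent-cone analysis at infinity. Finally, the sum of all constant $+1$ contributions, one per cell, simplifies via Euler's formula to yield $c(P)=1+d(P)-n_h(P)-n_a(P)$.

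The main obstacle will be the combinatorial bookkeeping at topologically complex features of $\partial P$: (i) vertices where multiple boundary components meet, which produce the $d(P)$ correction because the local tangent structure involves several cones whose contributions are not naturally captured by a single application of Brianchon-Gram at the vertex; (ii) holes, which contribute $-n_h(P)$ via the Euler characteristic of the non-simply-connected region; and (iii) unbounded arcs, which require extending Brianchon-Gram to unbounded convex cells and performing a careful tangent-cone analysis at infinity, contributing $-n_a(P)$ and fixing the sign convention for the line term.
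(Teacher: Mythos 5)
Your proposal takes a genuinely different route from the paper. The paper never triangulates: it proves the identity directly for polygons bounded by a single cycle via a winding-number argument (tracking signed angles $\alpha_k$, $\beta_k$, $\eta_k$ around the cycle), then reduces the all-arcs case to the cycle case by constructively ``closing off'' unbounded ends with auxiliary tangent segments, and finally combines the two special cases by peeling off holes one at a time together with a lemma about how $P$-sides at a vertex split across multiple boundary components. Your route instead refines $P$ into convex cells, invokes the two-dimensional Brianchon--Gram identity on each cell, and collapses the interior bookkeeping via Euler's formula. The paper itself acknowledges (Section 6) that a convex-refinement route would work, so the plan is viable; what the paper's route buys is self-containedness (no black-box Brianchon--Gram for unbounded convex regions) and better constants, while yours buys brevity by importing a classical result on the convex pieces.

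Two steps in your sketch deserve more care than they get. First, the ``analogous identity for unbounded cells'' is not merely an afterthought: for cones, strips, and half-strips the correct additive constant differs from $+1$, and deriving the right version is essentially the unbounded part of the target lemma in miniature, so you cannot simply cite Brianchon--Gram and move on. Second, the claim that the $+1$ contributions ``simplify via Euler's formula to yield $c(P)$'' compresses the hardest bookkeeping: the accumulated constant is actually (number of cells) minus (interior edges of the refinement) plus (interior vertices), the refinement necessarily introduces Steiner vertices at infinity and on lines, and extracting $d(P)$ at vertices where several boundary components of $P$ meet requires tracking how the local cones at such a vertex partition $Q^v_P$ and contribute extra Euler terms. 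None of this is wrong, but as written the plan defers precisely the parts that make the statement nontrivial, so you would need to make the unbounded Brianchon--Gram identity explicit and carry out the Euler computation in full before this counts as a proof.
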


\begin{proof}[Proof of \autoref{lem:CPA_Decomp}]
First, note that it is sufficient to show \eqref{eq:Decomposition} for $x\in\RR^2\setminus\bigcup_{e\in E(f)}\aff(e)$, since both sides describe continuous functions. 
As $\bigcup_{e\in E(f)}\aff(e)$ contains all edges of the functions $f$, $f^v$, and $f^e$, we can apply \eqref{eq:CPA_expand_interior_indicator}, \eqref{eq:fv_expanded}, and \eqref{eq:fe_expanded} to express these functions in an analytic form.
The left hand side of \eqref{eq:Decomposition} becomes
\begin{align} \label{eq:expand_lhs}
    f(x)&= \sum_{P\in\cP(f)}\I{P}(x)\cdot f_P(x),
\end{align}
and the right hand side becomes
\begin{align} \label{eq:expand_rhs}
    \sum_{v\in V(f)} \hspace{-1.4em}&\hspace{1.4em}
    f^{v}(x) + \sum_{\substack{e\in E_l(f)}}f^{e}(x) - \sum_{\substack{e\in \Eb(f)}}f^{e}(x) +\sum_{P\in\cP(f)}c(P)f_P(x) \nonumber \\
     &= \sum_{v\in V(f)} \sum_{P\in\cP(f)}\ones_{Q^v_P}(x) f_P(x) + \sum_{\substack{e\in E_l(f)}} \sum_{P\in\cP(f)}\ones_{H^e_P}(x) f_P(x) \nonumber \\
     &\qquad- \sum_{\substack{e\in \Eb(f)}} \sum_{P\in\cP(f)}\ones_{H^e_P}(x) f_P(x)  +\sum_{P\in\cP(f)}c(P)f_P(x) \nonumber \\
     &= \sum_{P\in\cP(f)} 
     \hspace{-.3em} 
     f_P(x)\underbrace{\bigg( \sum_{v\in V(f)} \ones_{Q^v_P}(x) + 
     \hspace{-.3em}
     \sum_{\substack{e\in E_l(f)}} \ones_{H^e_P}(x) - 
     \hspace{-.3em}
     \sum_{\substack{e\in \Eb(f)}} \ones_{H^e_P}(x) + c(P)\bigg)}_{=:\,\mytag{\ensuremath{I_P(x)}}{tag:PiecesOverlap}}
\end{align}
Comparing \eqref{eq:expand_lhs} and \eqref{eq:expand_rhs}, we see that \eqref{eq:Decomposition} is true, if $\ref{tag:PiecesOverlap}=\ones_{P}(x)$ for all $P\in\cP(f)$.
Using that 
\begin{equation*}%
	Q_P^v\neq\emptyset\quad\Leftrightarrow\quad v\in V(P) \qquad\text{and}\qquad H_P^e\neq\emptyset\quad\Leftrightarrow\quad e\in E(P),
\end{equation*}
$I_P(x)$ can be written as
\begin{align*}
    I_P(x) 
    & =\sum_{v\in V(P)}\hspace{-.2em} \ones_{Q^v_P}(x) + \hspace{-.1em}\sum_{\substack{e\in E_l(P)}}\hspace{-.2em} \ones_{H^e_P}(x) - \hspace{-.1em}\sum_{\substack{e\in \Eb(P)}}\hspace{-.2em} \ones_{H^e_P}(x) + c(P)
\end{align*}
Since we have restricted ourselves to $x$ in $P$-general position, we can apply \autoref{lem:InsideOutside}, which directly yields $\ref{tag:PiecesOverlap}=\I{P}(x)$.
\end{proof}

\begin{remark}
    Let $G=(V,E)$ be a connected plane graph such that its set of faces $\cP$ consists of polygons.
    For $f(x):=1$, which is $\CPA$ on any admissible set of pieces, \autoref{lem:CPA_Decomp} implies that 
    \begin{equation*}%
    	1 = \sum_{v\in V} 1 - \sum_{e\in E} 1 + \sum_{P\in\cP} (1+d(P)-n_h(P)).
    \end{equation*}
    Since $G$ is connected, there is one outer face, say $P_0\in\cP$, whose boundary is a hole, i.e. $d(P_0)=0$ and $n_h(P_0)=1$.
    For all the other faces $P\in\cP\setminus\set{P_0}$, every hole must intersect the boundary $\partial P$, and there is exactly one intersection, since otherwise the hole would make $P$ disconnected. Therefore, $d(P)=n_h(P)$ for all such $P$.
    In total, \begin{equation*}%
    	2 = |V| - |E| + |\cP|,
    \end{equation*}
    which is Euler's formula.
\end{remark}

Next is the proof of \autoref{lem:InsideOutside}.

\subsection{Proof of \autoref{lem:InsideOutside}}\label{sec:InsideOutsideProof}
Since my definition of polygons allows for very complicated shapes, I will conduct the proof of \autoref{lem:InsideOutside} in multiple steps. 
I begin in section \ref{sec:inside_outside_cycle} by proving the lemma for polygons whose only boundary component is a single cycle. 
This result will then be utilised in section \ref{sec:inside_outside_arcs}, where I consider the case in which the boundary does not contain cycles.
By viewing a general polygon as an outer polygon with some holes cut out, I will combine the two special cases to prove the complete statement in section \ref{sec:inside_outside_final}.

\subsubsection{Only one cycle}\label{sec:inside_outside_cycle}
I start with a special case of \autoref{lem:InsideOutside}.

\begin{lemma}%
\label{lem:InsideOutsideBounded}
    Let $P$ be a polygon whose boundary consists of a single polygonal cycle $\gamma$, and let $x\in\RR^2$ be an arbitrary point in $P$-general position.
    Then, it holds that
    \begin{equation}\label{eq:inside_outside_lem_bd}
        \sum_{v\in V(P)}\I{Q^v_P}(x)-\sum_{e\in \Eb(P)}\I{H^e_P}(x)-n_h(P)
        = \I{P}(x)-1.
    \end{equation}
\end{lemma}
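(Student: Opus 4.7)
The overall strategy is to first handle the bounded case (where $P=\cl{\interior(\gamma)}$ and $n_h(P)=0$) and then deduce the unbounded case (where $P=\cl{\exterior(\gamma)}$ and $n_h(P)=1$) by a duality argument. The identity to prove is equivalent to
\begin{equation*}
\sum_{v\in V(P)}\I{Q^v_P}(x) - \sum_{e\in \Eb(P)}\I{H^e_P}(x) = \I{P}(x) - 1 + n_h(P),
\end{equation*}
whose right-hand side equals $\I{P}(x)-\chi_\infty$, with $\chi_\infty\in\{0,1\}$ the value of $\I{P}$ at points far from $P$.

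For the bounded case I would triangulate the simple polygon $P$ into $n-2$ triangles using $n-3$ diagonals, where $n:=|V(P)|$, by invoking Meisters' two-ears theorem so that every triangle vertex lies in $V(P)$ (after first absorbing any degenerate straight-angle vertices, which a direct calculation shows does not affect the identity). For each triangle $T$ the Brianchon--Gram identity
\begin{equation*}
\I{T}(x) = 1 + \sum_{v\in V(T)}\I{Q^v_T}(x) - \sum_{e\in E(T)}\I{H^e_T}(x)
\end{equation*}
can be verified by direct case analysis on the seven cells of the arrangement of its three edge-lines. Summing over all triangles and reorganising, I would use, for generic $x$, that the triangles tile $P$ so $\sum_T \I{T}(x)=\I{P}(x)$; that at each $u\in V(P)$ the cones $Q^u_T$ of incident triangles tile $Q^u_P$, hence $\sum_{T\ni u}\I{Q^u_T}(x)=\I{Q^u_P}(x)$; that each boundary edge $e$ lies in exactly one triangle $T$ with $H^e_T=H^e_P$; and that each diagonal $\epsilon$ lies in exactly two triangles on opposite sides of $\aff(\epsilon)$, whose half-plane indicators sum to $1$. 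Together with the arithmetic $(n-2)-(n-3)=1$, this yields the claim. Although $x$ must additionally avoid the finitely many diagonal lines, both sides of the identity are constant on cells of the coarser arrangement $\bigcup_{e\in E(P)}\aff(e)$, so the result extends to all $x$ in $P$-general position.

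For the unbounded case I would apply the bounded identity to the sister polygon $P':=\cl{\interior(\gamma)}$. Since $P$ and $P'$ share vertices and edges but have opposite $P$-sides at every vertex and edge, the complementarity relations $\I{Q^v_P}(x)+\I{Q^v_{P'}}(x)=1$, $\I{H^e_P}(x)+\I{H^e_{P'}}(x)=1$, and $\I{P}(x)+\I{P'}(x)=1$ all hold in $P$-general position. Substituting these into the bounded identity for $P'$ and using $|V(P)|=|\Eb(P)|$ (true for any single cycle), the net count contributions cancel and one obtains precisely the desired identity with $n_h(P)=1$. The main obstacle is the combinatorial bookkeeping in the bounded case: pairing each diagonal's two triangle contributions correctly, verifying that the vertex cones partition $Q^u_P$ as required, and propagating the generic-position assumption from the finer triangulation arrangement to the coarser boundary arrangement. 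The triangle case of Brianchon--Gram and the existence of a triangulation of simple polygons are both classical, so once the bookkeeping is set up the argument is essentially mechanical.
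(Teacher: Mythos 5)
Your proposal is correct, but for the bounded case it takes a genuinely different route from the paper. You triangulate $P$ into $n-2$ triangles via Meisters' two-ears theorem, verify the Brianchon--Gram identity $\I{T}=1+\sum_v\I{Q^v_T}-\sum_e\I{H^e_T}$ for each triangle $T$ by inspecting the seven cells of its edge-line arrangement, and then sum over the triangulation, letting the diagonal contributions cancel pairwise ($\I{H^\epsilon_{T_1}}+\I{H^\epsilon_{T_2}}=1$) and the vertex cones tile $Q^u_P$. The paper instead avoids any appeal to polygon triangulation: it enumerates the boundary cyclically in counterclockwise order, attaches angles $\alpha_k,\beta_k,\eta_k,\delta_k$ to each vertex--edge pair, and derives a telescoping relation $\beta_{k-1}-\beta_k=\alpha_k+\eta_k-\pi\pm 2\pi$, so that the winding-number identity $\sum\alpha_k=2\pi\I{P}(x)$ and the angle-sum formula $\sum\eta_k=(n-2)\pi$ directly yield the count. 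Your route is arguably more modular and conceptual (it makes the link to the classical conic decomposition of simplices explicit, which the paper only cites in passing), at the cost of invoking the triangulation theorem and requiring the extra step of propagating genericity from the finer triangulation arrangement back to the coarser boundary arrangement, plus the degenerate straight-angle cleanup. The paper's angle-counting is more self-contained and elementary, though it requires the fairly delicate eight-way case distinction summarised in \autoref{tab:sign_convexity_cases} and \autoref{fig:angle_relations}. Your treatment of the unbounded case by passing to $P'=\cl{\interior(\gamma)}$ and using the complementarity relations together with $|V(P)|=|\Eb(P)|$ is identical to the paper's.
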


Note that in this case $n_h(P)$ is $1$ if $P$ is unbounded, i.e. if $\gamma$ describes a hole, and $0$ otherwise. 
Moreover, we have $\deg_P(v)=2$ for all $v\in V(P)$, and thus $d(P)=0$. As the boundary does not contain any arcs, this implies $c(P)=1-n_h(P)$. Therefore, \autoref{lem:InsideOutsideBounded} is indeed a special case of \autoref{lem:InsideOutside} with $E_l(P)=\emptyset$.

\begin{proof}
Let us first assume that $P=\cl{\interior(\gamma)}$, which is equivalent to $n_h(P)=0$.
Define $v_P(x):=\sum_{v\in V(P)}\I{Q^v_P}(x)$, and $e_P(x):=\sum_{e\in \Eb(P)}\I{H^e_P}(x)$. 
Note that a sum over indicators corresponds to counting the number of sets containing $x$. 
Therefore, we can reformulate the left hand side of \eqref{eq:inside_outside_lem_bd} as
\begin{align}
    v_P(x)-e_P(x)
    &= \abs{\set{v\in V(P): x\in Q^v_P}}-\abs{\set{e\in \Eb(P): x\in H^e_P}},
    \label{eq:inside_outside_bounded_tmp}
\end{align}
which is the difference between the number of vertices and edges that have $x$ on their respective $P$-side.

Let $n:=|V(P)|$ be the number of vertices of $P$. 
In the following, indices are always to be understood modulo $n$.
Since $P$ is homeomorphic to a closed disk, we can enumerate its vertices $V(P)=\set{v_k}_{k=1}^n$ and edges $\Eb(P)=\set{e_k}_{k=1}^n$ in a cyclic order such that $e_k=\overline{v_{k-1}v_{k}}$.
Moreover, by the Jordan–Schönflies theorem, we can choose the order in a counterclockwise sense, such that $P$ is always on the left hand side when traversing $e_k$ from $v_{k-1}$ to $v_k$.
We define $H^{k}_+:=H^{e_k}_P$ to be the $P$-side of $e_k$, and $H^{k}_-$ as the opposite half-plane. Further, $Q^k:=Q^{v_k}_P$ is short for the $P$-side of $v_k$. With this, \eqref{eq:inside_outside_bounded_tmp} reads
\begin{equation}\label{eq:inside_outside_bounded}
    v_P(x)-e_P(x) = |\set{k\in[n]:x\in Q^k}|-|\set{k\in[n]:x\in H^{k}_+}|.
\end{equation}

From now on, we assume $x\in\RR^2$ in $P$-general position to be fixed.
Whether $x$ lies on the $P$-side of a vertex is fully determined by the incident edges. 
From \autoref{fig:effects}, we see that if the corner formed by the edges incident with vertex $v_k$ is concave, $x\in Q^k$ is equivalent to $x\in H^k_+\cup H^{k+1}_+$. If it is convex, then $x\in Q^k$ if and only if $x\in H^k_+\cap H^{k+1}_+$.
This is summarised in \autoref{tab:sign_convexity_cases}.
In the following, I want to refer to the eight different cases listed in \autoref{tab:sign_convexity_cases} in a short form. For example, I write $k=\texttt{+-convex}$ if $x\in H^k_+\cap H^{k+1}_-$ and the corner at $v_k$ is convex, and $k=\texttt{-+concave}$ if $x\in H^k_-\cap H^{k+1}_+$ and the corner is concave.

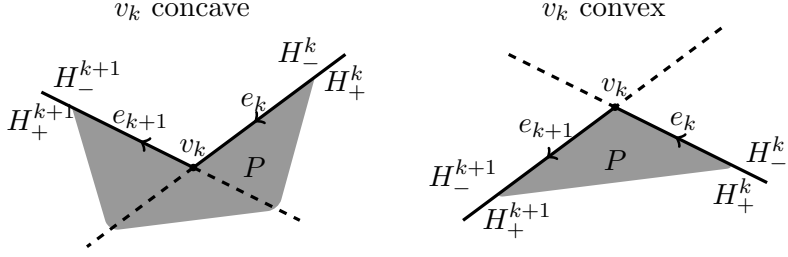
\begin{figure}[h]
    \centering
    \def\linc{.8}

\begin{tabular}{cc}
         $v_k$ concave & $v_k$ convex \\
\tikzsetnextfilename{P_side_vertex_concave}
\begin{tikzpicture}[scale=.1, very thick]
    \coordinate (L1a) at ($.7*(-20,-15)$);
    \coordinate (L1b) at (20,15);
    \coordinate (L2a) at (-20,10);
    \coordinate (L2b) at ($.7*(20,-10)$);
    \coordinate (O) at (0,0);
    \draw[ fill_P_side ] ($(O)!\linc!(L1b)$) -- (O) -- ($(O)!\linc!(L2a)$) [rounded corners]-- ($(O)!\linc!(L1a)$) -- ($(O)!\linc!(L2b)$) [sharp corners]-- cycle;

    \draw[postaction={decorate,decoration={markings,mark=at position 0.6 with {\arrow{>}; \node[above] {$e_k$};}}}] 
        (L1b) node[below] {$H^k_+$} node[left, xshift=-5] {$H^k_-$}  --  (O);
    \draw[dashed] (O) -- (L1a) ;
    \draw[postaction={decorate,decoration={markings,mark=at position 0.35 with {\arrow{>}; \node[above] {$e_{k+1}$};}}}] 
        (O) -- (L2a) node[below] {$H^{k+1}_+$} node[right, yshift=5] {$H^{k+1}_-$} ;
    \draw[dashed] (O) --  (L2b);

    \fill (O) circle (.5) node[above] {$v_k$};

	\node[] at (8,0) {$P$};

\end{tikzpicture}
&
\tikzsetnextfilename{P_side_vertex_convex}
\begin{tikzpicture}[scale=.1, very thick]
    \coordinate (L1a) at (-20,-15);
    \coordinate (L1b) at (20,15);
    \coordinate (L2a) at (-20,10);
    \coordinate (L2b) at (20,-10);
    \coordinate (O) at (0,0);
    
    \draw[fill_P_side] (O) [rounded corners] -- ($(O)!\linc!(L1a)$) -- ($(O)!\linc!(L2b)$) [sharp corners] -- cycle;
    \draw[dashed] ($.7*(L1b)$) -- (O);
    \draw[postaction={decorate,decoration={markings,mark=at position 0.45 with {\arrow{>}; \node[above, yshift=3] {$e_{k+1}$};}}}]
        (O) -- ($1*(L1a)$) node[right,xshift=3] {$H^{k+1}_+$} node[above,yshift=5] {$H^{k+1}_-$};
    \draw[dashed] ($.7*(L2a)$) -- (O);
    \draw[postaction={decorate,decoration={markings,mark=at position 0.45 with {\arrow{<}; \node[above] {$e_{k}$};}}}]
        (O) --  ($1*(L2b)$) node[left,yshift=-5] {$H^k_+$} node[above] {$H^k_-$};

    \fill (O) circle (.5) node[above] {$v_k$};

    \node[] at (0,-7) {$P$};

\end{tikzpicture}
\end{tabular}
    \caption{The $P$-side $Q^k$ of $v_k$ is indicated in grey.}
    \label{fig:effects}
\end{figure}

\begin{table}[h]
    \centering
    \begin{tabular}{ccc|c}
         $H^k_\bullet$ & $H^{k+1}_\bullet$ & Convexity at $v_k$ & \begin{tabular}{c}vertex \\ $x\in Q^k$ \end{tabular} \\ \hline
         $+$ & $+$ & convex & $\checkmark$   \\
         $+$ & $+$ & concave & $\checkmark$   \\
         $+$ & $-$ & convex & -  \\
         $+$ & $-$ & concave & $\checkmark$  \\
         $-$ & $+$ & convex & -  \\
         $-$ & $+$ & concave & $\checkmark$  \\
         $-$ & $-$ & convex & -  \\
         $-$ & $-$ & concave & -  \\
    \end{tabular}
    \caption{Summary of whether $x$ is on the $P$-side of vertex $v_k$, depending on the position of $x$ w.r.t. the incident edges and the convexity of the corresponding corner. The signs $+$ and $-$ in the first two columns mean $x\in H_+$ and $x\in H_-$ respectively.}
    \label{tab:sign_convexity_cases}
\end{table}

To verify \eqref{eq:inside_outside_lem_bd}, we consider each pair $\set{(v_k,e_{k+1})}_{k=1}^n$ of a vertex and the succeeding edge separately. As the contribution of $(v_k,e_{k+1})$ to \eqref{eq:inside_outside_bounded} cancels if $x\in Q^k\cap H^{k+1}_+$ or $x\not\in Q^k\cup H^{k+1}_+$, we get
\begin{equation*}
    v_P(x)-e_P(x)=|\set{k\in[n]:x\in Q^k\setminus H^{k+1}_+}|-|\set{k\in[n]:x\in H^{k+1}_+\setminus Q^k}|.
\end{equation*}
Considering \autoref{tab:sign_convexity_cases}, this is precisely
\begin{multline}\label{eq:inside_outside_bounded_simplified}
    v_P(x)-e_P(x) 
    \\= \abs{\set{k\in[n]: k=\texttt{+-concave}}}-\abs{\set{k\in[n]:k=\texttt{-+convex}}}.
\end{multline}

To determine \eqref{eq:inside_outside_bounded_simplified}, let us define certain angles, see \autoref{fig:angle_relations}, that help to distinguish between the different cases:
\begin{itemize}
    \item $\alpha_k:=\angle v_{k-1}xv_{k}\in(-\pi,\pi)$, where $\angle abc$ is to be understood as the angle from $a$ to $c$ about $b$. We have
   \begin{equation}\alpha_k\begin{cases} \label{eq:alpha_sign}
            >0,\quad x\in H^k_+ \\
            <0,\quad x\in H^k_- 
    \end{cases}
    \end{equation}
    because $P$ is always on the left when traversing $e_k$ from $v_{k-1}$ to $v_k$.    
    The sum of these angles is the winding number of $\gamma$ with respect to $x$, see, e.g., \citet{Hormann2001incremental}.
    Because $P = \cl{\interior(\gamma)}$ is the bounded component defined by the simple (i.e., non-self-intersecting) closed curve $\gamma$, and due to the counterclockwise enumeration, we can use the winding number to distinguish between the interior and exterior of $P$ (see, e.g., \citet[Section~5-7]{Carmo1976Differentialgeometry}):
    \begin{equation}\label{eq:alpha_sum_property}
        \sum_{k=1}^n \alpha_k = 2\pi\cdot \I{P}(x)
    \end{equation}
    \item $\eta_k:=\angle v_{k+1}v_kv_{k-1}\in(0,2\pi)$, measured counterclockwise. The corner at $v_k$ is convex if $\eta_k \leq\pi$ and concave if $\eta_k\geq\pi$. Further, since $P$ is a simple polygon in the classical sense, we have the relation
    \begin{equation}\label{eq:eta_sum_property}
        \sum_{k=1}^n \eta_k = (n-2)\pi
    \end{equation}
    \item $\beta_k:=\angle xv_kv_{k+1}\in (-\pi,\pi)$.
    As with $\alpha_k$, the sign of $\beta_k$ is related to the $P$-side of $e_{k+1}$, as $P$ is always on the left hand side:
    \begin{equation}\label{eq:beta_sign}
        \beta_k\begin{cases}
            <0,\quad x\in H^{k+1}_+ \\
            >0,\quad x\in H^{k+1}_- 
        \end{cases}
    \end{equation}
\end{itemize}
Note that $\alpha_k$ and $\beta_k$ are measured with values in $(-\pi,\pi)$, but $\eta_k$ with values in $(0,2\pi)$. Moreover, $\alpha_k$ and $\beta_k$ are non-zero since $x$ is in $P$-general position.

To determine the frequency of occurrences of the cases \texttt{+-concave} and \texttt{-+convex}, we consider, for each vertex-edge pair $(v_k,e_{k+1})$, the triangle defined by the points $x$, $v_{k-1}$ and $v_{k}$, see \autoref{fig:angle_relations}. 
These three points are not collinear since $x$ is in $P$-general position. Therefore, the sum of the triangle's interior angles satisfies the relation
\begin{equation*}
    \abs{\alpha_k} + \abs{\beta_{k-1}} + \delta_k = \pi,
\end{equation*}
where $\delta_k:=\abs{\angle v_{k-1}v_{k}x}\in (0,\pi)$.
Using the properties \eqref{eq:alpha_sign} and \eqref{eq:beta_sign}, this is equivalent to
\begin{equation} \label{eq:triangle_angles_eq_pi}
    \begin{rcases}
            \alpha_k-\beta_{k-1}, &x\in H^k_+ \\
            \beta_{k-1}-\alpha_k, &x\in H^k_- 
    \end{rcases} + \delta_k = \pi
\end{equation}

From the relations of the angles involving vertex $v_k$, we will now express $\delta_k$ in terms of $\beta_k$ and $\eta_k$. Since the orientation of $\beta_k$, and the overlap of the angles may change depending on the convexity of the corner, and on the relative position of $x$ with respect to $e_k$ and $e_{k+1}$, we need to consider the eight different cases from \autoref{tab:sign_convexity_cases} separately.
If we consider, for example, the case $k=\texttt{-+convex}$ in \autoref{fig:angle_relations}, we have $\delta_k+\eta_k=|\beta_k|$. Applying \eqref{eq:beta_sign}, we get $\delta_k=-\eta_k-\beta_k$, which, when substituted into \eqref{eq:triangle_angles_eq_pi}, implies $\beta_{k-1}-\beta_k=\alpha_k+\eta_k+\pi$.
In general, the difference between two consecutive $\beta_k$ can be summarised for all eight cases as 
\begin{equation}\label{eq:diff_consec_beta}
    \beta_{k-1}-\beta_{k} = \alpha_k+\eta_k-\pi+\begin{cases}
        \phantom{+}2\pi,\quad &k=\texttt{-+convex} \\
        -2\pi,\quad &k=\texttt{+-concave} \\
        \phantom{-2}0, &\text{else}
    \end{cases}
\end{equation}

Using properties \eqref{eq:alpha_sum_property} and \eqref{eq:eta_sum_property} of the angles $\alpha_k$ and $\eta_k$, \eqref{eq:diff_consec_beta} directly implies
\begin{align*}
    0 & = \sum_{k=1}^n\beta_{k-1}-\beta_{k} \\
    & = 2\pi\cdot\I{P}(x)- 2\pi + 2\pi\abs{\set{k: \texttt{-+convex}}} - 2\pi\abs{\set{k: \texttt{+-concave}}},
\end{align*}
and thus 
\begin{equation*}%
    \abs{\set{k: \texttt{+-concave}}} - \abs{\set{k: \texttt{-+convex}}} = \I{P}(x)-1.
\end{equation*}
By \eqref{eq:inside_outside_bounded_simplified}, this is precisely \eqref{eq:inside_outside_lem_bd} with $n_h(P)=0$.

\newcommand{\cint}{black}
\newcommand{\cobs}{red}
\newcommand{\csub}{blue}
\definecolor{gammaColor}{gray}{0.3}
\newcommand{\cgam}{gammaColor}
\newcommand{\dsz}{1.5pt}
\newcommand{\ki}{k}  %
\newcommand{\thL}{very thick}  %
\newcommand{\thA}{very thick}  %
\begin{figure}
    \centering
    
    \begin{tabular}{C{.7cm} C{.8cm}|C{4.5cm} C{4.5cm}}
         $H^k_\bullet$ & $H^{k+1}_\bullet$ & convex ($\eta_k<\pi$) & concave ($\eta_k>\pi$) \\ \hline
         \huge{$+$} & \huge{$+$} & \tikzsetnextfilename{ppConvex}
\begin{tikzpicture}[scale=1]
    \coordinate (X) at (7,-.3);
    \coordinate (A) at (6,2);
    \coordinate (B) at (4,0);
    \coordinate (C) at (5,-1);
    \coordinate (C2) at (5,-1);
    \coordinate (D) at (6,-1);
    \coordinate (E) at (6.5,2);
    \draw[fill_P_side_light] (A) -- (B) -- (C) [rounded corners]-- (D) --(E) [sharp corners]-- cycle;
    \draw[\thL, postaction={decorate,decoration={markings,mark=at position 0.5 with {\arrow{>}}}}] 
        (A) -- (B) node[pos=.5, above left, inner sep=1pt] {$e_k$};
    \draw[\thL] (B) -- (C) node[pos=.8, left] {$e_{k+1}$};
    \draw[\thL, \cobs] (X) -- (B);
    \draw[\thL, \cobs] (X) -- (A);
    \pic [\thA, "$\eta_\ki$", draw, angle radius=.75cm, \cint] {angle=C--B--A}; %
    \pic [\thA, "$\beta_{\ki}$", draw, <-, angle radius=1cm, angle eccentricity=1.5, \cobs] {angle=C--B--X}; 
    \pic [\thA, "$\beta_{\ki-1}$", draw, <-, angle radius=.8cm, angle eccentricity=1.3, \cobs] {angle=B--A--X}; 
    \pic [\thA, "$\delta_\ki$", draw, angle radius=1cm, angle eccentricity=1.3, \cgam] {angle=X--B--A}; 
    \pic [\thA, "$\alpha_\ki$", draw, ->, angle radius=.8cm, \csub] {angle=A--X--B}; 

    \fill (A) circle (\dsz) node[right] {$v_{\ki-1}$};
    \fill (B) circle (\dsz) node[left] {$v_{\ki}$};
    \fill (X) circle (\dsz) node[below] {$x$};

\end{tikzpicture} & \tikzsetnextfilename{ppConcave}
\begin{tikzpicture}[scale=1]
    \coordinate (X) at (5.5,1.2);
    \coordinate (A) at (2.2,1.8);
    \coordinate (B) at (4,0);
    \coordinate (C) at (3.7,-1.2);
    \coordinate (D) at (4.8,-1.2);
    \coordinate (E) at (5,0);
    \coordinate (F) at (4.5,1.8);
    \draw[fill_P_side_light] (A) -- (B) -- (C) [rounded corners]-- (D) -- (E) -- (F) [sharp corners]-- cycle;
    \draw[\thL, postaction={decorate,decoration={markings,mark=at position 0.6 with {\arrow{>}}}}] 
        (A) -- (B) node[pos=.5, below left, inner sep=1pt] {$e_{k}$};
    \draw[\thL] (B) -- (C) node[pos=.8, left] {$e_{k+1}$};
    \draw[\thL, \cobs] (X) -- (B);
    \draw[\thL, \cobs] (X) -- (A);
    \pic [\thA, "$\eta_\ki$", draw, angle radius=.5cm, \cint] {angle=C--B--A}; 
    \pic [\thA, "$\beta_{\ki}$", draw, <-, angle radius=.7cm, angle eccentricity=1.4, \cobs] {angle=C--B--X}; 
    \pic [\thA, "$\beta_{\ki-1}$", draw, <-, angle radius=1cm, angle eccentricity=1.4, \cobs] {angle=B--A--X}; 
    \pic [\thA, "$\delta_\ki$", draw, angle radius=.7cm, angle eccentricity=1.25, \cgam] {angle=X--B--A}; 
    \pic [\thA, "$\alpha_\ki$", draw, ->, angle radius=1cm, angle eccentricity=.65, \csub] {angle=A--X--B}; 

    \fill (A) circle (\dsz) node[left] {$v_{\ki-1}$};
    \fill (B) circle (\dsz) node[left] {$v_{\ki}$};
    \fill (X) circle (\dsz) node[below] {$x$};

\end{tikzpicture} \\
         \huge{$+$} & \huge{$-$} & \tikzsetnextfilename{pmConvex}
\begin{tikzpicture}[scale=1]
    \coordinate (X) at (1.5,-1.3);
    \coordinate (A) at (4.5,1.4);
    \coordinate (B) at (2,1);
    \coordinate (C) at (3.5,-1);
    \coordinate (D) at (4,-1);
    \draw[fill_P_side_light] (A) -- (B) -- (C) [rounded corners]-- (D) [sharp corners]-- cycle;
    \draw[\thL, postaction={decorate,decoration={markings,mark=at position 0.5 with {\arrow{>}}}}] 
        (A) -- (B) node[pos=.5, above] {$e_{k}$};
    \draw[\thL] (B) -- (C) node[pos=.9, left] {$e_{k+1}$};
    \draw[\thL, \cobs] (X) -- (B);
    \draw[\thL, \cobs] (X) -- (A);
    \pic [\thA, "$\eta_\ki$", draw, angle radius=.7cm, \cint] {angle=C--B--A}; 
    \pic [\thA, "$\beta_{\ki}$", draw, ->, angle radius=.7cm, \cobs] {angle=X--B--C}; 
    \pic [\thA, "$\beta_{\ki-1}$", draw, <-, angle radius=.8cm, angle eccentricity=1.4, \cobs] {angle=B--A--X}; 
    \pic [\thA, "$\delta_\ki$", draw, angle radius=.9cm, angle eccentricity=1.25, \cgam] {angle=X--B--A}; 
    \pic [\thA, "$\alpha_\ki$", draw, ->, angle radius=1.1cm, angle eccentricity=.65, \csub] {angle=A--X--B}; 

    \fill (A) circle (\dsz) node[right] {$v_{\ki-1}$};
    \fill (B) circle (\dsz) node[above] {$v_{\ki}$};
    \fill (X) circle (\dsz) node[right] {$x$};

\end{tikzpicture} & \tikzsetnextfilename{pmConcave}
\begin{tikzpicture}[scale=1]
    \coordinate (X) at (5,2);
    \coordinate (A) at (2,2);
    \coordinate (B) at (4,0);
    \coordinate (C) at (3.2,-1);
    \coordinate (D) at (4.5,-1);
    \coordinate (E) at (5,0);
    \coordinate (F) at (4.5,2);
    \draw[fill_P_side_light] (A) -- (B) -- (C) [rounded corners]-- (D) -- (E) -- (F) [sharp corners]-- cycle;
    \draw[\thL, postaction={decorate,decoration={markings,mark=at position 0.6 with {\arrow{>}}}}] 
        (A) -- (B) node[pos=.4, below left, inner sep=1pt] {$e_{k}$};
    \draw[\thL] (B) -- (C) node[pos=.8, left] {$e_{k+1}$};
    \draw[\thL, \cobs] (X) -- (B);
    \draw[\thL, \cobs] (X) -- (A);
    \pic [\thA, "$\eta_\ki$", draw, angle radius=.65cm, \cint] {angle=C--B--A}; 
    \pic [\thA, "$\beta_{\ki}$", draw, ->, angle radius=.5cm, angle eccentricity=1.6, \cobs] {angle=X--B--C}; 
    \pic [\thA, "$\beta_{\ki-1}$", draw, <-, angle radius=1.4cm, angle eccentricity=.7, \cobs] {angle=B--A--X}; 
    \pic [\thA, "$\delta_\ki$", draw, angle radius=.8cm, angle eccentricity=1.25, \cgam] {angle=X--B--A}; 
    \pic [\thA, "$\alpha_\ki$", draw, ->, angle radius=1cm, \csub] {angle=A--X--B}; 

    \fill (A) circle (\dsz) node[below, xshift=-1mm, yshift=-.3mm] {$v_{\ki-1}$};
    \fill (B) circle (\dsz) node[left] {$v_{\ki}$};
    \fill (X) circle (\dsz) node[below] {$x$};

    \node at (2.5,0) {\huge \ding{55}};   %
\end{tikzpicture} \\
         \huge{$-$} & \huge{$+$} & \tikzsetnextfilename{mpConvex}
\begin{tikzpicture}[scale=1]
    \coordinate (X) at (1.2,1.3);
    \coordinate (A) at (3.6,1.3);
    \coordinate (B) at (2,-1);
    \coordinate (C) at (3.2,-1.6);
    \coordinate (D) at (4,-1.6);
    \draw[fill_P_side_light] (A) -- (B) -- (C) [rounded corners]-- (D) [sharp corners]-- cycle;
    \draw[\thL, postaction={decorate,decoration={markings,mark=at position 0.5 with {\arrow{>}}}}] 
        (A) -- (B) node[pos=.5, left] {$e_{k}$};
    \draw[\thL] (B) -- (C) node[pos=.9, left] {$e_{k+1}$};
    \draw[\thL, \cobs] (X) -- (B);
    \draw[\thL, \cobs] (X) -- (A);
    \pic [\thA, "$\eta_\ki$", draw, angle radius=.8cm, \cint] {angle=C--B--A}; 
    \pic [\thA, "$\beta_{\ki}$", draw, <-, angle radius=1cm, angle eccentricity=1.25, \cobs] {angle=C--B--X}; 
    \pic [\thA, "$\beta_{\ki-1}$", draw, ->, angle radius=.9cm, angle eccentricity=1.4, \cobs] {angle=X--A--B}; 
    \pic [\thA, "$\delta_\ki$", draw, angle radius=.8cm, \cgam] {angle=A--B--X}; 
    \pic [\thA, "$\alpha_\ki$", draw, <-, angle radius=.9cm, \csub] {angle=B--X--A}; 

    \fill (A) circle (\dsz) node[right] {$v_{\ki-1}$};
    \fill (B) circle (\dsz) node[below] {$v_{\ki}$};
    \fill (X) circle (\dsz) node[left] {$x$};

    \node at (1.4,-.7) {\huge \ding{55}};   %
\end{tikzpicture} & \tikzsetnextfilename{mpConcave}
\begin{tikzpicture}[scale=1]
    \coordinate (X) at (3.5,-2);
    \coordinate (A) at (4,1);
    \coordinate (B) at (5.5,-0.5);
    \coordinate (C) at (3.3,-.85);
    \coordinate (C2) at (3.3,-1.5);
    \coordinate (D) at (6,-1.5);
    \coordinate (E) at (6,0);
    \coordinate (F) at (5.5,1);
    \draw[fill_P_side_light] (A) -- (B) -- (C) [rounded corners]-- (C2) -- (D) -- (E) -- (F) [sharp corners]-- cycle;
    \draw[\thL, postaction={decorate,decoration={markings,mark=at position 0.5 with {\arrow{>}}}}] 
        (A) -- (B) node[pos=.5, right] {$e_{k}$};
    \draw[\thL] (B) -- (C) node[pos=.9, above] {$e_{k+1}$};
    \draw[\thL, \cobs] (X) -- (B);
    \draw[\thL, \cobs] (X) -- (A);
    \pic [\thA, "$\eta_\ki$", draw, angle radius=.5cm, \cint] {angle=C--B--A}; 
    \pic [\thA, "$\beta_{\ki}$", draw, <-, angle radius=.65cm, angle eccentricity=1.6, \cobs] {angle=C--B--X}; 
    \pic [\thA, "$\beta_{\ki-1}$", draw, ->, angle radius=.6cm, angle eccentricity=1.5, \cobs] {angle=X--A--B}; 
    \pic [\thA, "$\delta_\ki$", draw, angle radius=.8cm, angle eccentricity=1.25, \cgam] {angle=A--B--X}; 
    \pic [\thA, "$\alpha_\ki$", draw, <-, angle radius=.8cm, angle eccentricity=.65, \csub] {angle=B--X--A}; 

    \fill (A) circle (\dsz) node[left] {$v_{\ki-1}$};
    \fill (B) circle (\dsz) node[left] {$v_{\ki}$};
    \fill (X) circle (\dsz) node[left] {$x$};

\end{tikzpicture}\\
         \huge{$-$} & \huge{$-$} & \tikzsetnextfilename{mmConvex}
\begin{tikzpicture}[scale=1]
    \coordinate (X) at (1.8,0);
    \coordinate (A) at (4,2);
    \coordinate (B) at (4,0);
    \coordinate (C) at (4.75,-1);
    \coordinate (D) at (5.3,-1);
    \coordinate (E) at (5.3,2);
    \draw[fill_P_side_light] (A) -- (B) -- (C) [rounded corners]-- (D) -- (E) [sharp corners]-- cycle;
    \draw[\thL, postaction={decorate,decoration={markings,mark=at position 0.57 with {\arrow{>}}}}] 
        (A) -- (B) node[pos=.5, right] {$e_{k}$};
    \draw[\thL] (B) -- (C) node[pos=.8, right] {$e_{k+1}$};
    \draw[\thL, \cobs] (X) -- (B);
    \draw[\thL, \cobs] (X) -- (A);
    \pic [\thA, "$\eta_\ki$", draw, angle radius=.8cm, \cint] {angle=C--B--A}; %
    \pic [\thA, "$\beta_{\ki}$", draw, ->, angle radius=.8cm, \cobs] {angle=X--B--C}; 
    \pic [\thA, "$\beta_{\ki-1}$", draw, ->, angle radius=.8cm, angle eccentricity=1.35, xshift=-10, \cobs] {angle=X--A--B}; 
    \pic [\thA, "$\delta_\ki$", draw, angle radius=.8cm, \cgam] {angle=A--B--X}; 
    \pic [\thA, "$\alpha_\ki$", draw, <-, angle radius=1.1cm, angle eccentricity=.65, \csub] {angle=B--X--A}; 

    \fill (A) circle (\dsz) node[right] {$v_{\ki-1}$};
    \fill (B) circle (\dsz) node[below] {$v_{\ki}$};
    \fill (X) circle (\dsz) node[below] {$x$};

\end{tikzpicture} & \tikzsetnextfilename{mmConcave}
\begin{tikzpicture}[scale=1]
    \coordinate (X) at (0,0.5);
    \coordinate (A) at (1.5,2);
    \coordinate (B) at (3,0);
    \coordinate (C) at (2,-1);
    \coordinate (D) at (3.5,-1);
    \coordinate (E) at (4,0);
    \coordinate (F) at (3.5,2);
    \draw[fill_P_side_light] (A) -- (B) -- (C) [rounded corners]-- (D) -- (E) -- (F) [sharp corners]-- cycle;
    \draw[\thL, postaction={decorate,decoration={markings,mark=at position 0.5 with {\arrow{>}}}}] 
        (A) -- (B) node[pos=.5, right] {$e_{k}$};
    \draw[\thL] (B) -- (C) node[pos=.85, left] {$e_{k+1}$};
    \draw[\thL, \cobs] (X) -- (B);
    \draw[\thL, \cobs] (X) -- (A);
    \pic [\thA, "$\eta_\ki$", draw, angle radius=.7cm, \cint] {angle=C--B--A}; %
    \pic [\thA, "$\beta_{\ki}$", draw, ->, angle radius=.7cm, angle eccentricity=1.4, \cobs] {angle=X--B--C}; 
    \pic [\thA, "$\beta_{\ki-1}$", draw, ->, angle radius=1cm, \cobs] {angle=X--A--B}; 
    \pic [\thA, "$\delta_\ki$", draw, angle radius=.7cm, angle eccentricity=1.4, \cgam] {angle=A--B--X}; 
    \pic [\thA, "$\alpha_\ki$", draw, <-, angle radius=1cm, \csub] {angle=B--X--A}; 

    \fill (A) circle (\dsz) node[left] {$v_{\ki-1}$};
    \fill (B) circle (\dsz) node[below,xshift=2pt,yshift=2pt] {$v_{\ki}$};
    \fill (X) circle (\dsz) node[below] {$x$};

\end{tikzpicture}
    \end{tabular}

    \caption{Angles that are relevant for the vertex-edge pair $(v_k,e_{k+1})$, depending on the relative position of $x$ with respect to the edges incident with $v_k$ and on the convexity of the corner. Due to the choice of the cyclic order on $V(P)$, the polygon (shown in grey) is always on the left hand side when traversing $e_k$ from $v_{k-1}$ to $v_k$. Cases marked with \ding{55} are the ones that occur in \eqref{eq:inside_outside_bounded_simplified}.}
    \label{fig:angle_relations}
\end{figure}
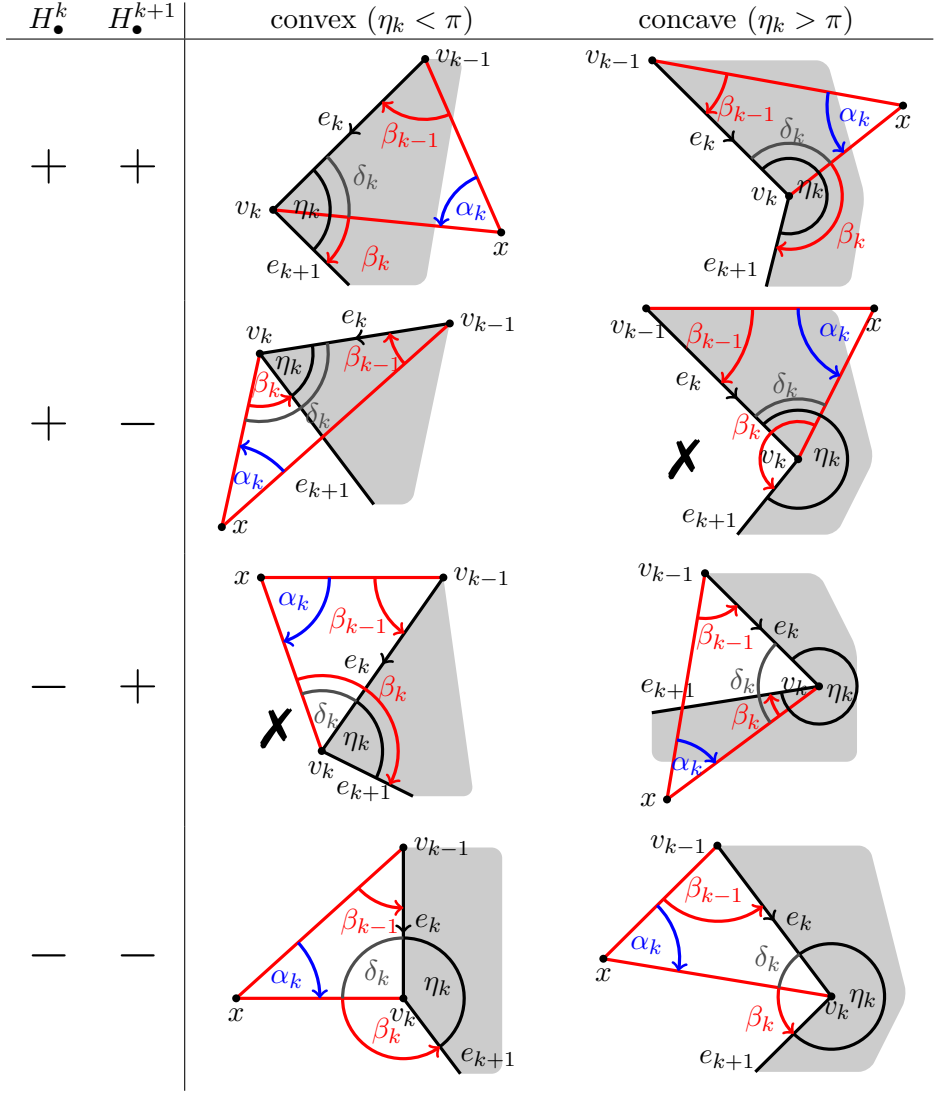

Now, consider the case $P=\cl{\exterior(\gamma)}$, i.e. $\gamma$ is a hole and $n_h(P)=1$.
Then, $P^c:=\overline{\RR^2\setminus P}=\cl{\interior(\gamma)}$ is a polygon with $\partial P^c = \partial P$, i.e. $V(P^c)=V(P)$ and $E(P^c)=E(P)$. 
By the definition of $P$-sides, we have $Q_{P^c}^v=\overline{\RR^2\setminus Q^v_P}$ and $H_{P^c}^e=\overline{\RR^2\setminus H^e_P}$.
Since $x$ is assumed to be in $P$-general position, this implies $\I{Q^v_P}(x)=1-\I{Q_{P^c}^v}(x)$ and $\ones_{H^e_P}(x)=1-\ones_{H_{P^c}^e}(x)$, as well as $\I{P}(x)=1-\I{P^c}(x)$.
Therefore,
\begin{align*}
    v_P(x)-e_P(x) &= \sum_{v\in V(P)} \ones_{Q^v_P}(x)-\sum_{e\in E_b(P)} \ones_{H^e_P}(x) \\
    &= \sum_{v\in V(P)} (1-\ones_{Q_{P^c}^v}(x))-\sum_{e\in E_b(P)} (1-\ones_{H_{P^c}^e}(x)) \\
    &= -\Big(\sum_{v\in V(P)} \I{Q_{P^c}^v}(x) - \sum_{e\in E_b(P)} \I{H_{P^c}^e}(x) \Big)
\end{align*}
where we have used that $|V(P)|=|E(P)|$ in the last step.
Since $P^c=\cl{\interior(\gamma)}$, we can apply the previous case to $P^c$, and get
\begin{align*}
    v_P(x)-e_P(x)-n_h(P) = 1-\I{P^c}(x)-1 = \I{P}(x)-1.
\end{align*}
This proves the remaining case of \eqref{eq:inside_outside_lem_bd}.
\end{proof}

\FloatBarrier
\subsubsection{Only arcs} \label{sec:inside_outside_arcs}
In this section, I prove the following special case of \autoref{lem:InsideOutside}.
\begin{lemma}
\label{lem:InsideOutsideArcs}
    Let $P$ be a polygon whose boundary consists of $n_a(P)\geq 1$ polygonal arcs. Let $x\in\RR^2$ be an arbitrary point in $P$-general position.
    Then, it holds that
    \begin{multline}\label{eq:inside_outside_arcs}
        \sum_{v\in V(P)}\I{Q^v_P}(x)+\sum_{e\in E_l(P)}\I{H^e_P}(x)-\sum_{e\in \Eb(P)}\I{H^e_P}(x)-n_a(P)
        \\
         = \I{P}(x)-1.
    \end{multline}
\end{lemma}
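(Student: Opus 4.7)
The plan is to reduce Lemma \ref{lem:InsideOutsideArcs} to the single-cycle case already proved in Lemma \ref{lem:InsideOutsideBounded} by truncating $P$ with a large bounding box. Fix $x$ in $P$-general position and choose $R > 0$ large enough that $x$ and every vertex of $P$ lie strictly inside the square $B_R := [-R,R]^2$, and that $\partial B_R$ crosses each ray of $E_r(P)$ transversely in exactly one non-corner point and each line of $E_l(P)$ transversely in exactly two such points, with all these intersection points pairwise distinct. Set $P' := \cl{P \cap \interior(B_R)}$, a bounded polygon whose boundary consists only of polygonal cycles.

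The key topological claim is that $\partial P'$ is a \emph{single} cycle. Since every boundary component of $P$ is an arc, i.e., homeomorphic to $\RR$ and extending to infinity, no finite union of arc pieces lying inside $B_R$ can form a closed polygonal curve; hence $\RR^2 \setminus P$ has no bounded component, $P'$ has no holes, and $\partial P'$ is a single cycle. Applying Lemma \ref{lem:InsideOutsideBounded} to $P'$ yields
\[
\sum_{v \in V(P')} \I{Q^v_{P'}}(x) - \sum_{e \in \Eb(P')} \I{H^e_{P'}}(x) = \I{P}(x) - 1,
\]
which I would then express in terms of the data of $P$. Decompose $V(P') = V(P) \cup V_\partial \cup V_c$, where $V_\partial$ consists of the $|E_r(P)| + 2|E_l(P)| = 2 n_a(P)$ intersections of rays and lines with $\partial B_R$, and $V_c$ of the corners of $B_R$ lying in $P$; and $\Eb(P') = \Eb(P) \cup \tilde E_r \cup \tilde E_l \cup E_B$, with $\tilde E_r$ and $\tilde E_l$ the truncated rays and lines and $E_B$ the maximal $P$-segments of $\partial B_R$ subdivided by the vertices of $P'$ lying on it. For $v \in V(P)$ and for $e \in \Eb(P) \cup \tilde E_r \cup \tilde E_l$, the local $P'$-sides equal the corresponding $P$-sides, so their indicators at $x$ are unchanged. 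For $v \in V_\partial$ lying on an edge $e$ of $P$, the $P'$-side near $v$ is the wedge formed by $H^e_P$ and the inward half-plane of $B_R$ at $v$; since $x$ is strictly inside $B_R$, its indicator equals $\I{H^e_P}(x)$. For $v \in V_c$ and $e \in E_B$, the $P'$-sides contain $x$, giving indicator value $1$. Since each ray contributes one vertex in $V_\partial$ whereas each line contributes two, the ray contributions cancel between vertex- and edge-sums while exactly one copy of each line contribution remains on the vertex side, producing the $+\sum_{e \in E_l(P)} \I{H^e_P}(x)$ term of \eqref{eq:inside_outside_arcs}.

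Finally, I would verify the combinatorial identity $|E_B| = n_a(P) + |V_c|$. Traversing $\partial B_R$ as a cycle, membership in $P$ toggles precisely at each of the $2 n_a(P)$ points of $V_\partial$, producing $n_a(P)$ maximal arcs of $\partial B_R$ that lie in $P$; an arc containing $k$ corners of $B_R$ is split by them into $k+1$ edges of $E_B$, and summation yields the identity. Substituting this into the previously derived equation reduces the correction $|V_c| - |E_B|$ to $-n_a(P)$, giving \eqref{eq:inside_outside_arcs}. The main obstacle I expect is the rigorous justification that $\partial P'$ is a single cycle, since multiple arcs may share vertices; a clean argument invokes the Jordan curve theorem together with the observation that any hole of $P'$ would have to be enclosed entirely inside $\interior(B_R)$ by subarcs, which cannot close into a cycle without endpoints on $\partial B_R$.
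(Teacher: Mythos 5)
Your approach is genuinely different from the paper's. The paper reduces to the single-cycle case (\autoref{lem:InsideOutsideBounded}) incrementally: it first splits lines into pairs of rays (\autoref{lem:splitLine}), shows that distinct arcs never share a vertex (\autoref{lem:arcInterstions}), and then in \autoref{lem:close_end} cuts off one unbounded end at a time with a three-segment path built from tangents to a disk, checking at each step that both sides of \eqref{eq:inside_outside_arcs} are preserved; the final statement follows by induction. You truncate $P$ with a single large box $B_R$ and do all the bookkeeping at once, which also absorbs the line case without a separate reduction. Your combinatorial accounting is correct: the $P'$-sides at $V_\partial$, $V_c$, $\tilde E_r$, $\tilde E_l$, $E_B$ are computed correctly, the ray contributions cancel while each line leaves one surplus copy giving $+\sum_{e\in E_l(P)}\I{H^e_P}(x)$, and $|E_B|=n_a(P)+|V_c|$ is a correct toggle argument (using that the boundary components of $P$ cross $\partial B_R$ transversally in exactly $2n_a(P)$ points).

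The gap is the topological step, which you anticipate but do not close. Two things must be established before \autoref{lem:InsideOutsideBounded} can be invoked: (a) $\interior P'$ is connected, so that $P'$ is actually a polygon, and (b) $\partial P'$ has no holes. Point (a) is not addressed at all; it is true for $R$ large but needs an argument (e.g.\ a path in $\interior P$ connecting two points of $\interior P'$ may leave $B_R$, and each excursion must be rerouted along $\partial B_R$ using that, once $V(P)\subset\interior B_{R_0}$ for some $R_0<R$, each component of $\interior P\setminus\cl{B_{R_0}}$ meets $\partial B_{R_0}$ in a single arc). For (b), the justification you sketch is not quite right: a hole $\gamma$ of $P'$ need not be formed solely from subarcs of $\partial P$ — it may contain segments of $\partial B_R$ — so ``subarcs cannot close into a cycle without endpoints on $\partial B_R$'' does not rule it out. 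The clean argument is: since $\gamma\subset\cl{B_R}$ and $\cl{B_R}$ is convex, $\interior(\gamma)\subset\cl{B_R}$, and one shows $\interior(\gamma)\cap P=\emptyset$; because $\partial P$ has no cycles, every component of $\RR^2\setminus P$ is unbounded, so a path in $\RR^2\setminus P$ from a point of $\interior(\gamma)$ to a point outside $B_R$ must cross $\gamma\subset\partial P'\subset P$, a contradiction. With (a) and (b) supplied, \autoref{lem:at_most_1_outer_cycle} gives the single cycle and the rest of your proof is sound. (Also note that distinct arcs cannot share a vertex by \autoref{lem:arcInterstions}, so the obstacle you mention in passing is already handled in the paper.) The paper's incremental route pays for avoiding these one-shot topological concerns with \autoref{lem:arcInterstions} and an induction; yours is more direct but needs the topological lemma above made precise.
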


The boundary of any polygon $P$ that satisfies the assumptions of the lemma contains no cycles and consequently no holes, i.e. $n_h(P)=0$. Moreover, as we will see in \autoref{lem:arcInterstions}, we again have $d(P)=0$. This gives $c(P)=1-n_a(P)$ and confirms that \autoref{lem:InsideOutsideArcs} is a special case of \autoref{lem:InsideOutside}.

As the next lemma shows, it is sufficient to restrict attention to polygons that contain no lines as boundary components.
\begin{definition}
	Let $P$ be a polygon, and let $l\in E_l(P)$ be a line. 
	Choose two rays $e_1$ and $e_2$ such that $e_1\cup e_2=l$ and $e_1\cap e_2 = \set{v}$ for some $v\in \RR^2$.
	The operation of redefining $V(l):=\set{v}$ and $E(l):=\set{e_1,e_2}$, and updating the vertex and edge sets of $P$ accordingly, is called \emph{splitting} the line $l$ at $v$.
\end{definition}
\begin{lemma}\label{lem:splitLine}
	For any polygon $P$, both sides of \eqref{eq:inside_outside_arcs} are invariant under the operation of splitting a line.
\end{lemma}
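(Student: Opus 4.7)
The plan is to track how every quantity appearing in \eqref{eq:inside_outside_arcs} is affected by the split of $l$ at $v$ into the two rays $e_1,e_2$, and to show that the individual changes cancel, leaving both sides invariant.

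First, I would observe that splitting does not alter $P$ as a subset of $\RR^2$; it only modifies the combinatorial description of its boundary. In particular, $\I{P}(x)$ on the right hand side is unchanged. Since $\aff(e_1)=\aff(e_2)=\aff(l)$, the set of points in $P$-general position is also the same before and after the split, so it makes sense to evaluate both versions of the equation at the same $x$.

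Next, I would check that the boundary components change as little as possible. Because the definition of a polygon requires distinct boundary components to meet only at common vertices, and because $l$ had no vertices before the split, no other boundary component can intersect $l$, and in particular not at $v$. After the split, the boundary component that was the line $l$ becomes the polygonal arc $e_1\cup e_2$ (homeomorphic to $\RR$, consisting of two rays meeting at their common vertex $v$), while all other boundary components are untouched. In particular, $n_a(P)$, $n_h(P)$, and $\deg_P(v)=2$ are all consistent with an unchanged $c(P)$.

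Finally, I would compare the three sums on the left hand side term by term. The sum over $E_l(P)$ loses the single term $\I{H^l_P}(x)$; the sum over $\Eb(P)$ is unchanged, because $e_1$ and $e_2$ are rays rather than line segments; and the sum over $V(P)$ gains the single term $\I{Q^v_P}(x)$. The main (though minor) obstacle is the local identification $Q^v_P=H^l_P$: picking a small disk $D$ around $v$, the set $\partial P\cap D$ coincides with the diameter $l\cap D$, so $P\cap D$ is the half-disk of $D$ lying on the $P$-side of $l$, and its conic hull from $v$ is precisely $H^l_P$. Hence the added vertex term equals the removed line term, the left hand side is invariant, and the lemma follows.
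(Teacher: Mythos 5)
Your proposal is correct and follows essentially the same route as the paper's proof: observe that the right-hand side and $n_a(P)$, $\Eb(P)$ are unchanged, and that the new vertex term $\I{Q_P^v}$ exactly replaces the removed line term $\I{H_P^l}$ because $Q_P^v=H_P^l$. Your extra observations (preservation of $P$-general position, the local half-disk justification of $Q_P^v=H_P^l$, and the remark that no other boundary component can meet $l$) are welcome details; the brief digression into $n_h(P)$ and $c(P)$ is harmless but unnecessary, since neither appears in \eqref{eq:inside_outside_arcs}.
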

\begin{proof}
	Let $l\in E_l(P)$ be a line that is split at $v\in l$.
	The right hand side of \eqref{eq:inside_outside_arcs} is invariant under the operation of splitting a line because it does not depend on the specific choice of vertices and edges.
	Additionally, the number of arcs $n_a(P)$ and the set of line segments $\Eb(P)$ remain unchanged by this operation.
	The $P$-side of $v$ is a half-plane that agrees with the $P$-side of $l$, see \autoref{fig:splitLine}. Therefore, $\I{Q_P^v}=\I{H_P^l}$, and since the operation adds $v$ to $V(P)$ and removes $l$ from $E_l(P)$, it does not affect the left hand side.
\end{proof}

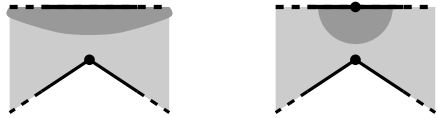
\begin{figure}[h!]
	\centering
	\def\rlab{.1}   %

\tikzsetnextfilename{splitLine}
\begin{tikzpicture}[very thick, scale = .7]

	\coordinate (lL) at (-1.5,1);
	\coordinate (lR) at (1.5,1);
	\coordinate (lM) at (0,1);
	\coordinate (M) at (0,0);
	\coordinate (bL) at (-1.5,-1);
	\coordinate (bR) at (1.5,-1);

	\draw[fill_P_side_light] (lL) -- (lR) -- (bR) -- (M) -- (bL) -- cycle;
	\draw[fill_P_side] plot [smooth] coordinates 
	{ (lL) (-1.5,.8) (-.5,.5) (.5,.5) (1.5,.8) (lR) } -- cycle;
	
	\draw[ultra thick, dashed] (lL)-- ($(lL)!.2!(lR)$);
	\draw[ultra thick] ($(lL)!.2!(lR)$) -- ($(lL)!.8!(lR)$);
	\draw[ultra thick, dashed] ($(lL)!.2!(lR)$) -- (lR);
	
	\draw[] (M) -- ($(M)!.6!(bL)$);
	\draw[dashed] ($(M)!.6!(bL)$) -- (bL) ;
	\draw[] (M) -- ($(M)!.6!(bR)$);
	\draw[dashed] ($(M)!.6!(bR)$) -- (bR) ;
	
	\fill (M) circle (\rlab);

	\begin{scope}[shift={(5,0)}]	
		\coordinate (lL) at (-1.5,1);
		\coordinate (lR) at (1.5,1);
		\coordinate (lM) at (0,1);
		\coordinate (M) at (0,0);
		\coordinate (bL) at (-1.5,-1);
		\coordinate (bR) at (1.5,-1);
	
		\draw[fill_P_side_light] (lL) -- (lR) -- (bR) -- (M) -- (bL) -- cycle;
		\pic [fill_P_side, angle radius=14pt] {angle=lL--lM--lR}; 
		
		\draw[ultra thick, dashed] (lL)-- ($(lL)!.2!(lR)$);
		\draw[ultra thick] ($(lL)!.2!(lR)$) -- ($(lL)!.8!(lR)$);
		\draw[ultra thick, dashed] ($(lL)!.2!(lR)$) -- (lR);
		
		\draw[] (M) -- ($(M)!.6!(bL)$);
		\draw[dashed] ($(M)!.6!(bL)$) -- (bL) ;
		\draw[] (M) -- ($(M)!.6!(bR)$);
		\draw[dashed] ($(M)!.6!(bR)$) -- (bR) ;
		
		\fill (M) circle (\rlab);
		\fill (lM) circle (\rlab);
	\end{scope}
	
\end{tikzpicture}
	\caption{A line of a polygon is split by a vertex. The propagation of the $P$-sides of the line and vertex are indicated in dark grey.}
	\label{fig:splitLine}
\end{figure}

The idea for the proof of \autoref{lem:InsideOutsideArcs} is to carefully modify the polygon by adding auxiliary edges and vertices in such a way that its boundary becomes a polygonal cycle to which \autoref{lem:InsideOutsideBounded} can be applied. 
To do so, we will cut off two rays and connect the remainder of the corresponding arcs with line segments.

The following lemma will ensure that the modified boundary components remain simple curves.
\begin{lemma}%
\label{lem:arcInterstions}
    Let $P$ be a polygon. For each vertex $v\in V(P)$, there is at most one polygonal arc containing $v$ that is a boundary component of $P$.
\end{lemma}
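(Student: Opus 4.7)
The plan is to suppose for contradiction that $v\in V(P)$ lies on two distinct polygonal arcs $\gamma_1,\gamma_2$ that are both boundary components of $P$, and derive a contradiction by combining the local structure at $v$ with a global topological count.

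At $v$, each of $\gamma_1,\gamma_2$ contributes exactly two incident edges (since each arc is homeomorphic to $\RR$ and $v$ is an interior point of each). Denote them $a,a'$ (from $\gamma_1$) and $b,b'$ (from $\gamma_2$). Because $\gamma_1\cap\gamma_2=\{v\}$ by the definition of polygon and each arc is a simple curve, these four edges are distinct and leave $v$ in four pairwise distinct directions (any coincidence would force an overlap beyond $v$). Thus they divide a small disk $D$ around $v$ into four open sectors.

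For the global count, pass to the one-point compactification $S^2=\RR^2\cup\{\infty\}$. Each polygonal arc is a properly embedded copy of $\RR$, so $\gamma_i\cup\{\infty\}$ is a simple closed curve on $S^2$, and the two resulting circles intersect exactly at $\{v,\infty\}$. The planar graph they define on $S^2$ has two vertices of degree four and four edges, so Euler's formula gives $F=2-V+E=4$. Each face is a bigon between $v$ and $\infty$ and hence contributes exactly one corner at $v$, yielding a bijection between the four sectors at $v$ and the four connected components of $\RR^2\setminus(\gamma_1\cup\gamma_2)$.

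Since $\interior P$ is connected and disjoint from $\partial P\supseteq\gamma_1\cup\gamma_2$, it lies in a single component and thus meets at most one sector at $v$; call it $S^\ast$. For each edge $e\in\{a,a',b,b'\}$, points of $e$ arbitrarily close to but distinct from $v$ have a neighbourhood in which $\partial P$ coincides with $e$, so one of $e$'s two adjacent sectors is contained in $\interior P$ locally; by continuity along $e$ this side must equal $S^\ast$. Hence $S^\ast$ would have to be adjacent to all four edges at $v$, but any single sector is bounded by only two of the four edges --- contradiction. The delicate step is the bijection between sectors and global components; the $S^2$ compactification plus Euler's formula is the clean way to establish it, and avoids a case analysis on interleaved vs.\ non-interleaved cyclic orderings of $\{a,a',b,b'\}$ at $v$ which would otherwise have to be combined with the Jordan-style statement that each proper arc separates $\RR^2$ into two components.
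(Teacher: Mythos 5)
Your proof is correct and takes a genuinely different route from the paper's. The paper works with a disk $D$ large enough to contain all of $V(P)$, so that only the four rays of $\gamma_1$ and $\gamma_2$ leave $D$ and cut $\RR^2\setminus D$ into four far-away sectors; it picks points of $\interior P$ in two different sectors, joins them by a path $\lambda\subset\interior P$, and shows that a subpath of $\lambda$ crossing $D$ must intersect a path in $\cl{D}\cap\partial P$ joining the two rays that bound one sector, because the four endpoints interleave on $\partial D$ (a Jordan/planarity crossing argument). Your proof works locally at $v$: you compactify to $S^2$ so that $C_i=\gamma_i\cup\{\infty\}$ are Jordan curves meeting only at $\{v,\infty\}$, apply Euler's formula to the two-vertex, four-edge multigraph they form (giving $F=4$), and note that since every edge joins $v$ to $\infty$ and no edge is a bridge, each face degree is even and at least two, while they sum to $2E=8$, so all four faces are bigons; this yields the bijection between sectors at $v$ and components of $\RR^2\setminus(\gamma_1\cup\gamma_2)$. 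Combined with the observation that $\interior P$ must abut each incident edge near $v$ yet lie in a single component, this forces one sector to be adjacent to all four edges, which is impossible. The Euler count cleanly replaces the paper's explicit crossing argument and, as you note, sidesteps any case split on the cyclic order of the four directions at $v$; the only step worth spelling out fully in your write-up is the even-degree/no-bridge bookkeeping that forces the bigon structure.
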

\begin{proof}
    Assume that $\gamma_1$ and $\gamma_2$ are two polygonal arcs that intersect at a vertex $v\in V(P)$.
    Let $D$ be a disk large enough such that $V(P)\subset D$.
    Due to the convexity of the disk, all line segments are contained in $D$.
    If $e\in E(P)$ is a ray, $e$ intersects $\partial D$, since its vertex is contained in $D$ and $e$ is unbounded. Moreover, there is exactly one such intersection because $e$ is straight.
    Therefore, $\gamma_1$ and $\gamma_2$ create four intersection points of $\partial P$ with $\partial D$, and subdivide $\RR^2\setminus D$ into four sectors (see \autoref{fig:intersecting_intersections}).
    Let $y_1,y_2\in P$ be points that are contained in different sectors $S_1$ and $S_2$. Then, from the connectedness of $\interior P$, it follows that there is a path $\lambda\subset\interior P$ connecting $y_1$ with $y_2$. $\lambda$ is not allowed to intersect $\gamma_i\subset\partial P$, i.e., it must pass through $D$.
    Therefore, there are two points $x_1,x_2\in\partial D$, with $x_2\in S_2$ and $x_1\not\in S_2$, that are connected by a segment $\lambda'\subset\lambda$ completely contained within $D$.
    If $r_3,r_4$ are the rays that define the sector $S_2$, let $x_3,x_4$ be their respective intersections with $\partial D$. Then, since $v\in\gamma_1\cap\gamma_2$, there is also a path $\gamma'\subset \cl{D}\cap \partial P$ with endpoints $x_3$ and $x_4$.
    By compactness of $\lambda$, we can assume that $\lambda'$ consists of finitely many line segments, i.e. $\lambda'\cup\gamma'$ forms a planar graph, embedded in $D$.
    Since the intersection points occur in the order $x_1,x_3,x_2,x_4$ cyclically on $\partial D$, $\lambda'$ and $\gamma'$ must intersect, see e.g. \citet[Ch. 71]{schrijver2003}. 
    Thus, there is a point $x\in\lambda$ that lies on the boundary of $P$, which is a contradiction.
\end{proof}

\begin{figure}
    \centering
    \def\rBR{2.5}     %
\def\rlab{.1}   %

\newcommand{\arcDleft}{ (d) -- (xd) -- (170:.5*\rBR) -- (v) }
\newcommand{\arcDright}{ (v) -- (20:.4*\rBR) -- (55:.6*\rBR) -- (xa) -- (a) }
\newcommand{\arcD}{ \arcDleft -- \arcDright }
\newcommand{\arcBright}{ (b) -- (xb) -- (-40:.3*\rBR) -- (-55:.5*\rBR) -- (275:.65*\rBR) -- (v) }
\newcommand{\arcBleft}{ (v) -- (xc) -- (c) }
\newcommand{\arcB}{ \arcBright -- \arcBleft }

\newcommand{\plotConnPath}{plot [smooth] coordinates 
	{ (y1) (-10:1.1*\rBR) (-5:.9*\rBR) (-15:.8*\rBR) (10:.7*\rBR) (25:1.2*\rBR) (35:1.2*\rBR) (45:.9*\rBR) (35:.6*\rBR) (85:.5*\rBR)
		(90:.7*\rBR) (80:1.1*\rBR) (100:1.2*\rBR) (110:1.2*\rBR) (115:.9*\rBR) (105:.8*\rBR) (105:.75*\rBR) (120:.65*\rBR)
		(100:.25*\rBR) (145:.2*\rBR) (165:.4*\rBR) (190:.55*\rBR) (190:.7*\rBR) (175:.85*\rBR) (170:1.1*\rBR) (170:1.3*\rBR)
		(y2) }}

\tikzsetnextfilename{arcsIntersection}
\begin{tikzpicture}[very thick,
       tangent/.style = {gray, thick, dashed},
       scale = .7
    ]
    \tikzset{cross/.style={cross out, draw=black, minimum size=2*(#1-\pgflinewidth),        inner sep=0pt, outer sep=0pt},
        cross/.default={3pt}}

    \coordinate (O) at (0,0);
    \coordinate (v) at (220:.2*\rBR);
    \coordinate (a) at (60:\rBR);
    \coordinate (b) at (-30:\rBR);
    \coordinate (c) at (225:\rBR);
    \coordinate (d) at (145:\rBR);
    \coordinate (y1) at (-30:1.3*\rBR);
    \coordinate (y2) at (158:1.4*\rBR);
    \coordinate (xa) at (80:.7*\rBR);
    \coordinate (xb) at (-5:.5*\rBR);
    \coordinate (xc) at (230:.7*\rBR);
    \coordinate (xd) at (130:.7*\rBR);
    \coordinate (ea) at ($(xa)!2.5!(a)$);
    \coordinate (eb) at ($(xb)!2!(b)$);
    \coordinate (ec) at ($(xc)!2.5!(c)$);
    \coordinate (ed) at ($(xd)!3.0!(d)$);

    \draw[thick, name path=circle] (O) circle (\rBR);

    \draw[name path=gammaAleft] \arcDleft;
    \draw[postaction={decorate,decoration={markings,mark=at position 0.7 with {\node[right] {$\gamma'$};}}}] 
    	\arcBleft;
    \draw[semithick, name path=gammaAright] \arcDright;
    \draw[semithick] \arcBright;
    \draw[semithick] (a) -- (ea);
    \draw[semithick] (b) -- (eb);
    \draw[semithick] (c) -- (ec);
    \draw[semithick] (d) -- (ed);

    \draw[draw=none, name path=connPath, 
    	postaction={decorate,decoration={markings,mark=at position 0.26 with {\node[right] {$\lambda$};}}}] 
    	\plotConnPath;
        
    \begin{scope}[on background layer]
    	\draw[Pcolor, line width=4pt] \plotConnPath;
    \end{scope}

	\path[draw, intersection segments={of=connPath and circle,sequence={L6}}];
    \path[draw, semithick, intersection segments={of=connPath and circle,sequence={L1}}];
    \path[draw, semithick, intersection segments={of=connPath and circle,sequence={L2}}];
    \path[draw, semithick, intersection segments={of=connPath and circle,sequence={L3}}];
    \path[draw, semithick, intersection segments={of=connPath and circle,sequence={L4}}];
    \path[draw, semithick, intersection segments={of=connPath and circle,sequence={L5}}];
    \path[draw, semithick, intersection segments={of=connPath and circle,sequence={L7}}];

    \begin{scope}[on background layer]

        \coordinate (y21) at ($(xd)$);
        \coordinate (y22) at ($(xd)!2.3!(d)$);
        \coordinate (y23) at ($(xd)!3!(d)$);
        \pic [fill_P_side, angle radius=10pt] {angle=y23--y22--y21};

		\coordinate (y11) at ($(xb)$);
		\coordinate (y12) at ($(xb)!1.5!(b)$);
		\coordinate (y13) at ($(xb)!3!(b)$);
		\pic [fill_P_side, angle radius=10pt] {angle=y13--y12--y11}; 
        
        \draw[pattern color=Pcolor,
        	rounded corners=2pt, draw=none,
        	pattern={Hatch[angle=45,distance={5pt/sqrt(2)},xshift=.1pt]},
        	decoration={random steps,segment length=7pt,amplitude=3pt}] 
        	(a) -- (ea) decorate { to [bend left=60] (eb) } -- (b) arc (-30:60:\rBR);
        \draw[pattern color=Pcolor,
        	rounded corners=2pt, draw=none,
        	pattern={Hatch[angle=45,distance={5pt/sqrt(2)},xshift=.1pt]},
        	decoration={random steps,segment length=7pt,amplitude=3pt}] 
        	(c) -- (ec) decorate { to [bend left=60] (ed) } -- (d) arc (145:225:\rBR);
    \end{scope}

    \fill (v) circle (\rlab) node[right, yshift=-2pt] {$v$};
    
    \fill (c) circle (\rlab) node[below] {$x_4$};
    \fill (d) circle (\rlab) node[above left] {$x_3$};

    \path [name intersections={of=connPath and circle, name=inter}];
    \fill (inter-5) circle (\rlab) node[left, yshift=1pt] {$x_1$};
    \fill (inter-6) circle (\rlab) node[below left] {$x_2$};

    \path [name intersections={of=connPath and gammaAleft, name=i, total=\t}]
        \foreach \s in {1,...,\t}{(i-\s) node[cross,rotate=55] (cross\s) {}};
    \node[below] at (cross1) {$x$};
    
    \fill (y1) circle (\rlab) node[right] {$y_1$};
    \fill (y2) circle (\rlab) node[left] {$y_2$};
    
    \node[fill=white, circle, inner sep=2pt] at (5:1.3*\rBR) {$S_1$};
    \node[fill=white, circle, inner sep=2pt] at (200:1.4*\rBR) {$S_2$};
    
\end{tikzpicture}
    \caption{Two arcs subdivide $\RR^2\setminus D$ into four sectors. $y_1,y_2\in P$ are contained in different sectors and are connected by a path $\lambda$ which enters and leaves each sector only through $\partial D$.
    	If the arcs share a vertex in $D$, then $\lambda$ must intersect one of the arcs.}
    \label{fig:intersecting_intersections}
\end{figure}

Next, I will constructively show that, for fixed $x$, the number of polygonal arcs in $\partial P$ can be reduced without affecting \eqref{eq:inside_outside_arcs}.

\begin{lemma}\label{lem:close_end}
    Let $P$ be a polygon whose boundary consists of $n_a(P)\geq 1$ polygonal arcs without lines, and let $x\in\RR^2$ be in $P$-general position. 
    Then, there is a polygon $P'$, also without lines and with $x$ in $P'$-general position, such that $\partial P'$ consists of $n_a(P')=n_a(P)-1$ arcs and both sides of \eqref{eq:inside_outside_arcs} are preserved at $x$. That is, \begin{equation}\label{eq:lem_merge_arcs_rhs}
    	\I{P}(x)-1=\I{P'}(x)-1, %
    \end{equation}
    and,
    \begin{multline}\label{eq:lem_merge_arcs_lhs}
    	\sum_{v\in V(P)}\I{Q^v_P}(x)-\sum_{e\in \Eb(P)}\I{H^e_P}(x)-n_a(P)\\
    	= \sum_{v\in V(P')}\I{Q^v_{P'}}(x)-\sum_{e\in \Eb(P')}\I{H^e_{P'}}(x)-n_a(P') .%
    \end{multline}
    Moreover, if $n_a(P)=1$, then $P'$ is bounded and $\partial P'$ is a cycle.
\end{lemma}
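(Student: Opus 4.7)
The plan is to modify $P$ far from $x$ by truncating two rays at new vertices and joining the truncated ends with a polygonal path, merging two arcs of $\partial P$ into one (or, when $n_a(P)=1$, closing the single arc into a cycle). Fix a closed disk $D$ containing $x$ and $V(P)$ in its interior. If $n_a(P)\ge 2$, pick two distinct arcs $\gamma_1,\gamma_2\subseteq\partial P$; if $n_a(P)=1$, take $\gamma_1=\gamma_2$ to be the unique arc. Select a ray $r_i\subseteq\gamma_i$ with initial vertex $v_i$ (two distinct rays of the same arc when $n_a(P)=1$); by \autoref{lem:arcInterstions} this selection is unambiguous. Choose $p_i\in r_i\setminus D$ and a polygonal path $\pi\subset\RR^2\setminus D$ from $p_1$ to $p_2$ such that $P'$, obtained by deleting the parts of $r_1,r_2$ beyond $p_1,p_2$ and attaching the edges of $\pi$, is a polygon whose interior is connected. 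With generic choice of $p_i$ and of the vertices of $\pi$, the point $x$ remains in $P'$-general position, since only finitely many affine hulls must be avoided. By construction $n_a(P')=n_a(P)-1$, and in the case $n_a(P)=1$ the new polygon is bounded with a single polygonal cycle as boundary.

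Because all modifications happen outside $D$ while $x\in\interior D$, we have $P\cap D=P'\cap D$, which gives $\I{P}(x)=\I{P'}(x)$, hence \eqref{eq:lem_merge_arcs_rhs}. For \eqref{eq:lem_merge_arcs_lhs}, observe that the vertices $v_1,v_2$ retain the same incident edges locally (the segment $\overline{v_ip_i}$ is a sub-segment of $r_i$), so their $Q$-sides are unchanged, and the rays $r_1,r_2$ never appear in the LHS sum. Thus the net change $\Delta$ of the LHS is
\[
\Delta=\!\!\sum_{v\in V(\pi)\cup\{p_1,p_2\}}\!\!\I{Q^v_{P'}}(x)\;-\!\!\sum_{e\in E(\pi)\cup\{\overline{v_1p_1},\overline{v_2p_2}\}}\!\!\I{H^e_{P'}}(x)\;+\;1,
\]
and the task reduces to showing $\Delta=0$.

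The main obstacle is verifying this identity. I would first reduce to the case where $\pi$ consists of a single segment $\overline{p_1p_2}$, using that inserting a collinear intermediate vertex along an edge of $\pi$ turns the edge into two edges and adds one vertex whose $Q$-side is the adjacent $H$-side; the three indicator contributions cancel, leaving $\Delta$ unchanged. This step is in the same spirit as \autoref{lem:splitLine}. Once $\pi=\overline{p_1p_2}$, only $p_1,p_2$, the three new segments $\overline{v_1p_1},\overline{p_1p_2},\overline{v_2p_2}$, and the constant $+1$ appear in $\Delta$. Because $\overline{v_ip_i}$ is collinear with $r_i$ and on the $P$-side of $r_i$, the contributions at each $p_i$ reduce to a ``bend'' computation analogous to the convex/concave analysis of \autoref{tab:sign_convexity_cases}, with $\overline{p_1p_2}$ and $\overline{v_ip_i}$ playing the roles of the incident edges. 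Summing the two bends yields exactly $-1$, cancelling the $+1$ coming from $-n_a$ and giving $\Delta=0$. This is structurally the same cancellation as the relation $\beta_{k-1}-\beta_k=\alpha_k+\eta_k-\pi+\cdots$ in the proof of \autoref{lem:InsideOutsideBounded}, now evaluated at only two vertices rather than summed around a full cycle.
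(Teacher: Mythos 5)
Your proposal follows the same plan as the paper: truncate two rays outside a disk $D$ containing $\set{x}\cup V(P)$, join the cut ends by a polygonal path $\pi\subset\RR^2\setminus D$, and verify that the change $\Delta$ in the left-hand side of \eqref{eq:inside_outside_arcs} vanishes. However, the construction is left unspecified where it matters. You do not constrain which two rays to pick, nor argue that a suitable path $\pi$ exists making $P'$ a polygon with connected interior; if the chosen rays are separated outside $D$ by other rays or arcs of $\partial P$, a path in $\RR^2\setminus D$ joining $p_1$ and $p_2$ without crossing $\partial P$ need not exist. The paper selects the two rays whose intersections $a,b$ with $\partial D$ bound a circular arc of $\partial D$ that is contained in $P$, so that nothing else lies between them, and routes the closing path along three tangent lines of $D$. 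Your proposed reduction of $\pi$ to the single segment $\overline{p_1p_2}$ via collinear insertion is also backwards: collinear subdivision shows that adding degenerate vertices to a given path preserves $\Delta$, but it cannot straighten a bent path into a segment, and conversely a single segment need not yield a valid polygon.

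The most serious gap is that the final bend computation is asserted rather than shown. The contributions of $p_1$, $p_2$ and the three new segments sum to $-1$ only if $x$ lies on the $P'$-side of $\overline{p_1p_2}$ and the corners at $p_1,p_2$ are convex; otherwise $\I{Q^{p_i}_{P'}}(x)$ and $\I{H^{\overline{v_ip_i}}_{P'}}(x)$ do not cancel, and you would face the full eight-case analysis of \autoref{tab:sign_convexity_cases} at each of $p_1$ and $p_2$. This is precisely what the paper's tangent construction avoids: every new segment $e$ of the closing path is tangent to $D$ with $D\subset H^e_{P'}$, hence $\I{H^e_{P'}}(x)=1$ and $\I{Q^{a'}_{P'}}(x)=\I{Q^{b'}_{P'}}(x)=1$; the corners at the endpoints $a,b$ are forced to be convex, so $\I{Q^a_{P'}}(x)=\I{H^{s_a}_{P'}}(x)$ and likewise at $b$; and then $\Delta$ telescopes to zero with no cases. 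Adopting the tangent construction would close all three gaps simultaneously.
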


\begin{proof}
    Let $D$ be a disk, centered at $0$ and large enough such that $\set{x}\cup V(P)\subset D$.

    Convexity of $D$ implies that all line segments are contained in $D$.
    If $e\in E(P)$ is a ray, $e$ intersects $\partial D$, since its vertex is contained in $D$ and $e$ is unbounded. 
    Moreover, the rays intersect $\partial D$ non-tangentially, so they subdivide $\partial D$ into circular arcs whose interior is alternately contained in $P$ and in its complement.

	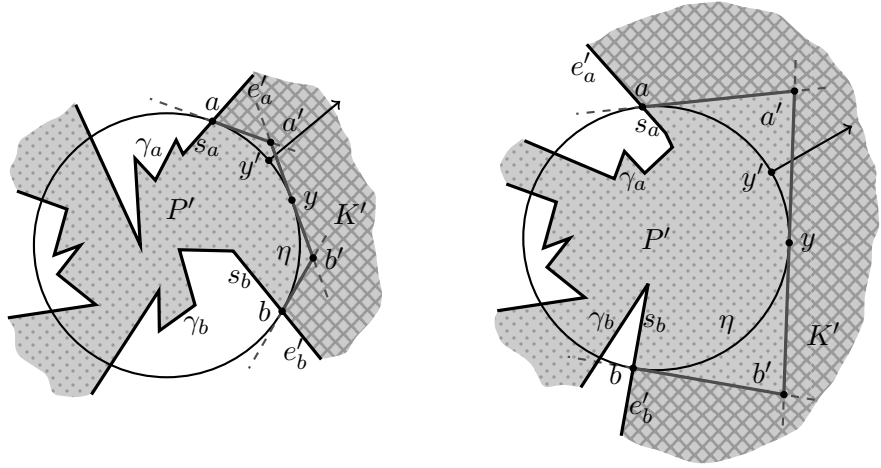
\begin{figure}
		\centering
		\begin{tabular}{C{5.7cm} C{6cm}}
			\def\rBR{2.5}     %
\def\rlab{.07}   %

\definecolor{closingColor}{gray}{0.3}

\newcommand{\arcF}{ (f) -- (xf) -- (110:.7*\rBR) -- (100:.5*\rBR) -- (85:.8*\rBR) -- (xa) -- (a) }
\newcommand{\arcB}{ (b) -- (xb) -- (-20:.1*\rBR) -- (-65:.5*\rBR) -- (265:.65*\rBR) -- (xc) -- (c) }
\newcommand{\arcD}{ (d) -- (xd) -- (195:.85*\rBR) -- (180:.65*\rBR) -- (185:.85*\rBR)  -- (xe) -- (e) }

\tikzsetnextfilename{closeEndSmall}
\begin{tikzpicture}[very thick,
       tangent/.style = {closingColor, thick, dashed},
       scale = .7
    ]
    \coordinate (O) at (0,0);
    \coordinate (a) at (70:\rBR);
    \coordinate (b) at (-30:\rBR);
    \coordinate (y) at (20:\rBR);
    \coordinate (y') at (40:\rBR);
    \coordinate (c) at (245:\rBR);
    \coordinate (d) at (210:\rBR);
    \coordinate (e) at (160:\rBR);
    \coordinate (f) at (125:\rBR);
    \coordinate (xa) at (80:.7*\rBR);
    \coordinate (xb) at (-5:.5*\rBR);
    \coordinate (xc) at (260:.35*\rBR);
    \coordinate (xd) at (220:.7*\rBR);
    \coordinate (xe) at (160:.8*\rBR);
    \coordinate (xf) at (180:.2*\rBR);
    \coordinate (ea) at ($(xa)!2.4!(a)$);
    \coordinate (eb) at ($(xb)!1.8!(b)$);
    \coordinate (ec) at ($(xc)!1.4!(c)$);
    \coordinate (ed) at ($(xd)!2!(d)$);
    \coordinate (ee) at ($(xe)!2!(e)$);
    \coordinate (ef) at ($(xf)!1.3!(f)$);

    \draw[thick] (O) circle (\rBR);

    \draw[->, thick] (y') -- ($(O)!1.7!(y')$);

    \draw[tangent,name path=tk]     (a) -- ($(a)!.7!90:(O)$);  %
    \draw[tangent]                  (a) -- ($(a)!.5!-90:(O)$);   %
    \draw[tangent]                  (b) -- ($(b)!.5!90:(O)$);  
    \draw[tangent,name path=tkp]    (b) -- ($(b)!.7!-90:(O)$);  
    \draw[tangent,name path=tk0l]   (y) -- ($(y)!.8!-90:(O)$);  
    \draw[tangent,name path=tk0r]   (y) -- ($(y)!.8!90:(O)$); 

    \draw[closingColor, 
            name intersections={of=tk and tk0l, by=a0},
            name intersections={of=tkp and tk0r, by=b0}] 
        (a) -- (a0) -- (b0) -- (b);

    \draw (ef) -- \arcF -- (ea);
    \draw (eb) -- \arcB -- (ec);
    \draw (ed) -- \arcD -- (ee);

    \begin{scope}[on background layer]
        \draw[preaction={fill, PcolorLight}, pattern color=Pcolor,
            rounded corners=2pt, draw=none, line width=1pt,
            pattern={Hatch[angle=45,distance={6pt/sqrt(2)},xshift=.1pt]},
            decoration={random steps,segment length=7pt,amplitude=3pt}]  
            (ea) -- (a) -- (a0) -- (b0) -- (b) -- (eb) decorate { to [bend right=90] (ea) };
        
        \draw[preaction={fill, PcolorLight}, pattern color=Pcolor,
            rounded corners=2pt, draw=none,
            pattern={Dots[angle=45,distance={5pt/sqrt(2)},radius=.6pt]},
            decoration={random steps,segment length=7pt,amplitude=3pt}]  
            (ef) -- \arcF -- (a0) -- (b0) -- \arcB -- (ec) decorate { to [bend left=40] (ed) } -- \arcD -- (ee) decorate { to [bend left=40] (ef) };
    \end{scope}

    \fill (a) circle (\rlab) node[above] {$a$};
    \fill (b) circle (\rlab) node[left] {$b$};
    \fill (y) circle (\rlab) node[right] {$y$};
    
    \fill[] (a0) circle (\rlab) node[above right] {$a'$};   %
    \fill[] (b0) circle (\rlab) node[right] {$b'$};         %

    \node at ($(a)-(.1,.5)$) {$s_a$};
    \node at ($(b)+(-.8,.6)$) {$s_b$};
    
    \node at ($(a)+(.9,.6)$) {$e_a'$};
    \node at ($(b)+(.25,-.8)$) {$e_b'$};

    \node at (100:.75*\rBR) {$\gamma_a$};
    \node at (-70:.65*\rBR) {$\gamma_b$};    
    
    \node at ($.8*(b0)$) {$\eta$};

    \fill[] (y') circle (\rlab) node[left, inner sep=2pt, yshift=-2pt] {$y'$};    
    
    \node at (70:.3*\rBR) {$P'$};
    \node at (10:1.4*\rBR) {$K'$};
    
\end{tikzpicture}&
			\def\rBR{2.5}     %
\def\rlab{.07}   %

\definecolor{closingColor}{gray}{0.3}

\newcommand{\arcF}{ (f) -- (xf) -- (110:.7*\rBR) -- (100:.5*\rBR) -- (80:.7*\rBR) -- (xa) -- (a) }
\newcommand{\arcB}{ (b) -- (xb) -- (xc) -- (c) }
\newcommand{\arcD}{ (d) -- (xd) -- (195:.85*\rBR) -- (180:.65*\rBR) -- (185:.85*\rBR)  -- (xe) -- (e) }

\tikzsetnextfilename{closeEndConcave}
\begin{tikzpicture}[very thick, bezier bounding box,
       tangent/.style = {closingColor, thick, dashed},
       scale = .7
    ]
    \coordinate (O) at (0,0);
    \coordinate (a) at (96:\rBR);
    \coordinate (b) at (260:\rBR);
    \coordinate (y) at (-2:\rBR);
    \coordinate (y') at (30:\rBR);
    \coordinate (c) at (245:\rBR);
    \coordinate (d) at (210:\rBR);
    \coordinate (e) at (160:\rBR);
    \coordinate (f) at (140:\rBR);
    \coordinate (xa) at (85:.8*\rBR);
    \coordinate (xb) at (260:.35*\rBR);
    \coordinate (xc) at (xb);
    \coordinate (xd) at (220:.7*\rBR);
    \coordinate (xe) at (160:.8*\rBR);
    \coordinate (xf) at (125:.55*\rBR);
    \coordinate (ea) at ($(xa)!3.4!(a)$);
    \coordinate (eb) at ($(xb)!1.8!(b)$);
    \coordinate (ec) at ($(xc)!1.4!(c)$);
    \coordinate (ed) at ($(xd)!2!(d)$);
    \coordinate (ee) at ($(xe)!2!(e)$);
    \coordinate (ef) at ($(xf)!1.5!(f)$);

    \draw[thick] (O) circle (\rBR);
    
    \draw[->, thick] (y') -- ($(O)!1.7!(y')$);

    \draw[tangent,name path=tk]     (a) -- ($(a)!1.4!90:(O)$);  %
    \draw[tangent]                  (a) -- ($(a)!.5!-90:(O)$);   %
    \draw[tangent]                  (b) -- ($(b)!.5!90:(O)$);  
    \draw[tangent,name path=tkp]    (b) -- ($(b)!1.4!-90:(O)$);  
    \draw[tangent,name path=tk0l]   (y) -- ($(y)!1.4!-90:(O)$);  
    \draw[tangent,name path=tk0r]   (y) -- ($(y)!1.4!90:(O)$); 

    \draw[closingColor, 
            name intersections={of=tk and tk0l, by=a0},
            name intersections={of=tkp and tk0r, by=b0}] 
        (a) -- (a0) -- (b0) -- (b);

    \draw (ef) -- \arcF -- (ea);
    \draw (eb) -- \arcB -- (ec);
    \draw (ed) -- \arcD -- (ee);

    \begin{scope}[on background layer]
        \coordinate (c1) at (-52:3.5*\rBR);
        \coordinate (c2) at (51:4*\rBR);
        \draw[preaction={fill, PcolorLight}, pattern color=Pcolor,
            rounded corners=2pt, draw=none, line width=1pt,
            pattern={Hatch[angle=45,distance={6pt/sqrt(2)},xshift=.1pt]},
            decoration={random steps,segment length=7pt,amplitude=1.5pt}
            ]  
            (b0) -- (b) -- (eb) decorate { .. controls (c1) and (c2) .. (ea) } -- (a) -- (a0) -- cycle ;
            
        \draw[preaction={fill, PcolorLight}, pattern color=Pcolor,
            rounded corners=2pt, draw=none,
            pattern={Dots[angle=45,distance={5pt/sqrt(2)},radius=.6pt]},
            decoration={random steps,segment length=7pt,amplitude=3pt}]  
            (ef) -- \arcF -- (a0) -- (b0) -- \arcB -- (ec) decorate { to [bend left=40] (ed) } -- \arcD -- (ee) decorate { to [bend left=40] (ef) };
    \end{scope}
            
    \fill (a) circle (\rlab) node[above] {$a$};
    \fill (b) circle (\rlab) node[left, yshift=-2pt] {$b$};
    \fill (y) circle (\rlab) node[right] {$y$};
    \fill[] (a0) circle (\rlab) node[below left] {$a'$};   %
    \fill[] (b0) circle (\rlab) node[above left] {$b'$};         %
    
    \node at ($(a)+(.1,-.4)$) {$s_a$};
    \node at ($(b)+(.4,.85)$) {$s_b$};
    
    \node at ($(a)+(-1.1,.75)$) {$e_a'$};
    \node at ($(b)+(.15,-.75)$) {$e_b'$};
    
    \node at (112:.46*\rBR) {$\gamma_a$};
    \node at (-123:.75*\rBR) {$\gamma_b$}; 
    
    \node at ($.55*(b0)$) {$\eta$};
    
    \fill[] (y') circle (\rlab) node[left, inner sep=2pt, yshift=-2pt] {$y'$};    
    
    \node at (O) {$P'$};
    \node at (-30:1.45*\rBR) {$K'$};
    
\end{tikzpicture} 
			\vspace{-1em}
		\end{tabular}
		\caption{The circular arc $\eta\subset P$ between $a$ and $b$ can be approximated by a path $\gamma'$ (dark grey) of three line segments that are contained in tangents. 
		This path can be used to cut off an unbounded area $K'\subset P$ (hatched) from $P$ to form a new polygon $P'$ (dotted).}
		\label{fig:closingEnd}
	\end{figure}

    Let $a,b\in\partial P\cap\partial D$ be the intersection points of two distinct rays $e_a,e_b\in E(P)$ such that the circular arc $\eta$ from $a$ to $b$ in clockwise direction is contained in $P$, cf. \autoref{fig:closingEnd}.
    We subdivide these rays into a line segment $s_a:=e_a\cap \cl{D}$, resp. $s_b:=e_b\cap \cl{D}$, and a ray $e_a':=e_a\setminus D$, resp. $e_b':=e_b\setminus D$.
    Define $a'$ and $b'$ as the intersection points of the tangents to $D$ at $a$ and $b$ with the tangent at the midpoint $y$ of $\eta$ (they do intersect because $y$ cannot be the negative of $a$ and $b$).

    Define $\gamma':=\cl{aa'}\cup\cl{a'b'}\cup\cl{b'b}$, and $K$ as the connected component of $P\setminus D$ between the rays $e_a'$ and $e_b'$.
    Then, due to the choice of $y$ as the midpoint of $\eta$, we have that $\gamma'\subset K$, but $e\cap K=\emptyset$ for all edges $e\in E(P)\setminus\set{e_a,e_b}$ and for $e\in\set{s_a,s_b}$.
    Let $K'$, be the connected component of $\RR^2\setminus (e_a'\cup e_b'\cup \gamma')$ that is contained in $K$, and define $P':=\cl{P\setminus K'}$.
    Then, $P'$ is again a closed set with connected interior. For the connectedness, consider any two points in $P'$. These can be connected by a path in the interior of $P$ that can only enter or leave $K'$ through $\gamma'$. Therefore, the segments of the path that run through $K'$ can be shortened via a path in the interior of $P'$ close to $\gamma'$.
    
    Since $K'\cap D=\emptyset$, we have $P'\cap D=P\cap D$, verifying \eqref{eq:lem_merge_arcs_rhs}.
    
    Let $\gamma_a$ and $\gamma_b$ be the polygonal arcs of $P$ that contain the rays $e_a$ and $e_b$. With
    \begin{equation*}
    	\gamma := (\gamma_a\cup \gamma_b)\setminus(e_a \cup e_b) \cup s_a \cup s_b \cup \gamma',
    \end{equation*}
    we can describe the boundary of $P'$ as \begin{equation} \label{eq:boundary_P'}
        \partial P'=(\partial P\setminus (\gamma_a\cup\gamma_b)) \cup\gamma.
    \end{equation}
    Before addressing \eqref{eq:lem_merge_arcs_lhs}, we prove that $n_a(P')=n_a(P)-1$ and that $P'$ is bounded with its boundary being a cycle if $n_a(P)=1$. 

    \textbf{Case 1:} Assume that $n_a(P)=1$.\\
    In that case, $\gamma_a=\gamma_b$, and $\gamma$ becomes a polygonal cycle.
    Since $\partial P$ consists only of arcs, and $\gamma_a$ is the only arc, we get from \eqref{eq:boundary_P'} that $\partial P'=\gamma$, and $n_a(P')=n_a(P)-1=0$.
    Moreover, for any $y'\in\eta\setminus\set{a,b,y}$, the ray $\set{\rho y':\rho\in[1,\infty)}$ intersects $\gamma$ for exactly one $\rho>0$, since $\gamma\setminus D=\gamma'$. Thus, an initial segment of that ray, and in particular $y'$, is contained in $\interior(\gamma)$. Since $y'\in P'$, we have $P'\subseteq \interior(\gamma)$, thus $P'$ is bounded.
   
    \textbf{Case 2:} Assume that $n_a(P)\geq 2$.\\
    Assume for the sake of contradiction that $\gamma_a=\gamma_b$, i.e. $\gamma_a\cap \partial D =\set{a,b}$.
    Then, as before, $\gamma$ becomes a cycle with $P'\subset \interior(\gamma)$.
    This contradicts the presence of additional polygonal arcs other than $\gamma_a$, as arcs are unbounded by definition.
    Thus, $\gamma_a\neq\gamma_b$, and $\gamma$ is a polygonal arc, due to \autoref{lem:arcInterstions}.
    Together with \eqref{eq:boundary_P'}, this implies $n_a(P')=n_a(P)-1$.
    
    To prove \eqref{eq:lem_merge_arcs_lhs}, note that in either case, $P'$ is a polygon with \begin{equation}\label{eq:P'_expand_V}
        V(P')=V(P)\cup \set{a,b,a',b'},
    \end{equation}
    and \begin{equation}\label{eq:P'_expand_E}
        \Eb(P')=\Eb(P)\cup\set{s_a,\cl{aa'},\cl{a'b'},\cl{b'b},s_b}.
    \end{equation}
    Since $s_a\subset e_a$, $s_b\subset e_b$, and $\cl{aa'},\cl{a'b'},\cl{b'b}$ are tangential to $D$, $x$ is also in $P'$-general position.
    
    The definition of $P$-sides is local in the sense that they are fully determined by any neighbourhood of the vertex or edge under consideration.
    Since $V(P),\Eb(P)\subset D$, the disk $D$ is a neighbourhood of all vertices and line segments. Therefore, $P'\cap D=P\cap D$ implies that 
    \begin{equation}\label{eq:P'_and_P_equal_in_D}
    	Q^v_{P'}=Q^v_{P} \quad\forall v\in V(P)
    	\quad\text{and}\quad 
    	H^e_{P'}=H^e_{P} \quad\forall e\in \Eb(P).
    \end{equation}
    Each line segment $e\in\set{\cl{aa'},\cl{a'b'},\cl{b'b}}$ is tangent to $D$ with $D\subset H^e_{P'}$. Since $x\in D$, we get that 
    \begin{equation}\label{eq:P'_side_tangents}
    	x\in H^e_{P'} \text{ for } e\in \set{\cl{aa'},\cl{a'b'},\cl{b'b}}
    	\quad\text{and}\quad 
    	x\in Q^v_{P'} \text{ for } v\in\set{a',b'}.
    \end{equation}
    This, together with \begin{equation*}
    	Q^a_{P'} = H_{P'}^{s_a}\cap H_{P'}^{\cl{aa'}} 
    	\quad\text{and} \quad
    	Q^b_{P'} = H_{P'}^{s_b}\cap H_{P'}^{\cl{b'b}},
    \end{equation*} 
    yields that
    \begin{equation}\label{eq:P'_side_equivalence}
    	x\in Q^a_{P'} \Leftrightarrow x\in H^{s_a}_{P'}
    	\quad\text{and}\quad 
    	x\in Q^b_{P'} \Leftrightarrow x\in H^{s_b}_{P'}.
    \end{equation}
    Using \eqref{eq:P'_expand_V} and \eqref{eq:P'_expand_E}, we split the sums on the right hand side of \eqref{eq:lem_merge_arcs_lhs}
    \begin{align*}
        \sum_{v\in V(P')}\hspace{-1em}&\hspace{1em}\I{Q^v_{P'}}(x)-\sum_{e\in \Eb(P')}\I{H^e_{P'}}(x)-n_a(P') \\ 
        &= \sum_{v\in V(P)}\I{Q^v_{P'}}(x)+\sum_{v\in \set{a,b}}\I{Q^v_{P'}}(x)+\sum_{v\in \set{a',b'}}\I{Q^v_{P'}}(x)-\sum_{e\in \Eb(P)}\I{H^e_{P'}}(x)\\
        &\quad-\sum_{e\in \set{s_a,s_b}}\I{H^e_{P'}}(x)-\sum_{e\in \set{\cl{aa'},\cl{a'b'},\cl{b'b}}}\I{H^e_{P'}}(x)-n_a(P)+1
    \end{align*}
    Applying first \eqref{eq:P'_and_P_equal_in_D} and \eqref{eq:P'_side_tangents}, and then \eqref{eq:P'_side_equivalence}, gives
    \begin{align*}
    	\sum_{v\in V(P')}\hspace{-1em}&\hspace{1em}\I{Q^v_{P'}}(x)-\sum_{e\in \Eb(P')}\I{H^e_{P'}}(x)-n_a(P') \\ 
        &= \sum_{v\in V(P)}\I{Q^v_{P}}(x)-\sum_{e\in \Eb(P)}\I{H^e_{P}}(x)+\sum_{v\in \set{a,b}}\I{Q^v_{P'}}(x)-\sum_{e\in \set{s_a,s_b}}\I{H^e_{P'}}(x)\\
        &\quad+2-3-n_a(P)+1\\
        &= \sum_{v\in V(P)}\I{Q^v_P}(x)-\sum_{e\in \Eb(P)}\I{H^e_P}(x)-n_a(P),
    \end{align*}
    which is \eqref{eq:lem_merge_arcs_lhs}.
\end{proof}

\begin{corollary}\label{cor:arc_reduction_to_cycle}
    Let $P$ be a polygon whose boundary consists of $n_a(P) \geq 1$ polygonal arcs without lines. 
    For any $x\in\RR^2$ in $P$-general position, there is a bounded polygon $P'$ such that $\partial P'$ is a cycle, such that $x$ is in $P'$-general position, and such that both sides of \eqref{eq:inside_outside_arcs} are preserved at $x$.
\end{corollary}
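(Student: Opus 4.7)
The plan is a direct induction on $n_a(P)$ using \autoref{lem:close_end} as the induction step. Since the lemma produces a new polygon $P'$ with strictly fewer arcs, no lines, with $x$ still in $P'$-general position, and with both sides of \eqref{eq:inside_outside_arcs} preserved, iterating the construction will eventually terminate with a polygon whose boundary is a single cycle.

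More concretely, I would argue as follows. Set $P_0 := P$ and $n := n_a(P)$. For $k = 0, 1, \dots, n-1$, as long as $n_a(P_k) \geq 1$ and $P_k$ has no lines, apply \autoref{lem:close_end} to $P_k$ at the point $x$ to obtain a polygon $P_{k+1}$ with $n_a(P_{k+1}) = n_a(P_k) - 1$, no lines, with $x$ in $P_{k+1}$-general position, and such that both sides of \eqref{eq:inside_outside_arcs} at $x$ agree for $P_k$ and $P_{k+1}$. (The absence of lines in $P_{k+1}$ follows because the construction in the proof of \autoref{lem:close_end} only modifies rays into line segments and never introduces lines.) By a straightforward induction on $k$, both sides of \eqref{eq:inside_outside_arcs} at $x$ for $P_k$ agree with those for $P_0 = P$, and $x$ remains in $P_k$-general position throughout.

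After $n-1$ iterations, the polygon $P_{n-1}$ satisfies $n_a(P_{n-1}) = 1$. Applying \autoref{lem:close_end} one final time, the last part of its conclusion tells us that $P' := P_n$ is bounded and that $\partial P'$ is a polygonal cycle. By the inductive preservation along the chain $P = P_0, P_1, \dots, P_n = P'$, both sides of \eqref{eq:inside_outside_arcs} at $x$ take the same value for $P'$ as for $P$, and $x$ is in $P'$-general position. This yields the claimed $P'$.

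I do not expect any serious obstacle: all the geometric work has already been done in \autoref{lem:close_end}, and this corollary is essentially just the statement that the reduction process can be iterated until termination. The only small point to verify is that the construction does not introduce lines into the boundary at any step, which is immediate from the fact that the only modifications made are truncating rays to line segments and adding three new line segments connecting tangent points.
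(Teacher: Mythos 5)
Your proof is correct and takes essentially the same approach as the paper, which simply states that the corollary follows directly from Lemma \ref{lem:close_end} by induction. One minor note: you need not re-derive from the construction that $P_{k+1}$ has no lines, since the statement of Lemma \ref{lem:close_end} already asserts this as part of its conclusion.
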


\begin{proof}
    This follows directly from \autoref{lem:close_end} by induction.
\end{proof}

We are now in position to prove \autoref{lem:InsideOutsideArcs}.

\begin{proof}[Proof of \autoref{lem:InsideOutsideArcs}]
By \autoref{lem:splitLine}, we can assume that $E(P)$ contains no lines, otherwise, we split them into two rays. Therefore, we have to prove
\begin{equation}\label{eq:inside_outside_arcs_noLines}
	\sum_{v\in V(P)}\I{Q^v_P}(x)-\sum_{e\in \Eb(P)}\I{H^e_P}(x)-n_a(P)
	= \I{P}(x)-1.
\end{equation}
Let $x\in\RR^2$ in $P$-general position be fixed.
Then, we can apply \autoref{cor:arc_reduction_to_cycle}, and get
a bounded polygon $P'$ whose boundary is a cycle, and with $x$ in $P'$-general position, such that
\begin{equation}\label{eq:arcs_cycle_reduction_rhs}
    \I{P}(x)-1=\I{P'}(x)-1,
\end{equation}
and 
\begin{multline}\label{eq:arcs_cycle_reduction_lhs}
    \sum_{v\in V(P)}\I{Q^v_P}(x)-\sum_{e\in \Eb(P)}\I{H^e_P}(x)-n_a(P)\\
     = \sum_{v\in V(P')}\I{Q^v_{P'}}(x)-\sum_{e\in \Eb(P')}\I{H^e_{P'}}(x).
\end{multline}
Since $\partial P'$ is a single polygonal cycle, and $x$ is in $P'$-general position, we can apply \autoref{lem:InsideOutsideBounded} to the right hand side of \eqref{eq:arcs_cycle_reduction_lhs}, and get
\begin{equation*}
    \sum_{v\in V(P)}\I{Q^v_P}(x)-\sum_{e\in \Eb(P)}\I{H^e_P}(x)-n_a(P)
     = n_h(P')+\I{P'}(x)-1.
\end{equation*}
Since $P'$ is bounded and has only one boundary component, $P'$ has no holes, i.e. $n_h(P')=0$. Together with \eqref{eq:arcs_cycle_reduction_rhs}, this proves \eqref{eq:inside_outside_arcs_noLines}, and thus \autoref{lem:InsideOutsideArcs}.
\end{proof}

\subsubsection{General case}\label{sec:inside_outside_final}
Finally, the remaining cases of \autoref{lem:InsideOutside} can be proved using the results from the previous sections.
First of all, the next lemma notes that all cycles of a polygon must describe holes except for possibly one.
\begin{lemma}\label{lem:at_most_1_outer_cycle}
    Let $P$ be a polygon and let $\gamma_1$ and $\gamma_2$ be distinct polygonal cycles that are boundary components of $P$. 
    Then, at least one $\gamma_i$ is a hole.
\end{lemma}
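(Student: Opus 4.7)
The plan is to argue by contradiction, assuming that neither $\gamma_1$ nor $\gamma_2$ is a hole of $P$, and to derive from this that $\gamma_1 = \gamma_2$, contradicting their distinctness. The main tools are the Jordan curve theorem, the connectedness of $\interior(P)$, and the fact that the bounded component $\interior(\gamma_i)$ is bounded while the unbounded component $\exterior(\gamma_i)$ is path-connected and unbounded.

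First I would translate the hypothesis into a geometric statement. Since $\gamma_i \subseteq \partial P$ is disjoint from $\interior(P)$, and since $\interior(P)$ is connected, the Jordan curve theorem implies that $\interior(P)$ lies entirely in either $\interior(\gamma_i)$ or $\exterior(\gamma_i)$. In the second case, taking closures gives $P \subseteq \cl{\exterior(\gamma_i)}$, which is exactly the definition of $\gamma_i$ being a hole. Therefore ``$\gamma_i$ is not a hole'' is equivalent to $P \subseteq \cl{\interior(\gamma_i)}$. Under the contradiction hypothesis, we obtain in particular $\gamma_2 \subseteq P \subseteq \cl{\interior(\gamma_1)}$ and, symmetrically, $\gamma_1 \subseteq \cl{\interior(\gamma_2)}$.

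The heart of the proof is the following intermediate claim: if a polygonal cycle $\gamma_2$ is contained in $\cl{\interior(\gamma_1)}$, then so is the bounded region $\interior(\gamma_2)$. To show this, I would assume there is some $q \in \interior(\gamma_2)$ with $q \notin \cl{\interior(\gamma_1)}$, i.e.\ $q \in \exterior(\gamma_1)$. Since $\exterior(\gamma_1)$ is unbounded and path-connected, one can choose a path $\pi \subseteq \exterior(\gamma_1)$ from $q$ to infinity. Because $\interior(\gamma_2)$ is bounded, $\pi$ must leave it, and hence it must cross $\gamma_2$. Any such crossing point lies both in $\gamma_2 \subseteq \cl{\interior(\gamma_1)}$ and in $\exterior(\gamma_1) = \RR^2 \setminus \cl{\interior(\gamma_1)}$, which is a contradiction. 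Consequently $\interior(\gamma_2) \subseteq \cl{\interior(\gamma_1)}$, and together with $\gamma_2 \subseteq \cl{\interior(\gamma_1)}$ this gives $\cl{\interior(\gamma_2)} \subseteq \cl{\interior(\gamma_1)}$.

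Applying the same claim with the roles of $\gamma_1$ and $\gamma_2$ swapped yields $\cl{\interior(\gamma_1)} \subseteq \cl{\interior(\gamma_2)}$, so the two closed Jordan domains coincide. Taking topological boundaries on both sides gives $\gamma_1 = \partial \cl{\interior(\gamma_1)} = \partial \cl{\interior(\gamma_2)} = \gamma_2$, contradicting the hypothesis that $\gamma_1$ and $\gamma_2$ are distinct. The main subtlety to check is that $\gamma_1$ and $\gamma_2$ may share finitely many common vertices (as permitted by the polygon definition), but this causes no harm because each $\gamma_i$ individually remains a simple closed curve, so the Jordan curve theorem applies separately to each and the argument goes through unchanged.
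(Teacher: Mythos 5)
Your proof is correct and takes essentially the same route as the paper: argue by contradiction, deduce $P\subseteq\cl{\interior(\gamma_i)}$ for both $i$, show the two closed Jordan domains must coincide, and conclude $\gamma_1=\gamma_2$. The only difference is cosmetic: where the paper cites simple connectedness of $\cl{\interior(\gamma_1)}$ to get $\cl{\interior(\gamma_2)}\subseteq\cl{\interior(\gamma_1)}$, you unpack the same topological fact via an explicit path-to-infinity argument inside $\exterior(\gamma_1)$.
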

\begin{proof}
	Suppose that neither $\gamma_1$ nor $\gamma_2$ is a hole. Then, $P\subseteq C_i:=\cl{\interior(\gamma_i)}$ for $i=1,2$.
	In particular, $\gamma_2\subset C_1$, and since $C_1$ is homeomorphic to a closed disk, it is simply connected and therefore also $C_2\subseteq C_1$.
	Identically, one can show $C_1\subseteq C_2$. This means that $C_1=C_2$ and thus $\gamma_1=\gamma_2$, which is a contradiction.
\end{proof}
For a general polygon $P$, multiple boundary components may intersect in a common vertex.
The following lemma provides a way to describe the $P$-side of a vertex in terms of simpler components.
\begin{lemma}\label{lem:VertexOverlaps}
    Let $P$ be a polygon, and $v\in V(P)$. %
    Let $\gamma_1,\dots,\gamma_d$ be the $d\in \NN$ boundary components of $P$ that contain $v$, i.e. $v\in V(\gamma_k)$ for all $k\in[d]$.
    Assume that there are polygons $P_1,\dots,P_d$ such that $\gamma_k$ is the only boundary component of $P_k$ that contains $v$. 
    If $P\subseteq P_k$ for all $k\in[d]$, then
    \begin{equation}\label{eq:sum_cones}
        \sum_{k\in[d]}\I{Q_{P_k}^v} = d-1+\I{Q^v_P}.
    \end{equation}
\end{lemma}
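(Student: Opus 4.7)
The plan is to analyse the problem locally at $v$ through the cyclic arrangement of the edges of $P$ incident to $v$, and to use the hypothesis $P\subseteq P_k$ to translate it into a combinatorial statement about angular sectors. First, I would pick an open disk $D$ centred at $v$ small enough that $D\cap\partial P$ consists only of the edges of $P$ incident to $v$ and, for each $k$, $D\cap\partial P_k$ consists only of the two edges of $\gamma_k$ at $v$. Since every $\gamma_k$ contributes exactly two incident edges, the boundary of $P$ near $v$ is the union of $2d$ edges, which I label $e_1,\dots,e_{2d}$ in cyclic order, and I denote by $S_i$ the open sector of $D$ between $e_i$ and $e_{i+1}$ (indices modulo $2d$). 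By the definition of $P$-sides, $Q^v_P\cap D=P\cap D$ and $Q^v_{P_k}\cap D=P_k\cap D$, so whether a point $x$ lies in $Q^v_P$ or $Q^v_{P_k}$ depends only on which sector the direction from $v$ to $x$ points into.

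The first key observation is that the $P$-interior sectors and the $P$-exterior sectors alternate around $v$. Indeed, each $e_i$ lies in $\partial P$, and at any non-vertex point of $e_i$ the polygon $P$ lies on exactly one side; so of the two adjacent sectors $S_{i-1}$ and $S_i$ exactly one is contained in $\interior P$. There are thus exactly $d$ ``$P$-sectors'' and $d$ ``non-$P$-sectors'', occupying alternating positions cyclically. Next, since $\gamma_k$ is the only boundary component of $P_k$ through $v$, the cone $Q^v_{P_k}$ equals one of the two angular regions bounded by the two $\gamma_k$-edges at $v$, which is a contiguous cyclic block of the sectors $S_i$. The inclusion $P\subseteq P_k$ forces this block to contain all $d$ alternating $P$-sectors. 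Any contiguous cyclic block of fewer than $2d-1$ sectors has a complement containing at least two cyclically consecutive sectors, one of which must be a $P$-sector by alternation; this would contradict $P\subseteq P_k$. Hence the $P_k$-block omits exactly one sector $S_{m(k)}$, which is a non-$P$-sector, and the two edges of $\gamma_k$ at $v$ are precisely $e_{m(k)}$ and $e_{m(k)+1}$.

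Since an edge belongs to a unique boundary component, the map $k\mapsto m(k)$ is injective, and as both $[d]$ and the set of non-$P$ sectors have size $d$, it is a bijection. To conclude, evaluate both sides of \eqref{eq:sum_cones} at any $x$ whose direction from $v$ avoids all the edges at $v$; then $x$ lies in some sector $S_i$. If $S_i$ is a $P$-sector, $\I{Q^v_P}(x)=1$ and $x\in Q^v_{P_k}$ for every $k$, giving $d=d-1+1$. If $S_i$ is a non-$P$-sector, $\I{Q^v_P}(x)=0$ and by the bijection $x\in Q^v_{P_k}$ for exactly $d-1$ values of $k$, giving $d-1=d-1+0$. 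Both sides are locally constant off the edges, so the identity holds where both indicator functions are defined. The main obstacle is the step that pins down the size and position of the $P_k$-block: translating the set-theoretic inclusion $P\subseteq P_k$ into the combinatorial statement that each $P_k$-block omits exactly one non-$P$ sector bordered by $\gamma_k$'s own edges. This is precisely where the hypothesis does essential work, as it forces the a priori unconstrained pairing of the $2d$ edges into $d$ boundary components to be a ``non-crossing'' matching at $v$, which in turn makes the final counting transparent.
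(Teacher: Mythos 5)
Your proof is correct and takes essentially the same approach as the paper: reduce to the local picture in a small disk $D$ around $v$, note that the $2d$ sectors between the incident edges alternate between $P$ and $P^c$, use $P\subseteq P_k$ to pin $P_k^c\cap D$ to exactly one non-$P$ sector bounded by the two $\gamma_k$-edges, observe that the resulting assignment of components to omitted sectors is a bijection, and count. The only cosmetic difference is that the paper concludes ``$P_k^c\cap D$ equals one of the sectors of $P^c\cap D$'' directly from $P_k^c\cap D\subseteq P^c\cap D$ together with contiguity, whereas you obtain the same fact by a short contradiction against the alternation pattern.
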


\begin{proof}	
    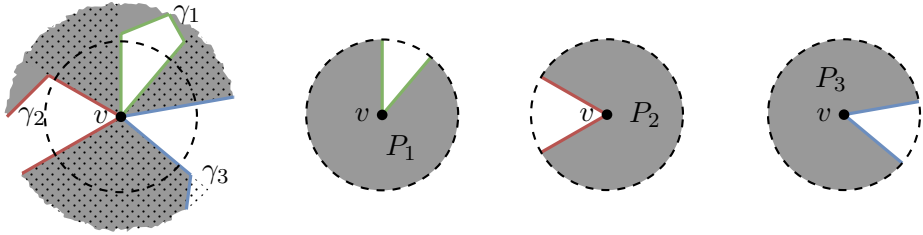
\begin{figure}[h]
        \centering
        \def\rBR{2.5}     %
\def\rlab{.07}   %

\def\aa{10}
\def\ab{50}
\def\ac{90}
\def\ad{150}
\def\ae{210}
\def\af{320}

\definecolor{c1}{HTML}{82B366}
\definecolor{c2}{HTML}{B85450}
\definecolor{c3}{HTML}{6C8EBF}

\begin{tabular}{C{3.4cm} C{2.6cm} C{2.6cm} C{2.6cm}}
\tikzsetnextfilename{P_side_vertex_star}
\begin{tikzpicture}[very thick]
    \coordinate (v) at (0,0);
    \coordinate (e1) at (\aa:1.5);
    \coordinate (e2) at (\ab:1.3);
    \coordinate (e3) at (\ac:1.1);
    \coordinate (e4) at (\ad:1.1);
    \coordinate (e5) at (\ae:1.5);
    \coordinate (e6) at (\af:1.2);
    \coordinate (o1) at (\aa:1.5);
    \coordinate (o2) at (65:1.5);
    \coordinate (o3) at (65:1.5);
    \coordinate (o4) at (180:1.5);
    \coordinate (o5) at (\ae:1.5);
    \coordinate (o6) at (305:1.5);
    
    \draw[c3] (v) -- (e1) ;
    \draw[c1] (v) -- (e2) ;
    \draw[c1] (v) -- (e3) ;
    \draw[c2] (v) -- (e4) ;
    \draw[c2] (v) -- (e5) ;
    \draw[c3] (v) -- (e6) ;
    \draw[c3] (e1) -- (o1) ;
    \draw[c1] (e2) -- (o2) ;
    \draw[c1] (e3) -- (o3) ;
    \draw[c2] (e4) -- (o4) ;
    \draw[c2] (e5) -- (o5) ;
    \draw[c3] (e6) -- (o6) ;

    \begin{scope}[on background layer]
        \draw[fill_P_side, rounded corners=.4pt, decoration={random steps,segment length=1.3pt,amplitude=1pt}] 
            (v) -- (e1) -- (o1)  decorate { arc (\aa:65:1.5) } -- (e2) --  cycle;
        \draw[fill_P_side, rounded corners=.4pt, decoration={random steps,segment length=1.3pt,amplitude=1pt}] 
            (v) -- (e3) -- (o3)  decorate { arc (65:180:1.5) } -- (e4) --  cycle;
        \draw[fill_P_side, rounded corners=.4pt, decoration={random steps,segment length=1.3pt,amplitude=1pt}] 
            (v) -- (e5) -- (o5)  decorate { arc (\ae:305:1.5) } -- (e6) --  cycle;
    \end{scope}

    \begin{scope}[on background layer]
        \draw[pattern={Dots[angle=45,distance={4pt/sqrt(2)},radius=.4pt]}, draw=none]  
                (v) -- (\aa:1.5) arc (\aa:\ab:1.5) -- cycle;
        \draw[pattern={Dots[angle=45,distance={4pt/sqrt(2)},radius=.4pt]}, draw=none]  
                (v) -- (\ac:1.5) arc (\ac:\ad:1.5) -- cycle;
        \draw[pattern={Dots[angle=45,distance={4pt/sqrt(2)},radius=.4pt]}, draw=none]  
                (v) -- (\ae:1.5) arc (\ae:\af:1.5) -- cycle;
    \end{scope}

    \fill (v) circle (\rlab) node[left, inner sep=4pt] {$v$};

    \draw[thick,dashed] (v) circle (1) ;
    
    \node[right, inner sep=1pt] at (o3) {$\gamma_1$};
    \node[right] at (o4) {$\gamma_2$};
    \node[right] at (e6) {$\gamma_3$};

\end{tikzpicture}
&
\tikzsetnextfilename{pacman_P1}
\begin{tikzpicture}[very thick]
    \coordinate (v) at (0,0);
    \coordinate (e2) at (\ab:1);
    \coordinate (e3) at (\ac:1);
    
    \draw[c1] (v) -- (e2) ;
    \draw[c1] (v) -- (e3) ;

    \begin{scope}[on background layer]
        \draw[fill_P_side]%
        (v) -- (e3) decorate { arc (90:410:1) } -- cycle;        
    \end{scope}

    \draw[thick,dashed] (v) circle (1) ;

    \fill (v) circle (\rlab) node[left, inner sep=4pt] {$v$};

    \node[] at (300:.5) {$P_1$};%
\end{tikzpicture}
&
\tikzsetnextfilename{pacman_P2}
\begin{tikzpicture}[very thick]
    \coordinate (v) at (0,0);
    \coordinate (e4) at (\ad:1);
    \coordinate (e5) at (\ae:1);
    
    \draw[c2] (v) -- (e4) ;
    \draw[c2] (v) -- (e5) ;

    \begin{scope}[on background layer]
        \draw[fill_P_side]%
        (v) -- (e5) decorate { arc (210:510:1) } -- cycle;        
    \end{scope}

    \draw[thick,dashed] (v) circle (1) ;

    \fill (v) circle (\rlab) node[left, inner sep=4pt] {$v$};

    \node[] at (0:.5) {$P_2$}; %
\end{tikzpicture}
&
\tikzsetnextfilename{pacman_P3}
\begin{tikzpicture}[very thick]
    \coordinate (v) at (0,0);
    \coordinate (e6) at (\af:1);
    \coordinate (e1) at (\aa:1);
    
    \draw[c3] (v) -- (e6) ;
    \draw[c3] (v) -- (e1) ;

    \begin{scope}[on background layer]
        \draw[fill_P_side]%
        (v) -- (e1) decorate { arc (10:320:1) } -- cycle;        
    \end{scope}

    \draw[thick,dashed] (v) circle (1) ;

    \fill (v) circle (\rlab) node[left, inner sep=4pt] {$v$};

    \node[] at (110:.5) {$P_3$};%
\end{tikzpicture}
\end{tabular}
        \caption{
        Three boundary components $\gamma_1,\gamma_2,\gamma_3$ of $P$ (grey) intersect at vertex $v$, thus the $P$-side $Q^v_P$ (dotted), as well as its complement, are split into three circular sectors. In the interior of the dashed circle, $Q^v_P$ and $P$ agree. Since $P\subset P_k$, the two edges of component $\gamma_k$ must enclose a sector of the complement.
        }
        \label{fig:vertexStar}
    \end{figure}
    By the definition of the $P$-side, it is sufficient to consider \eqref{eq:sum_cones} on a disk around $v$. 
    Let $D$ be a small disk centered at $v$ such that $D$ intersects only those edges of $P$ and $P_k$, $k\in[d]$, that contain $v$, cf. \autoref{fig:vertexStar}.
    Then, by the local properties of $P$-sides, we have $Q^v_P\cap D= P\cap D$ and $Q^v_{P_k}\cap D= P_{k}\cap D$.
    Thus, it suffices to show that \begin{equation}\label{eq:sum_cones_disk}
        \sum_{k\in[d]}\I{P_{k}}(x) = d-1+\I{P}(x)\quad \forall x\in D.
    \end{equation}

    The edges of $P$ incident with $v$ subdivide $D$ into circular sectors whose interior is alternately contained in $P\cap D$ and $P^c\cap D$.
    There are exactly two
    incident edges per $\gamma_k$, i.e., there are $2d$ sectors.
    Additionally, the edges of $\gamma_k$ define two sectors, namely $P_k\cap D$ and $P_k^c\cap D$, because $\gamma_k$ is the only boundary component of $P_k$ containing $v$. 
    Since $P\subseteq P_k$, it follows that $P_k^c\cap D\subseteq P^c\cap D$. Thus, $P_k^c\cap D$ equals one of the sectors of $P^c\cap D$, and $P_k\cap D$ contains all the other sectors.
     
    Given that the $\gamma_k$ are disjoint (except at $v$), each of the $d$ sectors belonging to $P^c\cap D$ equals $P_k^c\cap D$ for exactly one $k\in [d]$ and is contained in $P_l$ for all $l\in[d]\setminus\set{k}$.
    Therefore, $\sum_{k\in[d]}\I{P_{k}}(x)=d-1$ for all $x\in P^c\cap D$.
    Since $P\subseteq P_k$, it directly follows that $\sum_{k\in[d]}\I{P_{k}}(x)=d$ for all $x\in P\cap D$, which proves \eqref{eq:sum_cones_disk}.
\end{proof}

Now that the necessary preliminaries are in place, the proof of the main lemma follows.

\begin{proof}[Proof of \autoref{lem:InsideOutside}]
To simplify notation, the $P$-dependency of the number of holes is dropped by defining $h:=n_h(P)$. Let $\gamma_1,\dots,\gamma_h$ be the boundary components of $P$ that describe holes.
Then, \begin{equation*}%
    P':=P\cup \bigcup_{k\in[h]} \interior(\gamma_k)
\end{equation*}
is a polygon that satisfies
\begin{equation}\label{eq:Indicators_P'}
    \I{P'}-1=-n_a(P)+\sum_{v\in V(P')}\I{Q_{P'}^v}+\sum_{e\in E_l(P')}\I{H_{P'}^e}-\sum_{e\in E_b(P')}\I{H_{P'}^e}
\end{equation}
for all $x$ in $P'$-general position.
Indeed, that $P'$ is a polygon is trivial, and \eqref{eq:Indicators_P'} can be checked for the following cases separately:
\begin{enumerate}
    \item Case: $P$ is unbounded and contains some arcs as boundary components. \\
    Since $P$ is unbounded, any polygonal cycle of its boundary must be a hole. This means that all boundary components that are left in $P'$ are polygonal arcs. With $n_a(P)=n_a(P')$, \autoref{lem:InsideOutsideArcs} gives \eqref{eq:Indicators_P'}.
    \item Case: $P$ is unbounded but does not contain any arcs in the boundary.\\
    In this case, all boundary components are holes and $n_a(P)=0$. Therefore, we have $P'=\RR^2$ with $V(P')=E(P')=\emptyset$ and $\I{P'} = 1$, verifying \eqref{eq:Indicators_P'}.
    \item Case: $P$ is bounded.\\
    As $P$ cannot contain an arc as a boundary component, all boundary components are cycles. If all of those cycles would be holes, $P$ would be unbounded, i.e. there must be at least one cycle $\gamma_o$ with $P\subseteq\cl{\interior(\gamma_o)}$. Due to \autoref{lem:at_most_1_outer_cycle}, all other cycles must be holes.
    Therefore, $P'=\cl{\interior(\gamma_o)}$, and \eqref{eq:Indicators_P'} follows from \autoref{lem:InsideOutsideBounded} since $n_a(P)=0$, $n_h(P')=0$, and $E_l(P')=\emptyset$.
\end{enumerate}
To keep the notation as simple as possible, we consider all following equations involving indicator functions to be restricted to $x\in\RR^2$ in $P$-general position. These $x$ are then also in $P'$-general position because $E(P')\subset E(P)$.
We have, $P=P'\setminus\bigcup_{k\in[h]}\interior(\gamma_k)$.
Since $\interior(\gamma_k)$ and $\interior(\gamma_l)$ are disjoint for two holes with $k\neq l$, we get
\begin{align*}
    \I{P}-1 
    &= \I{P'}-\sum_{k\in[h]}\I{\interior(\gamma_k)} -1
    = \I{P'}-1+\sum_{k\in[h]}(\I{\cl{\exterior(\gamma_k)}}-1) \\
    &= -n_a(P)+\sum_{v\in V(P')}\I{Q_{P'}^v} + \sum_{e\in E_l(P')}\I{H_{P'}^e} - \sum_{e\in E_b(P')}\I{H_{P'}^e} 
    \\&\qquad+\sum_{k\in[h]}(\I{\cl{\exterior(\gamma_k)}}-1),
\end{align*}
where we have expressed $\I{P'}-1$ using \eqref{eq:Indicators_P'}.
$P_k:=\cl{\exterior{\gamma_k}}$ are polygons with $\partial P_k=\gamma_k$ and $n_h(P_k)=1$. 
Note, that the $P_k$-side of an edge $e\in E(\gamma_k)$ agrees with its $P$-side, and the same holds for the $P'$-sides of $e\in E(P')$. Then, expressing the summands $\I{\cl{\exterior(\gamma_k)}}-1$ using \autoref{lem:InsideOutsideBounded}, yields
\begin{align}
    \I{P}-1 &= -n_a(P)+\sum_{v\in V(P')}\I{Q_{P'}^v} + \sum_{e\in E_l(P')}\I{H^e_P} - \sum_{e\in E_b(P')}\I{H^e_P} \nonumber
    \\&\qquad+ \sum_{k\in[h]}\big(\sum_{v\in V(\gamma_k)}\I{Q_{P_k}^v} - \sum_{e\in E(\gamma_k)}\I{H^e_P}-1\big) \nonumber 
    \\&= -n_a(P)-h+\sum_{v\in V(P')}\I{Q_{P'}^v}+\sum_{e\in E_l(P)}\I{H^e_P} -\sum_{e\in E_b(P)}\I{H^e_P}\nonumber
    \\&\qquad+ \sum_{k\in[h]}\sum_{v\in V(\gamma_k)}\I{Q_{P_k}^v},  \label{eq:Indicator_P_tmp}
\end{align}
where we used that $E_b(P)=E_b(P')\cup \bigcup_{k\in[h]}E(\gamma_k)$ is a disjoint union, and $E_l(P')=E_l(P)$.
With $K(v):=\set{R\in\set{P',P_1,\dots,P_h}:v\in V(R)}$, we can write
\begin{equation}\label{eq:sum_P_sides_v_masked}
	\sum_{v\in V(P')}\I{Q_{P'}^v} + \sum_{k\in[h]}\sum_{v\in V(\gamma_k)}\I{Q_{P_k}^v} = \sum_{v\in V(P)}\sum_{R\in K(v)} \I{Q_{R}^v}.
\end{equation}
Since $P\subseteq P'$, and $P\subseteq P_k$ for all $k\in[h]$, and because $P'$ has at most one boundary component containing $v\in V(P)$ by \autoref{lem:arcInterstions} and \autoref{lem:at_most_1_outer_cycle}, we can apply \autoref{lem:VertexOverlaps} to \eqref{eq:sum_P_sides_v_masked} to get
\begin{equation*}
	\sum_{v\in V(P)}\sum_{R\in K(v)} \I{Q_{R}^v} =
	\sum_{v\in V(P)}(\abs{K(v)}-1+\I{Q^v_P}) = d(P)+\sum_{v\in V(P)}\I{Q^v_P}. 
\end{equation*}
Here, we have used that $|K(v)|=\deg_P(v)/2$, since every boundary component has two incident edges per vertex.
Substituting this into \eqref{eq:sum_P_sides_v_masked}, together with \eqref{eq:Indicator_P_tmp}, gives
\begin{equation*}
    \I{P}-1 = -n_a(P)-h+d(P) + \sum_{v\in V(P)} \I{Q^v_P} + \sum_{e\in E_l(P)}\I{H^e_P} - \sum_{e\in E_b(P)} \I{H^e_P}.
\end{equation*}
Adding $1$ to both sides and substituting $c(P)=1+d(P)-n_a(P)-h$ yields the desired result.
\end{proof}

	\section{max-Representation of CPA Functions}
The results of this section reduce the representation of $\CPA$ functions, given by \autoref{lem:CPA_Decomp}, to a sum of nested signed maxima.
For a vertex $v\in V(f)$, its \emph{degree} $\deg(v):=|\set{e\in E(f):v\in e}|$ is defined as the number of edges that contain $v$ as a vertex.
A $\CPA$ function is called \emph{$v$-function} if $v\in\RR^2$ is its only vertex and all its edges are rays.
Note, that $f\in\CPA_{\deg(v)}$ for every $v$-function $f$, and that any $f\in\CPL$ is a $0$-function.

In general, the number of vertices and edges of a $\CPA_p$ function is not bounded by $p$,
since arbitrary edges can be split by adding vertices. Consequently, the number of summands in the representation from \autoref{lem:CPA_Decomp} is not known. However, there always exists a sparse choice of vertices, as described in the following lemma.
\begin{lemma}\label{lem:sparse_P_V_and_E}
	For every $f\in\CPA_p$, there exists an admissible set of pieces $\cP$, with corresponding sets of vertices $V$ and edges $E$, such that $|\cP|\leq p$, $\deg(v)\geq 3$ for all $v\in V$, and $|E|\leq 3p$. 
\end{lemma}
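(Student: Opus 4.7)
The plan is to start with any admissible set of pieces $\cP$ of cardinality at most $p$ (which exists because $f\in\CPA_p$), to simplify the associated vertex and edge data until every vertex has degree at least $3$, and then to read off the edge bound from Euler's formula on $S^2$.

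For the simplification I remove degree-$2$ vertices one at a time. Let $v\in V$ have $\deg(v)=2$ with incident edges $e_1,e_2$. If $\aff(e_1)=\aff(e_2)$, then $v$ is an artefact of the chosen data: I delete $v$ from $V$ and replace the two edges by their union $e_1\cup e_2$, which is again a line segment, ray, or line, leaving $\cP$ unchanged. If $\aff(e_1)\neq\aff(e_2)$, let $P,Q\in\cP$ be the two pieces meeting at $v$ (they must be distinct, since otherwise a small disk around $v$ would be contained in $\interior P$, contradicting $v\in V(P)$). Continuity of $f$ forces $f_P=f_Q$ on each of $\aff(e_1)$ and $\aff(e_2)$, so the affine function $f_P-f_Q$ vanishes on two distinct lines through $v$ and is therefore identically zero. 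Consequently $R:=P\cup Q$ is a polygon on which $f$ equals the common affine function, and I may replace $\{P,Q\}$ by $\{R\}$, removing $v$ and the two shared edges from $V$ and $E$. Each reduction strictly decreases $|V|$ and never increases $|\cP|$, so after finitely many iterations I obtain an admissible decomposition with $|\cP|\leq p$ and $\deg(v)\geq 3$ for every $v\in V$.

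To count the edges I compactify $\RR^2$ to $S^2$ by adjoining a single point $\infty$ as an additional vertex. The $1$-complex $\bigcup_{e\in E}e$ becomes an embedded planar graph $G$ on $S^2$ with vertex set $V\cup\{\infty\}$: every line segment uses its two vertices in $V$, every ray uses its base vertex together with $\infty$, and every line becomes a loop at $\infty$. Since each piece has connected interior, the faces of $G$ are exactly the $p$ pieces of $\cP$. Euler's formula for a planar graph with $C$ connected components reads
\begin{equation*}
(|V|+1)-|E|+p \;=\; 1+C, \qquad\text{hence}\qquad |E|=|V|+p-C.
\end{equation*}
Because no line contains any element of $V$, the handshake identity yields $\sum_{v\in V}\deg(v)=2|\Eb(f)|+|E_r(f)|\leq 2|E|$; combined with $\deg(v)\geq 3$ this gives $3|V|\leq 2|E|$, so $|V|\leq 2|E|/3$. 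Substituting back produces $|E|/3\leq p-C$, and thus $|E|\leq 3(p-C)\leq 3p$, since $C\geq 1$.

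The main technical point I expect to require care is verifying that the merged set $R=P\cup Q$ in the second reduction is itself a polygon in the sense of section~\ref{sec:Polygons} and that the updated triple $(\cP',V',E')$ remains admissible. The interior of $R$ is connected because the relative interiors of the removed shared edges lie in $\interior R$ and bridge $\interior P$ with $\interior Q$; the boundary $\partial R$ equals $(\partial P\cup\partial Q)$ minus the relative interiors of the shared edges, and so is a finite union of polygonal arcs and cycles; and the vertex-compatibility condition transfers because each surviving vertex is incident to exactly the same collection of pieces as before, except for the identification $\{P,Q\}\mapsto\{R\}$.
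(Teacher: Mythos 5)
Your proof is correct and takes essentially the same approach as the paper's: iteratively remove degree-$2$ vertices (merging collinear edges, or merging adjacent pieces that share the same affine component), then combine the handshake inequality $3|V|\le 2|E|$ with Euler's formula for the planar graph obtained by compactifying with a single vertex at infinity. The paper organises the two reduction cases by whether $f_P=f_Q$ rather than by whether $\aff(e_1)=\aff(e_2)$, and realises the compactification via an auxiliary vertex outside a large disk rather than via $S^2$, but these are cosmetic differences.
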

\begin{proof}
	Let $f\in\CPA_p$ be an arbitrary function. By definition, there is an admissible set of pieces $\cP(f)$ such that $|\cP(f)|=p$.
	Let $V(f)$ and $E(f)$ be the corresponding sets of vertices and edges.
	The homeomorphisms of the boundary components of a polygon ensure that $\deg(v)\geq 2$ for all $v\in V(f)$.
	
	If there exists some $v\in V(f)$ with $\deg(v)=2$, then there must be two pieces $P,Q\in\cP(f)$ that share the two incident edges. Let $f_P$ and $f_Q$ be the respective affine components of $P$ and $Q$.
	If $f_P\neq f_Q$, then the incident edges lie on the same line, and the vertex can be removed from $V(f)$ by merging the incident edges.
	If $f_P=f_Q$, then the pieces  $P$ and $Q$ can be replaced by the single piece $P\cup Q$, effectively removing $v$ from $V(f)$, removing the incident edges from $E(f)$, and reducing the number of pieces.	%
			
	By iterating this process for all vertices $v\in V(f)$ with $\deg(v) = 2$, we obtain a new set $\cP'$ of $p' \leq p$ admissible pieces for $f$, along with corresponding vertex and edge sets $V'$ and $E'$, where $\deg(v) \geq 3$ for all $v \in V'$. 
	
	The sets $\cP'$, $V'$, and $E'$ are the desired sets. It is left to prove that $|E'|\leq 3p$.
	
	Since all vertices in $V'$ have degree at least three, we have
	\begin{equation}\label{eq:double_counting}
		3|V'|\leq \sum_{v\in V'}\deg(v)\leq 2|E'|,
	\end{equation} 
	where the last inequality holds as an equality if all edges are line segments.
	Let $D$ be a disk that contains all vertices in $V'$ and intersects every ray and line in $E'$.
	We form a plane graph $\cl{G}:=(\cl{V},\cl{E})$ from $(V',E')$ by adding an auxiliary vertex outside of $D$ and connecting all unbounded edges to this vertex. Hence, $|\cl{V}|=|V'|+1$, $|\cl{E}|=|E'|$, and $\cl{G}$ has $p'$ faces.
	Let $k$ be the number of connected components of $\cl{G}$. 
	A straightforward extension of Euler's formula for disconnected graphs (see, e.g., \cite[Thm. 4.2.9]{Diestel2017} for the standard version) is given by
	\begin{equation}\label{eq:Euler_conn_comp}
		\abs{\cl{V}}-\abs{\cl{E}}+p'=k+1.
	\end{equation}
	For $(V',E')$, \eqref{eq:Euler_conn_comp} gives
	\begin{equation*}%
		\abs{V'}-\abs{E'}+p'=k\geq 0.
	\end{equation*}	
	Combining this with \eqref{eq:double_counting} yields 
	\begin{equation*}
	\abs{E'}\leq 3p'\leq 3p.
	\end{equation*}
\end{proof}

The first step to derive the $\max$-representation of $\CPA$ functions is \autoref{lem:fv_reduction}, which can be applied to the functions $f^v$.
It states that every $v$-function can be expressed as a sum of $v$-functions with three pieces each.

\begin{lemma}%
\label{lem:fv_reduction}
    For $v\in\RR^2$, let $f\in\CPA_p$ be an arbitrary $v$-function with $p\geq 3$ pieces.
    Then, $f$ can be written as \begin{equation*}%
        f = \sum_{n=1}^{p-2} f^{(n)},
    \end{equation*}
    where $f^{(n)}\in\CPA_3$ is a $v$-function for every $n\in[p-2]$.
\end{lemma}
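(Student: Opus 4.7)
We argue by induction on $p$. The base case $p=3$ is immediate: set $f^{(1)}:=f$, which is already a $v$-function in $\CPA_3$.

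For the inductive step, assume $p\geq 4$ and enumerate the rays of $f$ at $v$ in cyclic order as $r_1,\ldots,r_p$, with $P_i$ the piece between $r_i$ and $r_{i+1}$ and $f_i$ its affine component. The strategy is to construct a single $v$-function $g\in\CPA_3$ such that $f-g$ is again a $v$-function but with at most $p-1$ pieces; the induction hypothesis then supplies the remaining summands for $f-g$ (padded with zero summands if the reduction is by more than one, using that the zero function lies in $\CPA_1\subseteq\CPA_3$ and is vacuously a $v$-function). To build $g$, I select a cyclically consecutive triple $(r_i,r_{i+1},r_{i+2})$ with $r_i$ and $r_{i+2}$ \emph{not} antipodal, partition $\RR^2$ into the three wedges $Q_1:=P_i$, $Q_2:=P_{i+1}$, and $Q_3:=\cl{\RR^2\setminus(P_i\cup P_{i+1})}$, and set $g|_{Q_1}:=f_i$ and $g|_{Q_2}:=f_{i+1}$. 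The non-antipodality of $r_i$ and $r_{i+2}$ means they span two linearly independent directions at $v$, so there is a unique affine function $g|_{Q_3}$ with $g|_{Q_3}(v)=f(v)$, $g|_{Q_3}=f_i$ on $r_i$, and $g|_{Q_3}=f_{i+1}$ on $r_{i+2}$. This makes $g$ a well-defined $v$-function with three wedge-shaped pieces at $v$, hence $g\in\CPA_3$. Since $g$ coincides with $f$ on $P_i$ and on $P_{i+1}$, the difference $f-g$ vanishes on $P_i\cup P_{i+1}$ and equals the affine function $f_j-g|_{Q_3}$ on each remaining piece $P_j$, so $f-g$ is a $v$-function whose set of edges is the subset of $\{r_1,\ldots,r_p\}$ obtained by removing $r_{i+1}$, hence it has at most $p-1$ pieces.

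The main obstacle is that a cyclically consecutive triple with non-antipodal outer rays need not exist. If every such triple is degenerate, i.e.\ $r_{i+2}=-r_i$ for all $i$, then iterating yields $r_5=-r_3=r_1$, which contradicts the distinctness of the rays as soon as $p\geq 5$. The only degenerate configuration is therefore $p=4$ with the four rays forming two antipodal pairs (two lines through $v$). This lone exceptional case is handled directly: after translating $v$ to the origin and rotating so that the two lines coincide with the coordinate axes, the continuity conditions across the four quadrants force $f(x,y)=f(v)+A(x)+B(y)$ for univariate two-piece functions $A$ and $B$ with a single kink at $0$. Setting $f^{(1)}(x,y):=A(x)+f(v)$ and $f^{(2)}(x,y):=B(y)$ produces two two-piece $v$-functions at $v$, each lying in $\CPA_2\subseteq\CPA_3$, which together give the required $p-2=2$ summands.
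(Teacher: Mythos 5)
Your proof is correct and essentially matches the paper's: both induct on $p$ by subtracting a three-piece $v$-function that copies $f$ on two adjacent pieces and affinely interpolates on the remaining wedge (this is the paper's Lemma~\ref{lem:CPL_merge_pieces}, stated there after normalising to $\CPL$), and both identify the sole exceptional configuration of $p=4$ with the four rays forming two lines through $v$. Your non-antipodality criterion slightly sharpens the paper's enclosed-angle-less-than-$\pi$ condition but leads to the identical exceptional case, which you resolve via a separable $A(x)+B(y)$ split that agrees with the paper's half-plane construction up to redistributing an affine function between the two summands.
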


In the proof of \autoref{lem:fv_reduction}, I will use an auxiliary lemma that describes how to merge two adjacent pieces of a $\CPL$ function by subtracting a function $f^{(n)}$ with three pieces. Two pieces are called adjacent if their intersection contains a line segment.

    \begin{figure}[h!]
        \centering
        \tikzsetnextfilename{CPLskeleton}
\begin{tikzpicture}[thick, scale=.6]
    \coordinate (0) at (0,0);
    \coordinate (1) at (10:3);
    \coordinate (2) at (40:3);
    \coordinate (3) at (140:3);
    \coordinate (4) at (210:3);
    \coordinate (5) at (300:3);

    \draw (0) -- (1);
    \draw[gray] (0) -- (2);
    \draw[gray] (0) -- (3);
    \draw (0) -- (4);
    \draw (0) -- (5);

    \pic ["$<\pi$", draw, angle radius=.7cm, angle eccentricity=.5] {angle=4--0--1}; 

    \fill ($(0)!0.8!(4)$) circle (.1) node[above left] {$p_1$};
    \fill ($(0)!0.8!(1)$) circle (.1) node[above] {$p_2$};

    \node[] at (335:2.5) {$P_2$, $f_2$};
    \node[] at (255:2.5) {$P_1$, $f_1$};
    \node[] at (110:1.5) {$P$, $f_P$};
    
\end{tikzpicture}
        \caption{
        Pieces (enclosed by solid lines) of a $\CPL$ function $f$ with adjacent pieces $P_1$ and $P_2$.
        One can define a $3$-piece function that equals $f$ on $P_1$ and $P_2$ by interpolating $f_1$ and $f_2$ on their outer rays.
        }
        \label{fig:CPLskeleton}
    \end{figure}
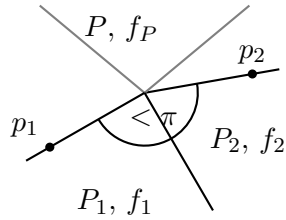

\begin{lemma}%
\label{lem:CPL_merge_pieces}
    Let $f\in\CPL_p$ be a piecewise linear function with $p\geq 3$ pieces, such that there are two adjacent pieces that enclose an angle less than $\pi$.
    Then, there exists a function $f'\in\CPL_3$ such that $f-f'\in \CPL_{p-1}$.
\end{lemma}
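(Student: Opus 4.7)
The plan is to construct $f'\in\CPL_3$ by linearly interpolating between $f_1$ and $f_2$ across the union of the two adjacent pieces, and then to verify that the difference $f-f'$ admits an admissible set of $p-1$ pieces obtained by merging them.

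Let $P_1,P_2\in\cP(f)$ be the two adjacent pieces with linear components $f_1,f_2$ enclosing an angle less than $\pi$. Because $f\in\CPL_p$, the pieces are closed sectors at the origin, so $P_1$ and $P_2$ share a ray $r$ emanating from $0$, and I denote the other rays bounding $P_i$ by $p_i$, for $i=1,2$. Since the angle of $P_1\cup P_2$ is less than $\pi$, the direction vectors of $p_1$ and $p_2$ are linearly independent, and together with $f_1(0)=f_2(0)=0$ this yields a unique linear function $f_g\colon\RR^2\to\RR$ with $f_g|_{p_i}=f_i|_{p_i}$ for $i=1,2$. This is the only place where the angle hypothesis is used, and it is arguably the key step of the argument.

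Next, I would define
\begin{equation*}
    f'(x):=\begin{cases}
        f_1(x), & x\in P_1,\\
        f_2(x), & x\in P_2,\\
        f_g(x), & x\in \cl{\RR^2\setminus(P_1\cup P_2)}.
    \end{cases}
\end{equation*}
Continuity across $p_1$ and $p_2$ is immediate from the definition of $f_g$, and continuity across $r$ follows from $f_1|_r=f|_r=f_2|_r$. The three defining sets are sectors at the origin, hence valid CPL pieces, so $f'\in\CPL_3$.

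Finally, I would verify that $f-f'\in\CPL_{p-1}$ by exhibiting the admissible set of pieces $\set{P_1\cup P_2}\cup(\cP(f)\setminus\set{P_1,P_2})$. On $P_1\cup P_2$ the difference vanishes identically, so the two pieces merge into a single sector bounded only by $p_1$ and $p_2$, and $r$ is no longer required as an edge. On each remaining $P\in\cP(f)\setminus\set{P_1,P_2}$, the difference $f_P-f_g$ is linear. Continuity across $p_1$ follows from $f_P|_{p_1}=f_1|_{p_1}=f_g|_{p_1}$, and analogously across $p_2$. The merged sector and the other $p-2$ pieces jointly satisfy the admissibility conditions from Section~\ref{sec:CPA} (the origin, the sole vertex, lies in every piece), giving the desired $p-1$ pieces. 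The remaining work beyond constructing $f_g$ is thus routine continuity bookkeeping along the three rays $p_1$, $p_2$, and $r$.
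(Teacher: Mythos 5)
Your proof is correct and follows essentially the same construction as the paper: interpolate $f_1$ and $f_2$ by a single linear function determined by their values on the two outer rays (which is possible precisely because the enclosed angle is less than $\pi$), define $f'$ with the third piece being the complement $\cl{\RR^2\setminus(P_1\cup P_2)}$, and observe that $f-f'$ vanishes on $P_1\cup P_2$ and is linear and continuous on the remaining pieces. The differences from the paper's proof are purely notational ($p_i$ as rays rather than as points on them, $f_g$ in place of $f_P$).
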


\begin{proof} 
Let $\set{P_i}_{i\in[p]}$ be the pieces of $f$, such that $P_1$ and $P_2$ are adjacent pieces that enclose an angle less than $\pi$, and let $\set{f_i}_{i\in[p]}$ be the corresponding linear components.
The boundaries of the pieces are each the union of two rays with vertex $0$. As $P_1$ and $P_2$ are adjacent, they share one of these rays.
Let $p_1,p_2\in\RR^2\setminus\set{0}$ be points on the non-shared ray of $P_1$ and $P_2$, respectively, as shown in \autoref{fig:CPLskeleton}.
As the angle at $0$ enclosed by $P_1\cup P_2$ is smaller than $\pi$, $p_1$ and $p_2$ are linearly independent. 
Therefore, there exists a linear function $f_P$ such that $f_P(p_1)=f_1(p_1)$ and $f_P(p_2)=f_2(p_2)$. By linearity of $f_i$, we have $f_P=f_1$ on $\Span(p_1)$ and $f_P=f_2$ on $\Span(p_2)$.
Therefore, with $P:=\bigcup_{i=3}^pP_i$, the function
\begin{equation*}
    f'(x):=\begin{cases}
        f_1,\quad&\text{in }P_1\\
        f_2,\quad&\text{in }P_2\\
        f_P,\quad&\text{in }P\\
    \end{cases}
\end{equation*}
is continuous and piecewise linear. Since the interior of $P$ is connected, $P_1$, $P_2$, and $P$ are admissible pieces, thus $f'\in\CPL_3$.

For $i>2$, we have $(f-f')|_{P_i}=f|_{P_i}-f_P$, which is a linear function.
Moreover, $(f-f')|_{P_1}=(f-f')|_{P_2}=0$, i.e. $f-f'$ is linear on $P_1\cup P_2$. Continuity of $f-f'$ is implied by the continuity of $f$ and $f'$, and we get $f-f'\in\CPA_{p-1}$.
\end{proof}

With this preparation, we now come to the proof of \autoref{lem:fv_reduction}.

\begin{proof}[Proof of \autoref{lem:fv_reduction}]
We may assume that $f\in\CPL_p$. Otherwise, represent $\tilde{f}(x):=f(v-x)-f(v)$, which is contained in $\CPL_p$, as $\tilde{f} = \sum_{n\in[p-2]} \tilde{f}^{(n)}$ with $\tilde{f}^{(n)}\in\CPL_3$. This gives $f(x)=\sum_{n\in[p-2]} \tilde{f}^{(n)}(v-x)+f(v)$ as desired.

We will proceed by induction. For $p=3$, the statement is trivial with $f^{(1)}:=f$.

Let $p\geq 4$, and assume that the statement is proven for $p-1$.
First, assume that there are two adjacent pieces that enclose an angle smaller than $\pi$. This is always the case for $p>4$. Then, we can apply \autoref{lem:CPL_merge_pieces} and are done by induction.

Now, consider the remaining case where $p=4$, and $f$ is such that no two pieces form an angle smaller than $\pi$. Then, the pieces of $f$ are defined by two intersecting lines. Let $P_1,P_2,P_3,P_4$ be the pieces of $f$ in the order depicted in \autoref{fig:CPL_2_lines}, and $a\in(P_1\cap P_2)\setminus\set{0}$. 
Since $f_1=f_2$ on $\Span(a)$ and $(P_1\cup P_4)\cap (P_2\cup P_3)=\Span(a)$, the function defined by
\begin{equation*}
    f^{(1)}(x):=\begin{cases}
        f_1,\quad&\text{in }P_1\cup P_4\\
        f_2,\quad&\text{in }P_2\cup P_3
    \end{cases}
\end{equation*}
is a continuous function defined on two pieces that intersect in a line. In particular $f^{(1)}\in\CPL_3$.
Moreover, \begin{equation*}
    f^{(2)}(x):=f(x)-f^{(1)}(x)=\begin{cases}
        0, & x\in P_1\cup P_2 \\
        f_3(x)-f_2(x),\quad & x\in P_3 \\
        f_4(x)-f_1(x),\quad & x\in P_4 \\
    \end{cases},
\end{equation*}
is a $\CPL_3$ function (actually even $\CPL_2$).
Thus, $f=f^{(1)}+f^{(2)}$ with $f^{(1)},f^{(2)}\in\CPL_3$.
\end{proof}

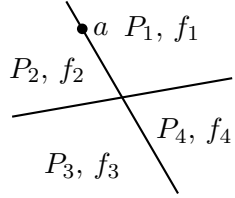
\begin{figure}[h!]
    \centering
    \def\extd{1.4}

\tikzsetnextfilename{CPL_p4_two_lines}
\begin{tikzpicture}[thick, scale=.7]
    \coordinate (b) at (120:1.5);
    \coordinate (a) at (10:1.5);
    \coordinate (ma) at (10:-1.5);

    \draw ($\extd*(a)$) -- ($-\extd*(a)$);
    \draw ($\extd*(b)$) -- ($-\extd*(b)$);

    \fill (b) circle (.1) node[right] {$a$};
    
    \node[] at (60:1.5) {$P_1$, $f_1$};
    \node[] at (240:1.5) {$P_3$, $f_3$};
    \node[] at (335:1.5) {$P_4$, $f_4$};
    \node[] at (160:1.5) {$P_2$, $f_2$};
    
\end{tikzpicture}
    \caption{A four-piece $\CPL$ function with no two adjacent pieces forming an angle smaller than $\pi$.}
    \label{fig:CPL_2_lines}
\end{figure}

I will now give an explicit expression for $v$-functions with three pieces.

\begin{lemma}\label{lem:3piece_analytical}
    Let $f\in\CPA_3$ be a $v$-function with three pieces. 
    Then, there are signs $\sigma_1,\sigma_2\in\set{-1,1}$, such that $f$ can be expressed as 
    \begin{equation}\label{eq:3piece_analytical}
        f=\sigma_1\max(g_1,\sigma_2\max(g_2,g_3)),
    \end{equation}
    where $g_i$ are affine components of $f$ multiplied by $-1$ or $1$.
\end{lemma}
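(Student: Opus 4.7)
The plan is to reduce the lemma to a small case analysis on the sector geometry at $v$. After translating so that $v=0$, write $f_1,f_2,f_3$ for the affine components on the three sector pieces $P_1,P_2,P_3$ with sector angles $\alpha_1,\alpha_2,\alpha_3$ summing to $2\pi$. Since $f_i-f_j$ is a nonzero linear form vanishing on the line spanned by the shared ray between $P_i$ and $P_j$, a short check (two opposite rays would force a third pair of affine components to coincide) rules out any $\alpha_k=\pi$, so exactly one of two geometric cases occurs: all $\alpha_k<\pi$, or exactly one $\alpha_k>\pi$.

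To pin down the shape of $f$, I would analyse the gradient jumps $\nabla f_i-\nabla f_j$ across the three rays. Each jump is normal to its ray and the three jumps sum to zero around the cyclic order; expanding in the basis of the three unit normals yields signed magnitudes proportional to $(\sin\alpha_1,\sin\alpha_2,\sin\alpha_3)$. When every $\alpha_k<\pi$ these are of common sign, so every kink is of the same type and $f$ is globally convex or concave, giving $f=\max(f_1,f_2,f_3)$ or $f=\min(f_1,f_2,f_3)$. When exactly one $\alpha_k>\pi$, say $k=3$, the kink at the ray separating $P_1$ and $P_2$ has opposite sign from the other two, and $f$ equals $\max(f_3,\min(f_1,f_2))$ or $\min(f_3,\max(f_1,f_2))$ according to the overall sign.

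Each of these four canonical forms is of the shape required by the lemma, once one applies the identity $\min(a,b)=-\max(-a,-b)$. For instance, $\max(f_3,\min(f_1,f_2))=\max(f_3,-\max(-f_1,-f_2))$ realises $\sigma_1=1$, $\sigma_2=-1$, $g_1=f_3$, $g_2=-f_1$, $g_3=-f_2$, and the other three cases give analogous sign choices (with $g_i\in\{\pm f_1,\pm f_2,\pm f_3\}$ throughout).

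The main obstacle I expect is the verification in the reflex case: for $f=\max(f_3,\min(f_1,f_2))$, the identity on $P_1$ and $P_2$ reduces to the local kink types at the rays bordering them, but on the reflex piece $P_3$ one must show globally that $f_3\ge\min(f_1,f_2)$, equivalently that $\{f_3<f_1\}\cap\{f_3<f_2\}$ does not meet $P_3$. I would handle this by observing that $\{f_3<f_1\}$ and $\{f_3<f_2\}$ are the two half-planes bounded by the lines carrying the rays of $P_3$, taken on the sides containing $P_1$ and $P_2$ respectively, so their intersection is exactly the non-reflex wedge of that line arrangement, which contains $P_1\cup P_2$ and is disjoint from the reflex sector $P_3$.
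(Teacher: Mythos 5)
Your proof is correct but takes a genuinely different route from the paper's. Both proofs split into the same two geometric cases (all sector angles $<\pi$, versus one reflex sector), but you reach that split via the identity $\sum_i(\nabla f_i-\nabla f_{i+1})=0$ and the observation that, expanded against the unit normals to the three rays, the signed kink magnitudes are proportional to $(\sin\alpha_1,\sin\alpha_2,\sin\alpha_3)$ (with the kink opposite $P_i$ scaling as $\sin\alpha_i$). This is a clean way to see that the kinks are all of one type when every angle is $<\pi$, and that exactly the kink at the ray separating $P_1$ and $P_2$ flips when $\alpha_3>\pi$; the paper instead fixes reference points on the rays, evaluates and compares $f_i(p_j)$, and propagates sign information along the lines $\aff(r_{ij})$ by hand. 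In the convex/concave case you appeal to the (standard) fact that a conical PL function locally convex across every wall is globally convex, whereas the paper verifies $f_i\ge\max(f_j,f_k)$ on $P_i$ directly; in the reflex case your half-plane intersection argument --- that $\{f_3<f_1\}\cap\{f_3<f_2\}$ is precisely the non-reflex wedge cut out by $\aff(r_{13})$ and $\aff(r_{23})$, which is disjoint from $P_3$ --- is essentially the same verification the paper performs, stated more geometrically. Two small points worth flagging: (i) the $\sin\alpha_i$ formula and the ``locally convex $\Rightarrow$ globally convex'' step are asserted rather than proved, though both are routine; (ii) your argument ruling out $\alpha_k=\pi$ implicitly assumes the three affine components are pairwise distinct, which the definition of $\CPA_3$ does not guarantee --- the paper avoids this by treating $\alpha_i\ge\pi$ uniformly and by allowing the degenerate ``$g=h$'' case in its half-plane observation. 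If some $f_i$ coincide the lemma is trivial (the function is already a $\CPA_2$ or affine function), so this is a one-line patch, but it should be stated.
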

\begin{proof}
    First, I note a trivial observation.
    Let $g$ and $h$ be affine functions such that $g-h$ is not constant. 
    Then $g-h$ vanishes precisely on a line $l$ that divides $\RR^2$ into two open half-planes.
    If $x_1,x_2\in\RR^2$ are contained in the same half-plane, then the line segment $\cl{x_1x_2}$ does not intersect $l$ due to the convexity of half-planes. 
    Consequently, the continuous function $g-h$ does not change sign along $\cl{x_1x_2}$.
    However, if $x_1$ and $x_2$ lie in opposite half-planes, the segment $\cl{x_1x_2}$ intersects $l$ exactly once, causing the function to change sign.
    Therefore, $g>h$ in one half-plane, and $h>g$ in the other. 
    More generally, it holds that $g\geq h$, respectively $h\geq g$, in the closed half-planes, which also allows for the case that $g=h$ (with any line).

    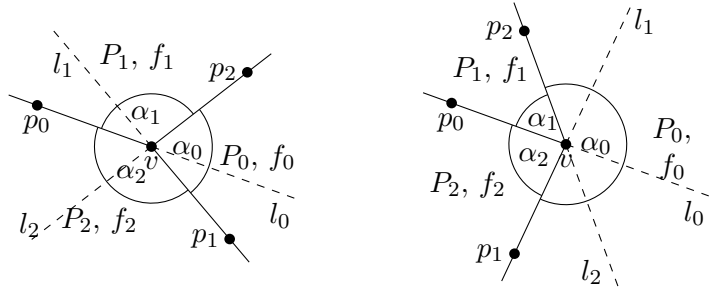
\begin{figure}
        \centering
        \begin{tabular}{C{5cm} C{5cm}}
        \def\aa{38}
\def\ab{160}
\def\ac{310}
\def\r{2}
\tikzsetnextfilename{CPL_3_convex}
\begin{tikzpicture}[scale=1]
    \coordinate (0) at (0,0);
    \coordinate (A) at (\aa:\r);
    \coordinate (B) at (\ab:\r);
    \coordinate (C) at (\ac:\r);
    \draw (0) -- (A);
    \draw (0) -- (B);
    \draw (0) -- (C);
    \draw[dashed] (0) -- ($-1*(A)$);
    \draw[dashed] (0) -- ($-1*(B)$);
    \draw[dashed] (0) -- ($-1*(C)$);
    
    \pic ["$\alpha_1$", draw, angle radius=.7cm, angle eccentricity=.6] {angle=A--0--B}; 
    \pic ["$\alpha_2$", draw, angle radius=.75cm, angle eccentricity=.6] {angle=B--0--C}; 
    \pic ["$\alpha_0$", draw, angle radius=.8cm, angle eccentricity=.6] {angle=C--0--A}; 

    \fill ($(0)!0.8!(A)$) circle (.07) node[left] {$p_2$};
    \fill ($(0)!0.8!(B)$) circle (.07) node[below] {$p_0$};
    \fill ($(0)!0.8!(C)$) circle (.07) node[left] {$p_1$};
    \fill ($(0)$) circle (.07) node[below] {$v$};
    
    \node[left] at ($-.84*(A)$) {$l_2$};
    \node[below] at ($-.9*(B)$) {$l_0$};
    \node[below] at ($-.9*(C)$) {$l_1$};

    \node[] at (\aa/2+\ab/2:.6*\r) {$P_1$, $f_1$};
    \node[] at (\ab/2+\ac/2:.6*\r) {$P_2$, $f_2$};
    \node[] at (\ac/2+\aa/2+180:.7*\r) {$P_0$, $f_0$};
\end{tikzpicture} &
        \def\aa{110}
\def\ab{160}
\def\ac{245}
\def\r{2}
\tikzsetnextfilename{CPL_3_concave}
\begin{tikzpicture}[scale=1]
    \coordinate (0) at (0,0);
    \coordinate (A) at (\aa:\r);
    \coordinate (B) at (\ab:\r);
    \coordinate (C) at (\ac:\r);
    \draw (0) -- (A);
    \draw (0) -- (B);
    \draw (0) -- (C);
    \draw[dashed] (0) -- ($-1*(A)$);
    \draw[dashed] (0) -- ($-1*(B)$);
    \draw[dashed] (0) -- ($-1*(C)$);
    
    \pic ["$\alpha_1$", draw, angle radius=.7cm, angle eccentricity=.6] {angle=A--0--B}; 
    \pic ["$\alpha_2$", draw, angle radius=.75cm, angle eccentricity=.6] {angle=B--0--C}; 
    \pic ["$\alpha_0$", draw, angle radius=.8cm, angle eccentricity=.5] {angle=C--0--A}; 

    \fill ($(0)!0.8!(A)$) circle (.07) node[left] {$p_2$};
    \fill ($(0)!0.8!(B)$) circle (.07) node[below] {$p_0$};
    \fill ($(0)!0.8!(C)$) circle (.07) node[left] {$p_1$};
    \fill ($(0)$) circle (.07) node[below] {$v$};
    
    \node[left] at ($-.9*(A)$) {$l_2$};
    \node[below] at ($-.9*(B)$) {$l_0$};
    \node[right] at ($-.9*(C)$) {$l_1$};

    \node[] at (\aa/2+\ab/2:.7*\r) {$P_1$, $f_1$};
    \node[] at (\ab/2+\ac/2:.7*\r) {$P_2$, $f_2$};
    \node[] at (\ac/2+\aa/2+180:.7*\r) {\begin{tabular}{c} $P_0,$\\  $f_0$ \end{tabular}};
\end{tikzpicture}
        \end{tabular}
        \caption{$v$-functions with three pieces that are enclosed by rays (solid lines).
        The sign of $f_i-f_j$ only changes on the affine hull of the ray between $P_i$ and $P_j$ (dashed lines). Left: all angles smaller than $\pi$. Right: $\alpha_0>\pi$.}
        \label{fig:3_piece_representation}
    \end{figure}

    Let $P_0,P_1,P_2$ be the pieces of $f$, and denote their interior angles by $\alpha_0,\alpha_1,\alpha_2$, cf. \autoref{fig:3_piece_representation}.
    For $i\in\set{0,1,2}$, let $f_i$ be the affine component of $P_i$.
    Let $p_0,p_1,p_2\neq 0$ be points on the ray between $P_1$ and $P_2$, $P_0$ and $P_2$, and $P_0$ and $P_1$ respectively. Further, define $l_0:=\aff(\cl{vp_0})$, $l_1:=\aff(\cl{vp_1})$, and $l_2:=\aff(\cl{vp_2})$.

    First, we consider the case that $\alpha_i<\pi$ for all $i\in\set{0,1,2}$.
    Consider the case that $f(p_0)=f_1(p_0)=f_2(p_0)\geq f_0(p_0)$. 
    Then, as $P_1$ and $p_0$ lie on the same side of $l_2$, which is the line with $f_0=f_1$, this extends to  $f_1\geq f_0$ in $P_1$. Similarly, $f_2\geq f_0$ in $P_2$.
    Moreover, $P_0$ and $p_0$ lie on opposite sides of $l_1$ and $l_2$, which implies that $f_0\geq f_1$ and $f_0\geq f_2$ in $P_0$.
    In particular, $f_2(p_1)=f_0(p_1)\geq f_1(p_1)$.
    Since $P_2$ and $p_1$ lie in the same half-plane w.r.t $l_0$, opposite to that of $P_1$, this means that $f_2\geq f_1$ in $P_2$ and $f_1\geq f_2$ in $P_1$.
    In total, we have $f_1\geq \max(f_0,f_2)$ in $P_1$, $f_2\geq \max(f_0,f_1)$ in $P_2$, and $f_0\geq \max(f_1,f_2)$ in $P_0$. Thus $f=\max(f_0,f_1,f_2)=\max(f_0,\max(f_1,f_2))$, which is of the form \eqref{eq:3piece_analytical}.
    For the case that $f(p_0) < f_0(p_0)$, one can apply the same arguments to $-f$ and get $f=-\max(-f_0,\max(-f_1,-f_2))$, which also satisfies \eqref{eq:3piece_analytical}.

    Now, consider the case that $\alpha_i\geq\pi$ for some $i\in\set{0,1,2}$, say $i=0$.
    First, assume that $f(p_0)=f_1(p_0)=f_2(p_0)\geq f_0(p_0)$. 
    Since $\alpha_1+\alpha_2<\pi$, all of $P_1\cup P_2$ lies on the same side as $p_0$ of both $l_1$ and $l_2$.
    Thus, $\min(f_1,f_2)\geq f_0$ in $P_1\cup P_2$.
    Moreover, $f_2(p_2)\geq f_0(p_2)=f_1(p_2)$, and since $l_0$ separates $P_1$ from $P_2$, we have $f|_{P_1\cup P_2} = \min(f_1,f_2)|_{P_1\cup P_2}$.
    In contrast, any $x\in P_0$ and $p_0$ lie on opposite sides of either $l_1$ or $l_2$, and therefore $f_0\geq\min(f_1,f_2)$ in $P_0$.
    In total, we get $f=\max(f_0,\min(f_1,f_2))$, which is of the form \eqref{eq:3piece_analytical}.
    Again, \eqref{eq:3piece_analytical} can also be achieved for the case that $f(p_0)<f_0(p_0)$ by considering $-f$.
\end{proof}

Together with \autoref{lem:fv_reduction}, \autoref{lem:3piece_analytical} allows for a reformulation of \autoref{lem:CPA_Decomp} in a standardised form.

\begin{theorem}%
\label{lem:CPA_reduction}
    Let $f\in\CPA_p$ be a continuous piecewise affine function.
    Then, there are affine functions $f^{(k)}_n$, and signs $\sigma_n^{(k)}\in\set{-1,1}$, such that
    \begin{equation*}%
        f(x) = \sum_{n=1}^{9p} \sigma_{n}^{(1)}\max(f_{n}^{(1)},\sigma_{n}^{(2)}\max(f_{n}^{(2)},f_{n}^{(3)})).
    \end{equation*}
\end{theorem}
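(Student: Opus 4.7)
The plan combines the four preceding lemmas. First, I apply Lemma \ref{lem:sparse_P_V_and_E} to choose an admissible set of pieces $\cP$ for $f$ with $|\cP|\leq p$, $\deg(v)\geq 3$ for every $v\in V(f)$, and $|E(f)|\leq 3p$. Then Lemma \ref{lem:CPA_Decomp} gives
\[
f = \sum_{v\in V(f)} f^v + \sum_{e\in E_l(f)} f^e - \sum_{e\in \Eb(f)} f^e + \sum_{P\in\cP} c(P)\, f_P,
\]
so the task reduces to rewriting each summand group in the required nested-max form and bounding the total count by $9p$.

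The piece sum is the easiest: since each $f_P$ is affine and the $c(P)$ are integers, $g:=\sum_{P\in\cP} c(P)\,f_P$ is itself a single affine function, representable as $g=\max(g,\max(g,g))$ with both signs $+1$. This single summand absorbs the entire piece contribution and, crucially, prevents any coefficient blow-up from $c(P)$. For the edge terms, each $f^e$ has only two affine components separated by the line $\aff(e)$; comparing them on either side yields either $f^e=\max(f_Q,f_R)$ or $f^e=-\max(-f_Q,-f_R)$, and the identity $\max(a,b)=\max(a,\max(b,b))$ brings either expression into the required nested form. The outer sign $\sigma^{(1)}$ absorbs both this $\pm$ and the $\pm$ prefix coming from whether $e\in E_l(f)$ or $e\in \Eb(f)$, so each edge contributes exactly one summand. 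For each vertex $v$, the function $f^v$ is a $v$-function with $\deg(v)\geq 3$ pieces; Lemma \ref{lem:fv_reduction} splits it into $\deg(v)-2$ three-piece $v$-functions, and Lemma \ref{lem:3piece_analytical} puts each of these directly into the target form.

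Finally, I count the summands produced: at most $\sum_{v\in V(f)}(\deg(v)-2)+|E(f)|+1$. Using $\sum_{v}\deg(v)\leq 2|E(f)|$ together with the sparse bound $|E(f)|\leq 3p$, this is at most $3|E(f)|-2|V(f)|+1\leq 9p+1-2|V(f)|$. A brief case distinction gives a bound of $9p$: if $|V(f)|\geq 1$ the term $-2|V(f)|$ absorbs the trailing $+1$; if $|V(f)|=0$, then $|\Eb(f)|=|E_r(f)|=0$ (such edges require a vertex), so the count reduces to $|E_l(f)|+1\leq 3p+1\leq 9p$ for $p\geq 1$. Any shortfall can be padded with the zero term $\max(0,\max(0,0))$ to reach exactly $9p$. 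I do not expect any substantial obstacle, since the hard analytical work is already contained in the four preceding lemmas; the main conceptual observation is that the piece contribution collapses to a single affine function, which is what keeps the total summand count linear in $p$.
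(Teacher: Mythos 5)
Your proof is correct and follows essentially the same route as the paper: Lemma~\ref{lem:sparse_P_V_and_E} to obtain sparse pieces, Lemma~\ref{lem:CPA_Decomp} for the decomposition, Lemma~\ref{lem:fv_reduction} together with Lemma~\ref{lem:3piece_analytical} for the vertex terms, and the $\max$/$\min$ observation for the edge terms. The only minor difference is that the paper absorbs the affine remainder $\sum_P c(P)f_P$ into one of the existing summands via $\max(h_1,h_2)+h=\max(h_1+h,h_2+h)$, which avoids the extra $+1$ in your tally and the ensuing case distinction on $|V(f)|$; your count still closes, so both versions are fine.
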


\begin{proof}
Let $\cP(f)$, $V(f)$, $E(f)$ be the sets given by \autoref{lem:sparse_P_V_and_E}, and let $E_b(f), E_l(f)\subseteq E(f)$ be the subsets of line segments and lines. By \autoref{lem:CPA_Decomp}, there exists an affine function $h$ such that
\begin{equation}\label{eq:CPA_Decomp_reduction_proof}
    f(x) = \sum_{v\in V(f)} f^{{v}}(x) + \sum_{\substack{e\in E_l(f)}}f^{{e}}(x) - \sum_{\substack{e\in \Eb(f)}}f^{{e}}(x) +h.
\end{equation}
For an edge $e\in E(f)$, the function $f^e$ is a $\CPA_2$ function with two pieces that are half-planes separated by a line. If $f_1^e$ and $f_2^e$ are the affine components of $f^e$, then $f^e=\max(f_1^e,f_2^e)$ or $f^e=\min(f_1^e,f_2^e)=-\max(-f_1^e,-f_2^e)$.
Since $-f_i^e$ is also affine, we get
\begin{align}
    \sum_{\substack{e\in E_l(f)}}f^{{e}}(x) - \sum_{\substack{e\in \Eb(f)}}f^{{e}}(x) &= \sum_{n=1}^{\abs{E_l}+\abs{\Eb}} \sigma_n\max(f_{n}^{(1)}, f_{n}^{(2)}) \nonumber \\ 
    &= \sum_{n=1}^{\abs{E_l}+\abs{\Eb}} \sigma_n\max(f_n^{(1)},\max(f_{n}^{(2)}, f_{n}^{(2)})), \label{eq:E_sum_as_max}
\end{align}
for some signs $\sigma_n\in\set{-1,1}$ and affine functions $f_n^{(1)}$, $f_n^{(2)}$. 

For any $v\in V(f)$, $f^v$ is a $v$-function with $\deg(v)\geq 3$ pieces. Therefore, by \autoref{lem:fv_reduction}, we can express $f^v$ as 
\begin{equation*}
    f^v = \sum_{n=1}^{d_v} f_{v,n},
\end{equation*}
where $f_{v,n}\in\CPA_3$ for $n\in[d_v]$ are $v$-functions, and $d_v:=\deg(v)-2$.
Applying \autoref{lem:3piece_analytical} to the three piece functions $f_{v,n}$ gives $\sigma_{v,n}^{(1)},\sigma_{v,n}^{(2)}\in\set{-1,1}$ and affine functions $f_{v,n}^{(1)},f_{v,n}^{(2)},f_{v,n}^{(3)}$ such that 
\begin{equation*}
    f_{v,n}= \sigma_{v,n}^{(1)}\max(f_{v,n}^{(1)},\sigma_{v,n}^{(2)}\max(f_{v,n}^{(2)},f_{v,n}^{(3)})).
\end{equation*}
Therefore,
\begin{equation}\label{eq:V_sum_as_max}
    f^v = \sum_{n=1}^{d_v} \sigma_{v,n}^{(1)}\max(f_{v,n}^{(1)},\sigma_{v,n}^{(2)}\max(f_{v,n}^{(2)},f_{v,n}^{(3)})).
\end{equation}

Note, that $\max(h_1,h_2)+h=\max(h_1+h,h_2+h)$, and $h_i+h$ is affine if $h_i$ is. 
Therefore, the affine function $h$ can be integrated into one of the summands of either \eqref{eq:E_sum_as_max} or \eqref{eq:V_sum_as_max}.
Thus, using \eqref{eq:E_sum_as_max}, and \eqref{eq:V_sum_as_max}, we can write \eqref{eq:CPA_Decomp_reduction_proof} as
\begin{align*}
    f(x)&=\sum_{v\in V}\sum_{n=1}^{d_v} \sigma_{v,n}^{(1)}\max(f_{v,n}^{(1)},\sigma_{v,n}^{(2)}\max(f_{v,n}^{(2)},f_{v,n}^{(3)})) \nonumber\\
        &+\sum_{n=1}^{\abs{E_l}+\abs{\Eb}} \sigma_n\max(f_n^{(1)},\max(f_{n}^{(2)}, f_{n}^{(2)})).
\end{align*}

It remains to show that $\sum_{v\in V}d_v+|E_l|+|\Eb|\leq 9p$.
Since we chose the vertex and edge sets according to \autoref{lem:sparse_P_V_and_E}, we have $|E|\leq 3p$.
This directly implies $|E_l|+|\Eb|\leq 3p$, since $E_l\cup \Eb \subseteq E$ is a disjoint union.
Moreover, \begin{equation*}
    \sum_{v\in V} d_v \leq \sum_{v\in V}\deg(v) \leq 2\abs{E}\leq 6p,
\end{equation*}
which completes the proof.
\end{proof}

	\section{Neural Network Representation of CPA Functions}%
The following theorem translates the $\max$-representation of $\CPA$ functions, provided in \autoref{lem:CPA_reduction}, into a neural network representation.

\begin{theorem}%
	\label{thm:NN_representation}
    Any function $f\in\CPA_p$ with $p$ pieces can be represented by a $\relu$ neural network of depth $3$ with width vector $(s_1,s_2)\leq (45p,27p)$.
\end{theorem}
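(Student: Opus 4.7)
The plan is to translate the nested $\max$-representation of \autoref{lem:CPA_reduction} directly into a depth-$3$ $\relu$ network, layer by layer, using the universally valid identity
\begin{equation*}
    \max(a,b) = \relu(a) - \relu(-a) + \relu(b-a),
\end{equation*}
together with the idempotence $\relu(\relu(y)) = \relu(y)$, which holds because $\relu(y)\geq 0$.

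Starting from $f = \sum_{n=1}^{9p} \sigma_n^{(1)} \max(f_n^{(1)}, g_n)$ with $g_n := \sigma_n^{(2)}\max(f_n^{(2)}, f_n^{(3)})$, I would first expand the inner maximum as $\max(f_n^{(2)}, f_n^{(3)}) = \relu(f_n^{(2)}) - \relu(-f_n^{(2)}) + \relu(f_n^{(3)} - f_n^{(2)})$, so that $g_n$ becomes an affine combination of three first-layer neurons whose arguments are affine functions of $x$. Next, applying the identity to the outer maximum yields
\begin{equation*}
    \sigma_n^{(1)}\max(f_n^{(1)}, g_n) = \sigma_n^{(1)}\bigl[\relu(f_n^{(1)}) - \relu(-f_n^{(1)}) + \relu(g_n - f_n^{(1)})\bigr].
\end{equation*}
Since $g_n - f_n^{(1)}$ is affine in the first-layer neurons $\relu(\pm f_n^{(1)}), \relu(\pm f_n^{(2)}), \relu(f_n^{(3)} - f_n^{(2)})$ (using $f_n^{(1)} = \relu(f_n^{(1)}) - \relu(-f_n^{(1)})$), the term $\relu(g_n - f_n^{(1)})$ is a legitimate second-layer neuron. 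The two remaining summands $\relu(\pm f_n^{(1)})$ also need to reach the output; I would move them into the second layer by placing neurons $\relu(\relu(\pm f_n^{(1)}))$, which equal $\relu(\pm f_n^{(1)})$ by idempotence.

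Concretely, for each $n\in[9p]$ the first hidden layer contributes the $5$ neurons $\relu(\pm f_n^{(1)}), \relu(\pm f_n^{(2)}), \relu(f_n^{(3)} - f_n^{(2)})$, while the second hidden layer contributes the $3$ neurons $\relu(\relu(f_n^{(1)})), \relu(\relu(-f_n^{(1)}))$, and $\relu(g_n - f_n^{(1)})$. The output map $T^{(3)}$ is the affine combination of these $27p$ second-layer neurons with coefficients $\pm\sigma_n^{(1)}$ (and any global bias absorbed). Summing over $n$ yields $s_1 \leq 45p$ and $s_2 \leq 27p$, with correctness following from the two elementary identities above.

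The argument is essentially bookkeeping; the only subtle point is budgeting. A naive implementation that passes the raw coordinates $x\in\RR^2$ through the hidden layers via an identity trick would work too but does not fit the stated per-summand constants $5$ and $3$. The specific choice of pre-computing $\relu(\pm f_n^{(1)})$ in the first layer and then relying on $\relu\circ\relu=\relu$ in the second layer is what matches the bounds $(45p,27p)$ exactly, and is the only ingredient requiring any care beyond mechanical substitution into the formula of \autoref{lem:CPA_reduction}.
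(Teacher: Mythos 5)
Your proposal is correct and takes essentially the same approach as the paper: translate \autoref{lem:CPA_reduction} into per-summand depth-$3$ subnetworks of widths $5$ and $3$ via the identity $\max(a,b)=a+\relu(b-a)$ (with $a=\relu(a)-\relu(-a)$), then stack. The only cosmetic difference is that you carry $\relu(\pm f_n^{(1)})$ through the second layer via idempotence $\relu\circ\relu=\relu$, whereas the paper instead places $\rho(b),\rho(-b)$ with $b=\sigma_n^{(2)}\max(f_n^{(2)},f_n^{(3)})$ in the second layer, so idempotence is not needed; both organizations give exactly the stated width vector $(45p,27p)$.
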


The proof relies on the following definition, which can be interpreted as the direct sum of affine functions.

\begin{definition}
	For $w\in\NN$ and $i\in[w]$, let $T_i:\RR^{n_i}\to \RR^{m_i}$ be affine functions.
	With $n:=\sum_{i\in[w]} n_i$ and $m:=\sum_{i\in[w]} m_i$, define
	\begin{equation*}
		T:=\cvb{T_1\\ \vdots\\ T_w}:\RR^{n}\to\RR^{m},\quad T(x_1,\dots,x_w) := \cv{T_1(x_1)\\\vdots\\ T_w(x_w)}\text{ for }x_i\in\RR^{n_i}.
	\end{equation*}
	The function $T$ is referred to as the affine function that \emph{stacks} $T_1,\dots,T_w$.
\end{definition}

\begin{proof}[Proof of \autoref{thm:NN_representation}]
Using \autoref{lem:CPA_reduction}, $f$ can be expressed as
\begin{equation}\label{eq:f_decomposed_NN_proof}
	f(x) = \sum_{n=1}^{9p} \sigma_{n}^{(1)}\max(f_{n}^{(1)},\sigma_{n}^{(2)}\max(f_{n}^{(2)},f_{n}^{(3)})),
\end{equation}
where $f_n^{(k)}$ are affine functions and $\sigma_n^{(k)}\in\set{-1,1}$.
To obtain a neural network representation of $f$, we first construct individual neural networks for each summand in \eqref{eq:f_decomposed_NN_proof} and then stack the affine functions corresponding to the same layer.

Let $f_1,f_2,f_3$ be affine functions.
Using the $\relu$ activation function $\rho$, one can 'skip' the affine function $f_1$ through one layer using 
\begin{equation*}
	f_1 = \max(0,f_1)-\max(0,-f_1) = \rho(f_1) - \rho(-f_1).
\end{equation*}
This can be written as $f_1=T^{(2)}_1\circ\rho\circ T^{(1)}_1$, where $T^{(1)}_1=(f_1,-f_1)^T$, and $T^{(2)}_1(x)=(1,-1)x$, which is a neural network with one hidden layer of width two.
Next, for the function $\max(f_2,f_3)$, we have 
\begin{equation*}
    \max(f_2,f_3) = \max(0, f_2-f_3) + f_3 = \rho(f_2-f_3) + \rho(f_3) - \rho(-f_3),
\end{equation*}
which allows us to represent $\sigma_2\max(f_2,f_3)$ by a neural network with one hidden layer of width three, given by $T^{(1)}_2=(f_2-f_3,f_3,-f_3)^T$ and $T^{(2)}_2(x)=\sigma_2(1,1,-1)x$. 
Similarly, $\sigma_1\max(\blank,\blank)$ can be represented by a neural network $(T^{(1)}_3,T^{(2)}_3)$ of the same width.

In total, we get
\begin{equation}\label{eq:NN_single_summand}
    \sigma_1\max(f_1,\sigma_2\max(f_2,f_3)) = T^{(2)}_3\circ \rho\circ \underbrace{ T^{(1)}_3\circ \cvb{T^{(2)}_1\\ T^{(2)}_2}}_{\RR^5\to\RR^3} \circ \rho\circ \underbrace{\cvb{T^{(1)}_1\\T^{(1)}_2}}_{\RR^2\to\RR^5}.
\end{equation}
Therefore, each summand of the form $\sigma_{n}^{(1)}\max(f_{n}^{(1)},\sigma_{n}^{(2)}\max(f_{n}^{(2)},f_{n}^{(3)}))$ can be represented by a neural network with two hidden layers of width $5$ and $3$ respectively.

Now, we can represent a vector-valued function, whose $i$-th component is the $i$-th summand of \eqref{eq:f_decomposed_NN_proof}, with a neural network by stacking the affine functions of the neural networks that represent the individual summands for each layer separately. Finally, to represent $f$, we need to replace the affine function of the last layer with its concatenation with the linear sum operation.
Since there are $9p$ summands, the width vector is $(s_1,s_2)=9p(5,3)=(45p,27p)$.
\end{proof}

The provided constants are meant to show that they are reasonable and could be slightly improved.
The following corollary highlights the efficiency of the neural network representation by addressing the sparsity of the model parameters.

\begin{corollary}
	\label{cor:sparsity}
	Any $f\in\CPA_p$ with $p$ pieces can be represented by a $\relu$ neural network of depth $3$ with $O(p)$ non-zero parameters.
\end{corollary}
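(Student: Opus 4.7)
The plan is to revisit the explicit construction from the proof of \autoref{thm:NN_representation} and observe that the neural network built there is highly structured rather than dense. The na\"ive count of parameters between two fully-connected hidden layers of widths $s_1=45p$ and $s_2=27p$ would give $\Theta(p^2)$, but this ignores the modular way the network is assembled.

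First, I recall that the network representing $f$ is obtained by stacking $9p$ independent sub-networks, one per summand of \eqref{eq:f_decomposed_NN_proof}, as made explicit in \eqref{eq:NN_single_summand}. Each such sub-network maps $\RR^2\to\RR^5\to\RR^3\to\RR$, so its internal affine map $T^{(1)}_3\circ[T^{(2)}_1;T^{(2)}_2]:\RR^5\to\RR^3$ contributes at most $3\cdot 5+3=18$ parameters, and similarly the first layer contributes at most $5\cdot 2+5=15$ and the final scalar map contributes at most $3+1=4$. Thus each sub-network uses at most a constant number of parameters, independent of $p$.

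Second, I would spell out how the stacking preserves sparsity. The input layer stacks $9p$ affine maps $\RR^2\to\RR^5$, contributing at most $9p\cdot 15=O(p)$ non-zero parameters (this matrix may be dense, but its size is $90p\times 1$ in entries, still $O(p)$). The middle affine map stacks $9p$ maps $\RR^5\to\RR^3$, which yields a block-diagonal matrix with $9p$ blocks of size $3\times 5$; its non-zero entries total at most $9p\cdot 18=O(p)$. The output layer is obtained by composing the last stacked affine map with a row-sum into $\RR$, giving $9p\cdot 4=O(p)$ non-zero parameters. Summing these three contributions gives $O(p)$ non-zero parameters in total.

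There is essentially no obstacle here beyond careful bookkeeping: the argument is purely combinatorial, inheriting the block structure directly from the fact that \autoref{lem:CPA_reduction} expresses $f$ as a sum of $9p$ summands each depending on only three affine functions, so no cross-connections between the sub-networks are ever introduced. The only minor care needed is to confirm that the biases (not only the weights) also sum to $O(p)$, which is immediate since each sub-network contributes $O(1)$ biases.
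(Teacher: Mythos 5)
Your proof is correct and follows essentially the same route as the paper's: you revisit the explicit stacking construction from the proof of \autoref{thm:NN_representation}, observe that the stacked affine maps form block-diagonal (or block-stacked) matrices with $9p$ blocks of constant size per layer, and multiply the constant per-block parameter counts ($15$, $18$, $4$) by $9p$ to conclude $O(p)$ non-zero parameters including biases.
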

\begin{proof}
	For $i\in[w]$, let $T_i:\,\RR^{n_i}\to \RR^{m_i}$ be affine functions, with $T_i(x)=A_ix+b_i$ for some $A_i\in\RR^{m_i\times n_i}$ and $b_i\in\RR^{m_i}$.
	Then, we have
	\begin{equation*}
		\cvb{T_1\\\vdots\\T_w}(x) = \begin{pmatrix}A_1 & 0 & 0\\ 0 & \ddots & 0 \\ 0 & 0 & A_w\end{pmatrix}x + \cv{b_1\\\vdots\\ b_w},\quad\forall x\in\RR^{n_1+\dots+n_w}.
	\end{equation*}
	Thus, since \autoref{thm:NN_representation} stacks $9p$ affine functions of the same form given by \eqref{eq:NN_single_summand} for each layer, the total number of non-zero parameters is $9p$ times the number of parameters required in \eqref{eq:NN_single_summand}.
	We have $A_i\in\RR^{5\times 2}$ in the first hidden layer, $A_i\in\RR^{3\times 5}$ in the second hidden layer, and $A_i\in\RR^{1\times 3}$ in the output layer.
	Therefore, we need at most
	\begin{equation*}
		9p\cdot((5\cdot 2+5)+(3\cdot 5+3)+(1\cdot 3 +1))=O(p)
	\end{equation*}
	non-zero parameters.
\end{proof}

	\section{Discussion}\label{sec:Discussion}

\subsection{Comparison to the Result of \citet{Koutschan2023}}
\citet{Koutschan2023} showed the following result:
\begin{theorem}%
	\label{thm:NN_representation_Koutschan}
	Any CPA function $f:\RR^d\to\RR$ with $n$ affine components can be represented by a $\relu$ neural network of depth $\ceil{\log_2(d+1)}+1$ with $O(n^{d+1})$ neurons per layer.
\end{theorem}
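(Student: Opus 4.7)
The plan is to combine a dimensional decomposition of CPA functions into a sum of small-arity maxima with a divide-and-conquer $\relu$ implementation of the max operation.

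First, I would prove that any CPA function $f:\RR^d\to\RR$ with $n$ affine components admits a representation of the form
\begin{equation*}
    f(x) = \sum_{k=1}^{N} \sigma_k \max\bigl(g_{k,0}(x),\, g_{k,1}(x),\,\dots,\, g_{k,d}(x)\bigr),
\end{equation*}
where each $g_{k,i}$ is one of the $n$ affine components of $f$, $\sigma_k\in\set{-1,+1}$, and $N = O(n^{d+1})$. The essential dimensional input is that in $\RR^d$ every polyhedral cell of the arrangement induced by the pieces of $f$ can be described using at most $d+1$ active inequalities (a Carathéodory-type fact for polyhedra), which suggests restricting attention to $(d+1)$-subsets of the affine components. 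A natural route is to pass through a simplicial refinement of the pieces: triangulate the arrangement of the $n$ hyperplanes underlying $f$, and on each simplex rewrite the local contribution of $f$ as a signed $\max$ of the $d+1$ affine components attached to its facets. Summing over simplices and collecting terms yields the required form, with $N$ bounded by $\binom{n}{d+1}=O(n^{d+1})$, which is the order of the combinatorial complexity of an arrangement of $n$ hyperplanes in $\RR^d$.

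Next, I would realise each summand as a $\relu$ subnetwork. The identity $\max(a,b)=\rho(a-b)+b$ computes the maximum of two affine functions in a single hidden layer of constant width. Recursing in a balanced binary tree, $\max$ of $d+1$ affine arguments is computed in $\ceil{\log_2(d+1)}$ hidden layers, each of width $O(d)$ within the subnetwork. Placing the $N$ subnetworks in parallel and stacking their affine maps layer by layer, exactly as in the proof of \autoref{thm:NN_representation}, yields a network of $\ceil{\log_2(d+1)}$ hidden layers with width $O(N\cdot d)=O(n^{d+1})$ per layer, i.e.\ depth $\ceil{\log_2(d+1)}+1$ in the paper's convention. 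The outer linear combination $\sum_k \sigma_k(\blank)$ is absorbed into the output affine map and costs no additional depth.

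The principal obstacle is the decomposition itself, since the classical Tarela-Martinez max-min representation and the Arora et al.\ difference-of-convex representation do not directly deliver a \emph{sum} of small-arity maxima with $O(n^{d+1})$ terms. A concrete route would be to adapt the indicator-telescoping argument underlying \autoref{lem:CPA_Decomp} and \autoref{lem:InsideOutside} to higher dimensions via a conic or simplicial decomposition of polyhedra. Shephard's conic decomposition of convex polytopes, already referenced in the paper, provides a template, and the counting bound $O(n^{d+1})$ is then inherited from the number of $(d+1)$-faces in a hyperplane arrangement of $n$ hyperplanes in $\RR^d$.
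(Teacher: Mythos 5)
This theorem is not proved in the paper at all --- it is a cited result (\citet{Koutschan2023}) that the paper states in Section 6.1 for the purpose of comparison. The paper explicitly remarks that Koutschan's proof ``follows a more algebraic approach, leading to dependence on the number of affine components rather than on the number of pieces.'' So there is no in-paper proof to compare against; your proposal has to stand on its own.

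As you yourself flag, the decomposition
\begin{equation*}
    f = \sum_{k=1}^{N}\sigma_k\max\bigl(g_{k,0},\dots,g_{k,d}\bigr),\qquad N=O(n^{d+1}),
\end{equation*}
is the whole content of the theorem, and your sketch of it does not hold up. There are not $n$ hyperplanes ``underlying'' $f$: the walls separating pieces are of the form $\{f_i=f_j\}$, of which there are up to $\binom{n}{2}$. An arrangement of $m$ hyperplanes in $\RR^d$ has $\Theta(m^d)$ faces, so any bottom-up triangulation of the arrangement of separating walls produces on the order of $n^{2d}$ simplices, which for $d\geq 2$ exceeds $O(n^{d+1})$. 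The phrase ``$(d+1)$-faces in a hyperplane arrangement of $n$ hyperplanes in $\RR^d$'' is also a category error --- such an arrangement has faces of dimension at most $d$. The paper's own discussion in Section 6.3 explains precisely why a face-based/triangulation-based extension of \autoref{lem:CPA_Decomp} to $\RR^d$ does not recover the $O(n^{d+1})$ bound: the number of vertices of a subdivision with $p$ regions can be $\Theta(p^{\lfloor(d+1)/2\rfloor})$ (cyclic polytope Schlegel diagrams), and since $p$ itself can be as large as $\Theta(n^{d+1})$, the resulting vertex count is far worse than $n^{d+1}$. In other words, the geometric conic/telescoping route that works cleanly in $\RR^2$ does not cheaply transfer, and the known proof of \autoref{thm:NN_representation_Koutschan} goes through algebraic manipulations of max-plus/tropical expressions rather than a simplicial refinement. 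The second half of your proposal --- computing a $(d+1)$-ary max by a balanced binary tree of $\relu$ units in $\ceil{\log_2(d+1)}$ hidden layers of width $O(d)$ each, stacking subnetworks in parallel, and absorbing the signed sum into the output map --- is correct and standard, but it is the easy half; the decomposition remains unproven and the counting argument offered for it is wrong.
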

Compared to \autoref{thm:NN_representation}, their proof follows a more algebraic approach, leading to dependence on the number of affine components rather than on the number of pieces.
For a CPA function $f:\RR^2\to\RR$ with $p$ pieces and $n$ affine components, \citet{zanotti2025pieces} showed that a constant $c\in\RR$ exists such that  
\begin{equation*}%
	n\leq p \leq c n^3.
\end{equation*}  
Thus, \autoref{thm:NN_representation} implies a neural network representation with width $O(n^3)$ in terms of affine components, which matches the result of \citet{Koutschan2023}.
Moreover, \citet{zanotti2025pieces} also showed that there exist $\beta,c>0$ such that, for every $n\in\NN$, there exists a $\CPA$ function $f:\RR^2\to\RR$ with at most $n$ affine components and at least $\beta \cdot n^{3-\frac{c}{\sqrt{\log_2(n)}}}$ pieces.
For such functions, the construction presented in this work does not provide a significant improvement over \autoref{thm:NN_representation_Koutschan}.
However, there also exist CPA functions with $p=n$, for which \autoref{thm:NN_representation} gives a neural network representation with width $O(n)$, improving upon the $O(n^3)$ bound of \autoref{thm:NN_representation_Koutschan}.

\subsection{Comparison with Convex Pieces}
Bounded polygons without holes can be triangulated without introducing additional vertices, ensuring that each edge is part of the triangulation \citep{ORourke1987ArtGallery}.
For unbounded polygons, each end can be closed by adding four vertices, as illustrated in \autoref{fig:closingEnd}.
The bounded part of the polygon can then be triangulated, while the unbounded part of each end can be subdivided into three convex polygons. 
This procedure produces a refinement of any set of pieces into convex pieces.

For $f\in\CPA_p$, let $p$, $e$, and $v$ denote the numbers of pieces, edges, and vertices corresponding to the admissible set of pieces given by \autoref{lem:sparse_P_V_and_E}.
Applying the aforementioned procedure results in a subdivision with $p_t$ convex pieces (primarily triangles), $e_t$ edges, and $v_t$ vertices.
Since each end is bounded by at least one edge, and any edge bounds at most four ends, the pieces of $f$ collectively have at most $4e$ ends.
Consequently, the convex subdivision satisfies $v_t\leq v+4\cdot 4e$.
By \autoref{lem:sparse_P_V_and_E}, $v\leq 2p$ and $e\leq 3p$.
Using Euler's formula, we find $p_t\leq 2v_t$, implying that there exists $c\in\NN$ such that $p_t\leq c p$, where $c$ is independent of $f$.
Therefore, any $f\in\CPA_p$ can be represented with an admissible set of at most $c\cdot p$ convex pieces.

Starting with such a refinement into convex pieces would also yield results with linear complexity in $p$. 
While this approach simplifies some proofs, it results in worse constants. 
But more importantly, it sacrifices insights into the natural notion of pieces introduced here, which has not been previously considered in the literature. 
For instance, the precise form of the conic decomposition of non-convex polygons (\autoref{lem:InsideOutside}) may be of independent interest.

\subsection{Extension to Higher Dimensions}
In the case of $\RR^2$ studied in this paper, the pieces of a CPA function form a subdivision of the plane consisting of vertices, edges, and regions.
More generally, we can think of these as 0-, 1-, and 2-dimensional \emph{faces}, respectively. In higher dimensions $d>2$, the pieces of a CPA function $f$ similarly induce a subdivision of the input space, consisting of faces of all dimensions $0,...,d$.

Loosely speaking, analogously to the functions $f^v$ and $f^e$ in \autoref{lem:CPA_Decomp}, one can introduce a face-function for each face $F$ of the subdivision as a CPA function $f^F$ whose affine components agree on $F$ and which satisfies $f^F=f$ locally around $F$. 
A higher-dimensional analogue of the presented approach then involves decomposing $f$ into a linear combination of all face-functions of dimension less than $d$.

\citet[Proposition 18 together with Proposition 6]{Tran2024MinimalTRF} proved that, up to an affine function, such a decomposition exists if $f$ is convex, using the formalism of tropical hypersurfaces. 
By refining the subdivision to consist of convex faces, this result can be extended to the non-convex case.

To construct a shallow neural network representation, the next step is to represent each face-function as a shallow network and then stack them into a single network implementing the linear combination.
For a fixed dimension $d$, there exist subdivisions with $p$ regions such that the number of vertices grows as $\Theta(p^{\floor{\frac{d+1}{2}}})$. 
These subdivisions arise as Schlegel diagrams of the polars of cyclic ($d+1$)-polytopes. 
Schlegel diagrams are regular, meaning that there exist convex CPA functions whose pieces realise these diagrams, implying the existence of CPA functions with $p$ pieces and $\Theta(p^{\floor{\frac{d+1}{2}}})$ vertices.

It seems plausible that the total width required for the subnetworks representing vertex-functions cannot asymptotically be smaller than the number of vertices.
As a result, the best possible upper bound on network width that a higher-dimensional analogue of the presented approach could achieve is $O(p^{\floor{\frac{d+1}{2}}})$. 
If $p$ is equal to the number of affine components $n$, this would represent an improvement over the $O(n^{d+1})$ bound on network width established by \citet{Koutschan2023}.
However, \citet{zanotti2025pieces} presents a family of CPA functions for which $p$ behaves roughly like $n^{d+1}$. In these cases, the presented approach does not lead to any improvement.

Nevertheless, the presented ideas may be useful for representing specific subclasses of higher-dimensional CPA functions where the underlying subdivisions have manageable combinatorics.
In particular, a geometric approach can be advantageous when the number of faces is $O(p)$ and the number of pieces incident to each face is bounded by a constant.
In this case, each face-function requires only constant width, allowing the linear combination expressing the original function to be represented by a neural network of width $O(p)$.
Similar structural assumptions have already been employed by \citet{He2020FEM}, who consider shape regular finite element triangulations.

	\section*{Acknowledgments}
I would like to thank Henning Bruhn-Fujimoto for helpful discussions and for introducing me to the topic of this paper.

	\printbibliography
\end{document}